\documentclass{article}

% if you need to pass options to natbib, use, e.g.:
%     \PassOptionsToPackage{numbers, compress}{natbib}
% before loading neurips_2025

% ready for submission
%\usepackage{neurips_2025}

% to compile a preprint version, e.g., for submission to arXiv, add add the
% [preprint] option:
%\usepackage[preprint]{neurips_2025}

% to compile a camera-ready version, add the [final] option, e.g.:
%\usepackage[final]{neurips_2025}

%%% arXiv after camera-ready
\usepackage[final]{neurips_2025}

% to avoid loading the natbib package, add option nonatbib:
%    \usepackage[nonatbib]{neurips_2025}

\usepackage[utf8]{inputenc} % allow utf-8 input
\usepackage[T1]{fontenc}    % use 8-bit T1 fonts
\usepackage{hyperref}       % hyperlinks
\usepackage{url}            % simple URL typesetting
\usepackage{booktabs}       % professional-quality tables
\usepackage{amsfonts}       % blackboard math symbols
\usepackage{nicefrac}       % compact symbols for 1/2, etc.
\usepackage{microtype}      % microtypography
\usepackage{xcolor}         % colors

%%%%%
\usepackage{amssymb}            % Defines common symbols like \mathbb R
\usepackage{mathtools}          % Extends amsmath, providing common math tools
\usepackage{mathrsfs}           % Enables \mathscr, which can work in cases that \mathcal does not
%\mathtoolsset{showonlyrefs}     % Only number equations that are referenced (optional)
\usepackage{graphicx}           % For including images
\usepackage{subcaption}         % Allows for the use of subfigures and subcaptions
\usepackage[space]{grffile}     % For spaces in image names
\usepackage{tabularx}
\usepackage{adjustbox}
\usepackage{tikz}
\usepackage{tcolorbox}
\usepackage{ragged2e}

\usepackage{siunitx} % in your preamble
\sisetup{
  detect-all,
  table-align-text-pre=false,
  separate-uncertainty=true,
  table-space-text-post = \,,
  table-number-alignment = center,
  table-format = 2.1(1), % aligns numbers like 12.3 ± 4.5
}

%%%%%
% !TEX root = main.tex
%%%%%%%%%%%%%%%%%%%%%%%%%%%%%%%%%%%
%%%%%%%%%%%%%%%%%%%%%%%%%%%%%%%%%%% From NeurIPS'23 template
\usepackage[utf8]{inputenc} % allow utf-8 input
\usepackage[T1]{fontenc}    % use 8-bit T1 fonts
\usepackage{url}            % simple URL typesetting
\usepackage{booktabs}       % professional-quality tables
\usepackage{amsfonts}       % blackboard math symbols
\usepackage{nicefrac}       % compact symbols for 1/2, etc.
\usepackage{microtype}      % microtypography
%\usepackage{xcolor}         % colors

%%%%%%%%%%%%%%%%%%%%%%%%%%%%%%%%%%%

%%%%%%%%%%%%%%%%%%%%%%%%%%%%%%%%%%%
\usepackage{caption}
\usepackage{wrapfig}
\usepackage{etoolbox}
\usepackage{verbatim}
\usepackage{amsmath}
%\usepackage{graphicx}

%%%%%%%%%%%%%%%%%%%%%%%%%%%%%%%%%%%
\usepackage[dvipsnames]{xcolor}
\usepackage{hyperref}
\hypersetup{ %
	pdfauthor={},
	pdftitle={},
	pdfsubject={},
	%pdfcopyright={},
    pdflang={en},
	% colorlinks=false,
	% hidelinks,
	% pdfpagelabels,
	%bookmarks,
	bookmarksnumbered,
	%backref=page,
	pageanchor=true,
	plainpages=false,
	colorlinks=true,%
	linktocpage=true,
	citecolor=MidnightBlue,%
	filecolor=BrickRed,%
	linkcolor=NavyBlue,%
	urlcolor=NavyBlue %ForestGreen
}

%%%%%%%%%%%%%%%%%%%%%%%%%%%%%%%%%%%

\usepackage{algorithm}
\usepackage[noend]{algorithmic}
\usepackage{amsthm}

\usepackage{multirow} % multirow headers
\usepackage{adjustbox} % resize table if too wide
\usepackage{colortbl} % row colors

\newtheorem{theorem}{Theorem}

\DeclareMathOperator*{\argmin}{arg\,min}

%\newcommand{\R}{\mathbb{R}}
%\newcommand{\E}{\mathbb{E}}

%\DeclareMathOperator*{\argmin}{arg\,min}
%\DeclareMathOperator*{\argmax}{arg\,max}

%Kamal - added
\newcommand{\bcc}[1]{\left\{{#1}\right\}}
\newcommand{\brr}[1]{\left({#1}\right)}
\newcommand{\bss}[1]{\left[{#1}\right]}

\newcommand{\Expect}[1]{\mathbb{E}\bss{{#1}}}

\newcommand{\abs}[1]{\left\vert#1\right\vert}

\newcommand{\Prob}[1]{\mathbb{P}\bss{{#1}}}

%%%%%%%%%%%%%%%%%%%%%% Macros for environments and algoritms
\newcommand{\envTree}{\textsc{BinaryTree}{}}
\newcommand{\envSinglePuddle}{\textsc{PuddleGrid-Single}{}}
\newcommand{\envMultiPuddle}{\textsc{PuddleGrid-Multi}{}}

\newcommand{\algTarget}{\textsc{Target}{}}
\newcommand{\algRandom}{\textsc{iid}{}}
\newcommand{\algNoConstraint}{\textsc{Unconstraint}{}}
\newcommand{\algCurriculum}{\textsc{CuRLTraC}{}}
\newcommand{\algGlobal}{\textsc{CuRLTraC-Global}{}}
\newcommand{\algExp}{\textsc{ExpSchedule}{}}
\newcommand{\algExphalf}{\textsc{ExpSchedule}$_{T=T^*/2}${}}
\newcommand{\algExpfull}{\textsc{ExpSchedule}$_{T=T^*}${}}
\newcommand{\algExpfive}{\textsc{ExpSchedule}$_{T=5000}${}}
\newcommand{\algExpfifty}{\textsc{ExpSchedule}$_{T=50000}${}}

\newcommand{\algContext}{\textsc{ProCuRL-Target}{}}

\newcommand{\algSoftRL}{\textsc{Soft-RL}{}}
\newcommand{\algSoftRLStrong}{\textsc{Soft-RL}$_{\alpha_{\mathrm{reg}}=0.9}$}
\newcommand{\algSoftRLWeak}{\textsc{Soft-RL}$_{\alpha_{\mathrm{reg}}=0.4}$}

\newcommand{\OnlyAnsPrompt}{\textsc{AnsOnlyPrompt}{}}

%%%%%%%
\newtoggle{MainSuppContent}
\settoggle{MainSuppContent}{true}

\newtoggle{MainContentOnly}
\settoggle{MainContentOnly}{false}

\newtoggle{SuppContentOnly}
\settoggle{SuppContentOnly}{false}
%%%%%%
\title{Curriculum Design for Trajectory-Constrained Agent: Compressing Chain-of-Thought Tokens in LLMs}

% The \author macro works with any number of authors. There are two commands
% used to separate the names and addresses of multiple authors: \And and \AND.
%
% Using \And between authors leaves it to LaTeX to determine where to break the
% lines. Using \AND forces a line break at that point. So, if LaTeX puts 3 of 4
% authors names on the first line, and the last on the second line, try using
% \AND instead of \And before the third author name.

\author{%
  Georgios Tzannetos \\
  % \thanks{Use footnote for providing further information
  %   about author (webpage, alternative address)---\emph{not} for acknowledging
  %   funding agencies.} \\
  MPI-SWS\\
  % Cranberry-Lemon University\\
  % Pittsburgh, PA 15213 \\
  \texttt{gtzannet@mpi-sws.org} \\
  % examples of more authors
  \And
  Parameswaran Kamalaruban \\
  Featurespace Innovation Lab, Visa \\
  % Address \\
  \texttt{kaparame@visa.com} \\
  \And
  Adish Singla \\
  MPI-SWS \\
  % Address \\
  \texttt{adishs@mpi-sws.org} \\ 
  % \And
  % Coauthor \\
  % Affiliation \\
  % Address \\
  % \texttt{email} \\
  % \And
  % Coauthor \\
  % Affiliation \\
  % Address \\
  % \texttt{email} \\
}

\begin{document}
\maketitle

%%%%%%%%%%%%%%%%%%%%%%%%%%%%%%%%%%%
%%%%%%%%%%%%%%%%%%%%%%%%%%%%%%%%%%%
\iftoggle{MainSuppContent}{
    % !TEX root = main.tex
%%%%%%%%%%%%%%%%%%%%%%%%%%%%%%%%%%%%%
%%%%%%%%%%%%%%%%%%%%%%%%%%%%%%%%%%%%%
\begin{abstract}
Training agents to operate under strict constraints during deployment, such as limited resource budgets or stringent safety requirements, presents significant challenges, especially when these constraints render the task complex. In this work, we propose a curriculum learning strategy that gradually tightens constraints during training, enabling the agent to incrementally master the deployment requirements. Inspired by self-paced learning techniques in unconstrained reinforcement learning (RL), our approach facilitates a smoother transition to challenging environments by initially training on simplified versions of the constraints and progressively introducing the full deployment conditions. We provide a theoretical analysis using an RL agent in a binary-tree Markov Decision Process (MDP) to demonstrate that our curriculum strategy can accelerate training relative to a baseline approach that imposes the trajectory constraints from the outset. Moreover, we empirically validate the effectiveness and generality of our method across both RL and large language model (LLM) agents in diverse settings, including a binary-tree MDP, a multi-task navigation domain, and a math reasoning task with two benchmarks. These results highlight the potential of curriculum design in enhancing the efficiency and performance of agents operating under complex trajectory constraints during deployment. Moreover, when applied to LLMs, our strategy enables compression of output chain-of-thought tokens, achieving a substantial inference speedup on consumer hardware, demonstrating its effectiveness for resource-constrained deployment.
\end{abstract}

    % !TEX root =  main.tex
%%%%%%%%%%%%%%%%%%%%%%%%%%%%%%%%%%%%%
%%%%%%%%%%%%%%%%%%%%%%%%%%%%%%%%%%%%%
\section{Introduction} \label{sec:intro}

Recent advances in reinforcement learning (RL) have enabled agents to excel in diverse applications, including robotics, gaming, and large language models (LLMs)~\citep{mnih2015human,lillicrap2015continuous,levine2016end,silver2017mastering,pternea2024rl}. A common assumption in these domains is that the resources available during training, such as computational power and time, remain equally accessible during deployment. However, in many real-world scenarios, deployment-time constraints such as limited resource budgets or strict safety requirements significantly diverge from the training environment, posing unique challenges that standard RL methods often overlook. 

Consider a robot navigating from point A to B that is expected to reach its destination, while ensuring that travel time and energy consumption do not exceed predefined limits. Similarly, consider an LLM being used for answering a mathematical question, where a user requires that the response arrive at the correct answer while using no more than a specified number of chain-of-thought (CoT) tokens. Recent works have studied how to control the length of LLM responses at test-time via training objectives or prompting, leading to improved reasoning capabilities \citep{arora2025training, aggarwal2025l1, aytes2025sketch, han2024token}. We consider it from the prospective of designing a curriculum strategy that can enable LLMs to output responses more desirable than the initial response in terms of reduced inference time and computational cost, while preserving the answer's accuracy (Figure~\ref{fig:math_response}). Motivated by these examples, we consider the following research question: \emph{How can we effectively guide the training process when deployment-time constraints are known in advance, in order to achieve the best performance under those constraints?}

A natural approach is to impose deployment-time constraints during training. However, doing so exacerbates the sparse reward problem common in RL, since the agent only receives a reward upon both task completion and strict constraint adherence. This setup contrasts with standard constrained RL formulations~\citep{achiam2017constrained,tessler2019reward}, where the goal is to maximize the expected reward subject to a bounded expected cost, a relaxation that does not fully capture the strict nature of deployment requirements. Alternatively, curriculum learning methods developed for contextual RL~\citep{klink2020selfdeep,klink2022curriculum,tzannetos2024proximal} can treat the permissible cost budget as an input context and gradually shift towards the deployment budget. Yet, these methods incur significant computational overhead because they require extensive performance evaluations across the entire high-dimensional context space, an issue that becomes particularly problematic in domains such as LLMs, where each rollout can be costly.

\begin{figure}[t!]
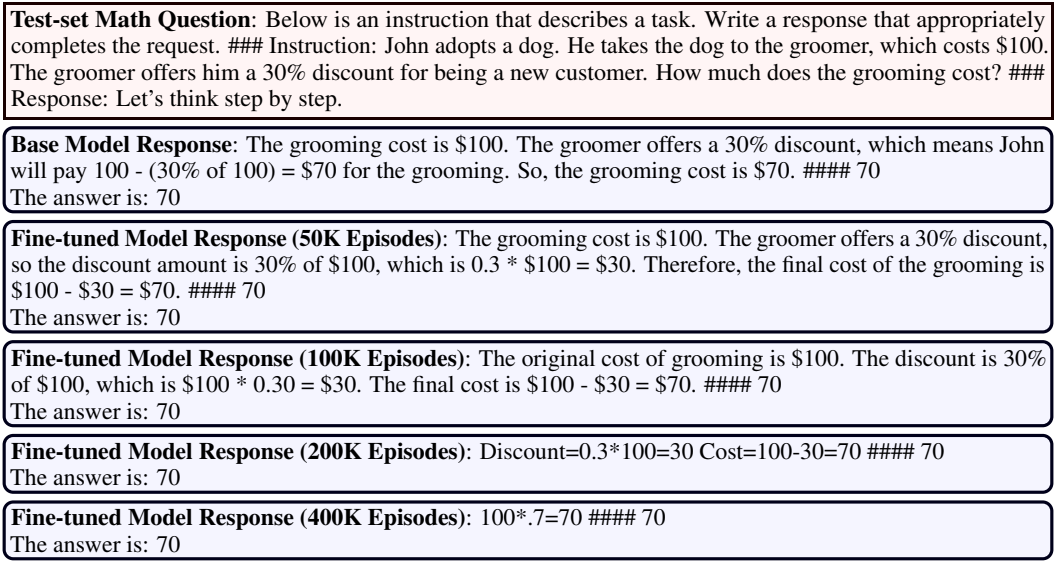

   \centering
    \small
      % Problem Statement
        \begin{tcolorbox}[colframe=red!10!black, colback=red!4, boxsep=1.0pt, sharpish corners = all,
        top=1pt,
        bottom=1pt,
        left=1pt,
        right=1pt,
        boxrule=1.2pt,
        before skip=0pt,
        after skip=1pt
        ]
            {\textbf{Test-set Math Question}}: Below is an instruction that describes a task. Write a response that appropriately completes the request. \#\#\# Instruction:
            John adopts a dog.  He takes the dog to the groomer, which costs \$100.  The groomer offers him a 30\% discount for being a new customer.  How much does the grooming cost?
 \#\#\# Response: Let's think step by step.
        \end{tcolorbox}
        % \vspace{3mm}
        % Initial Response
        \begin{tcolorbox}[colframe=blue!10!black, colback=blue!4,
        boxsep=1.0pt,
        top=1pt,
        bottom=1pt,
        left=1pt,
        right=1pt,
        boxrule=1.2pt,
        before skip=2.5pt,
        after skip=1pt]
            {\textbf{Base Model Response}}:
            The grooming cost is \$100.
            The groomer offers a 30\% discount, which means John will pay 100 - (30\% of 100) = \$70 for the grooming.
            So, the grooming cost is \$70. \#\#\#\# 70
            
            The answer is: 70
            \end{tcolorbox}    
        % \vspace{1.0mm}
        % Response After N Training Steps
        \begin{tcolorbox}[colframe=blue!10!black, colback=blue!4,
        boxsep=1.0pt,
        top=1pt,
        bottom=1pt,
        left=1pt,
        right=1pt,
        boxrule=1.2pt,
        before skip=2.5pt,
        after skip=1pt]
            {\textbf{Fine-tuned Model Response (50K Episodes)}}:
            The grooming cost is \$100.
            The groomer offers a 30\% discount, so the discount amount is 30\% of \$100, which is 0.3 * \$100 = \$30.
            Therefore, the final cost of the grooming is \$100 - \$30 = \$70. \#\#\#\# 70
            
            The answer is: 70
        \end{tcolorbox}    
        % \vspace{4.0mm}
        % Response After M Training Steps
        \begin{tcolorbox}[colframe=blue!10!black, colback=blue!4,
        boxsep=1.0pt,
        top=1pt,
        bottom=1pt,
        left=1pt,
        right=1pt,
        boxrule=1.2pt,
        before skip=2.5pt,
        after skip=1pt]
            {\textbf{Fine-tuned Model Response (100K Episodes)}}: 
            The original cost of grooming is \$100.
            The discount is 30\% of \$100, which is \$100 * 0.30 = \$30.
            The final cost is \$100 - \$30 = \$70. \#\#\#\# 70

            The answer is: 70
        \end{tcolorbox}    
        % \vspace{1.0mm}
        % Response After M Training Steps
        \begin{tcolorbox}[colframe=blue!10!black, colback=blue!4,
        boxsep=1.0pt,
        top=1pt,
        bottom=1pt,
        left=1pt,
        right=1pt,
        boxrule=1.2pt,
        before skip=2.5pt,
        after skip=1pt]
            {\textbf{Fine-tuned Model Response (200K Episodes)}}: 
            Discount=0.3*100=30
            Cost=100-30=70 \#\#\#\# 70

            The answer is: 70
        \end{tcolorbox}    
        
        %  \begin{tcolorbox}[colframe=blue!10!black, colback=blue!4,
        % boxsep=1.0pt,
        % top=1pt,
        % bottom=1pt,
        % left=1pt,
        % right=1pt,
        % boxrule=1.2pt,
        % before skip=2.5pt,
        % after skip=1pt]
        %     {\textbf{Response after \normalsize 200K \small Episodes:}} 
        %     21/7
            
        %     The answer is: 3
        % \end{tcolorbox}  
        \begin{tcolorbox}[colframe=blue!10!black, colback=blue!4,
        boxsep=1.0pt,
        top=1pt,
        bottom=1pt,
        left=1pt,
        right=1pt,
        boxrule=1.2pt,
        before skip=2.5pt,
        after skip=9pt]
            {\textbf{Fine-tuned Model Response (400K Episodes)}}: 
            100*.7=70 \#\#\#\# 70
            
            The answer is: 70
        \end{tcolorbox}  

    %\vfill
    % \end{minipage}
    % \end{adjustbox}
    % }
    \caption{Inference on an unseen \textsc{GSM8K} test-set question using different model checkpoints obtained during fine-tuning. These responses showcase that the fine-tuned models progressively learn to generate shorter yet correct solutions by implicitly performing intermediate reasoning steps.}
    
    \label{fig:math_response}
\end{figure}

In this work, we propose a novel curriculum strategy that adaptively adjusts the permissible cost budget during training based on the agent’s current performance. This strategy starts with relaxed trajectory constraints and adaptively tightens them, facilitating a smoother transition to stringent deployment conditions. Our main results and contributions are as follows:
\begin{enumerate}
\item We introduce a computationally-efficient curriculum strategy tailored to deployment-time constraints in RL.
\item We provide a theoretical analysis on a binary-tree MDP to demonstrate that our adaptive curriculum strategy accelerates training relative to approaches that enforce strict deployment constraints from the beginning.
\item We validate our method’s effectiveness and versatility through comprehensive experiments with both RL and LLM agents across multiple domains. The results highlight its potential to enhance agent performance under real-world deployment constraints, demonstrated by compressing LLM's output chain-of-thought tokens in math reasoning.\footnote{Github: \url{https://github.com/machine-teaching-group/neurips2025-curriculum-llm-tokens}}
\end{enumerate}

\subsection{Related Work}
\label{subsec:detailed-comparison}

\textbf{Constrained Reinforcement Learning.} Constrained reinforcement learning (CRL) is typically formulated as a constrained Markov decision process (CMDP)~\citep{altman2021constrained}, where the goal is to maximize the expected cumulative reward while satisfying expectation-based cost constraints. Standard approaches include Lagrangian methods~\citep{tessler2019reward,chow2018risk}, which optimize a weighted combination of rewards and costs using a scalar multiplier, and trust-region methods~\citep{achiam2017constrained,yang2020projection,zhang2020first} that enforce constraint satisfaction during policy updates. In these formulations, the objective is generally given by
\[
\max_\pi ~ \mathbb{E}_\pi \bigg[ \sum_\tau \gamma^\tau \cdot R(s^{(\tau)},a^{(\tau)}) \bigg] \quad \text{subject to} \quad \mathbb{E}_\pi \bigg[ \sum_\tau \gamma^\tau \cdot C(s^{(\tau)},a^{(\tau)}) \bigg] \leq \alpha ,
\]
where $R$ is the reward function, $C$ is the cost function, and $\alpha$ is the cost threshold. In contrast, our formulation enforces strict, trajectory-level constraint satisfaction by considering the problem
\begin{equation}
\label{eq:constrained_objective}
%\[
\max_\pi ~ \mathbb{E}_\pi \bigg[ \sum_\tau \gamma^\tau \cdot R(s^{(\tau)},a^{(\tau)}) \cdot \mathbf{1}\bigg[ \sum_\tau \gamma^\tau \cdot C(s^{(\tau)},a^{(\tau)}) \leq \alpha \bigg] \bigg] , 
%\]
\end{equation}
where the indicator function ensures that rewards are only granted if the entire episode adheres to the cost constraint. This stricter requirement inherently induces a sparse reward setting, posing unique challenges that are not addressed by conventional CRL methods.

\textbf{Curriculum for Contextual Reinforcement Learning.} Curriculum learning has been widely used in contextual RL to gradually expose agents to increasingly challenging tasks~\citep{narvekar2020curriculum,portelas2021automatic}. Self-paced learning approaches~\citep{kumar2010self,jiang2015self}, such as \textsc{SPDL} \citep{klink2020selfdeep,klink2021probabilistic}, \textsc{SPaCE} \citep{eimer2021self}, and \textsc{CURROT} \citep{klink2022curriculum}, dynamically adjust the task distribution based on agent performance. Similarly, unsupervised environment design (UED) methods~\citep{dennis2020emergent,jiang2021prioritized,jiang2021replay,parker2022evolving} evolve the environment alongside the agent. ZPD-based strategies~\citep{vygotsky1978mind,florensa2017reverse,wohlke2020performance,tzannetos2023proximal}, such as \textsc{GoalGAN}~\cite{florensa2018automatic}, and \textsc{ProCuRL-Target}~\cite{tzannetos2024proximal},  select tasks of intermediate difficulty to maximize learning. While our setting could be framed as contextual RL, treating the cost budget as context and the deployment-time constraint as the target, this would require learning and evaluating a contextual policy across the full context space, which is impractical for domains like LLMs, where each (prompt, budget) pair demands separate rollouts. Instead, we train a non-contextual policy that adapts the cost budget per prompt, avoiding this computational overhead.

\textbf{Curriculum for Constrained Reinforcement Learning.} Another line of work has addressed curriculum strategies in constrained RL. A recent work \cite{koprulu2025safety} extends \textsc{CURROT}~\citep{klink2022curriculum} to generate safe curricula that begin with low-cost tasks to reduce safety violations, then transition to high-reward contexts, and eventually converge to the target. Similarly, \citep{turchetta2020safe} introduces a human-inspired teaching strategy, where an artificial teacher guides the agent through progressively challenging, safety-preserving stages. However, both approaches assume expectation-based constraints, making them less applicable to our setting, which requires strict trajectory-level constraint satisfaction. Our work extends curriculum learning to this more stringent setting, offering both theoretical insights and empirical validation.

\textbf{Efficient LLM Reasoning with Compressed Output.} Recent works have begun exploring ways to improve the efficiency of LLM reasoning by controlling the model's output token length through prompting or training strategies \citep{arora2025training, aggarwal2025l1, aytes2025sketch, han2024token}. Fine-tuning LLMs to reduce verbosity or to adhere to length constraints specified in prompts has been studied, though primarily in the context of general-purpose text generation~\citep{jie2023prompt, yuan2024following}. For reasoning tasks, \cite{aggarwal2025l1} proposed a simple RL method to train models to follow length constraints provided in the prompt, while \cite{arora2025training} modified the RL objective to penalize longer correct responses. Similarly, \cite{yi2025shorterbetter} and \cite{fatemi2025concise} employed RL-based techniques to reduce LLM output length. However, these approaches do not consider strict user-specified test-time constraints; instead, they explore tradeoffs between accuracy and output length. In contrast, our work introduces a curriculum strategy that enables RL fine-tuning with the original sparse reward while ensuring that the resulting LLM strictly adheres to user-defined test-time constraints, without requiring constraint-specific prompts.
    % !TEX root = main.tex
%%%%%%%%%%%%%%%%%%%%%%%%%%%%%%%%%%%%%%%%%%%%%%%%%%%%%%%%
%%%%%%%%%%%%%%%%%%%%%%%%%%%%%%%%%%%%%%%%%%%%%%%%%%%%%%%%
\section{Formal Setup}
\label{sec:formal-setup}

\textbf{Multi-task RL.} We consider a multi-task reinforcement learning (RL) setting with a task or context space $\mathcal{X}$. Each task $x \in \mathcal{X}$ corresponds to a learning environment modeled as a contextual Markov Decision Process (MDP)  $\mathcal{M}_x := \brr{\mathcal{S}, \mathcal{A}, \gamma, \mathcal{T}_x, R_x, C_x, \alpha^*_x, P^0_x}$, where the state space $\mathcal{S}$, action space $\mathcal{A}$, and discount factor $\gamma$ are shared across all tasks, while the transition dynamics $\mathcal{T}_x: \mathcal{S} \times \mathcal{S} \times \mathcal{A} \rightarrow \bss{0,1}$, reward function $R_x: \mathcal{S} \times \mathcal{A} \rightarrow \bss{0,1}$, cost function $C_x: \mathcal{S} \times \mathcal{A} \rightarrow \bss{0,1}$, permissible cost budget $\alpha^*_x$, and initial state distribution $P^0_x: \mathcal{S} \rightarrow \bss{0,1}$ are task-specific components~\cite{hallak2015contextual,modi2018markov}. The collection of all environments is represented as $\mathcal{M} = \bcc{\mathcal{M}_x: x \in \mathcal{X}}$. 

\textbf{RL agent and performance evaluation.} We consider an RL agent operating in an environment $\mathcal{M}_x \in \mathcal{M}$ using a contextual policy $\pi: \mathcal{S} \times \mathcal{X} \times \mathcal{A} \rightarrow \bss{0,1}$, which maps a state and task context to a probability distribution over actions. Given a task $x \in \mathcal{X}$, the agent attempts the task via a trajectory rollout obtained by executing its policy $\pi$ in the MDP $\mathcal{M}_x$. The trajectory rollout is denoted as $\xi = \bcc{(s^{(\tau)},a^{(\tau)})}_{\tau = 0,1,\dots}$, where $s^{(0)} \sim P^0_x$. Let $\Xi$ denote the space of all possible trajectories. To evaluate the agent's performance, we define a trajectory-level reward function $\overline{J}_x: \Xi \rightarrow \bss{0,1}$ based on $R_x$, $C_x$, and the cost budget $\alpha^*_x$:
\[
\overline{J}_x (\xi) ~:=~ \bigg\{ \sum_\tau \gamma^\tau \cdot R_x(s^{(\tau)},a^{(\tau)}) \bigg\} \cdot \mathbf{1}\bigg[ \sum_\tau \gamma^\tau \cdot C_x(s^{(\tau)},a^{(\tau)}) \leq \alpha^*_x \bigg] .
\]
The agent's performance on task $x$ is then measured by the value function $V^\pi (x; \overline{J}_x) := \mathbb{E}_{\xi \sim \pi, \mathcal{M}_x}\bss{\overline{J}_x(\xi)}$. Finally, the uniform performance of the agent across all tasks in $\mathcal{X}$ is given by $\overline{V}^\pi := \mathbb{E}_{x \sim \textnormal{Uniform}\brr{\mathcal{X}}} \bss{V^\pi (x; \overline{J}_x)}$.

\textbf{Training process of the RL agent.} During training, the agent employs a student component, responsible for policy updates, and a teacher component, which guides the student’s learning process to find a policy that performs uniformly well across all tasks in $\mathcal{X}$, i.e., $\max_\pi V^\pi$. Training occurs in discrete steps indexed by $t=1, 2, \ldots$, as formally described in Algorithm~\ref{alg:interaction}. 
At each step $t$, the environment \emph{randomly} selects a task $x_t$ along with its deployment-time cost budget $\alpha^*_{x_t}$. However, the teacher component modifies this budget, replacing it with a training-time cost budget $\alpha_t$. This effectively transforms the original MDP $\mathcal{M}_{x_t}$ into a new MDP $\widehat{\mathcal{M}}_{x_t} := \brr{\mathcal{S}, \mathcal{A}, \gamma, \mathcal{T}_{x_t}, R_{x_t}, C_{x_t}, \alpha_t, P^0_{x_t}}$, thereby shaping the original trajectory-level reward function $\overline{J}_{x_t} = J^{\alpha^*_{x_t}}_{x_t}$ to obtain a new reward function $\widehat{J}_{x_t} = J^{\alpha_t}_{x_t}$, where:  
\[
J^\alpha_x (\xi) ~:=~ \bigg\{ \sum_\tau \gamma^\tau \cdot R_x(s^{(\tau)},a^{(\tau)}) \bigg\} \cdot \mathbf{1}\bigg[ \sum_\tau \gamma^\tau \cdot C_x(s^{(\tau)},a^{(\tau)}) \leq \alpha \bigg] .
\]
The student component then attempts task $x_t$ by executing the policy $\pi_t$ for $K$ rollouts, denoted as $\bcc{\xi_i}_{i=1}^K$, within the modified MDP $\widehat{\mathcal{M}}_{x_t}$. After collecting these rollouts, the student component updates the policy to $\pi_{t+1}$ based on the current policy $\pi_t$, the selected task $x_t$, the teacher-shaped reward function $\widehat{J}_{x_t}$, and the set of rollouts $\bcc{\xi_i}_{i=1}^K$. Formally, the policy update is given by: $\pi_{t+1} \gets L (\pi_t, x_t, \widehat{J}_{x_t}, \bcc{\xi_i}_{i=1}^K)$, where $L$ is a learning algorithm.
Let $\pi_{\textnormal{end}}$ denote the agent's final policy at the end of training. The \emph{training objective} is to ensure that the performance of the policy $\pi_{\textnormal{end}}$ is $\epsilon$-near-optimal, i.e., $(\max_\pi \overline{V}^\pi - \overline{V}^{\pi_{\textnormal{end}}}) \leq \epsilon$. The primary objective of this work is to design a teacher component that achieves this training objective efficiently, both computationally and in terms of sample complexity.

\begin{algorithm*}[t]
    \caption{Training RL Agents with Deployment-time Constraints}
    \begin{algorithmic}[1]
        \STATE \textbf{Input:} RL agent's initial policy $\pi_{1}$
        \FOR{$t = 1,2,\dots$}
            \STATE Environment: \emph{randomly} picks a task $x_t \in \mathcal{X}$.\label{alg:env-step}  
            \STATE Teacher component: picks a training-time permissible cost budget $\alpha_t$.\label{alg:teacher-step}
            \STATE Student component: attempts the task $x_t$ via $K$ rollouts $\bcc{\xi_i}_{i=1}^K$ generated using the policy $\pi_t$ in the modified MDP $\widehat{\mathcal{M}}_{x_t}$.
            \STATE Student component: updates the policy to $\pi_{t+1}$ using the rollouts $\bcc{\xi_i}_{i=1}^K$ and the training-time reward function $\widehat{J}_{x_t}$.
        \ENDFOR{}
        \STATE \textbf{Output:} RL agent's final policy $\pi_{\textnormal{end}} \gets \pi_{t+1}$.
    \end{algorithmic}
    \label{alg:interaction}
\end{algorithm*}

    % !TEX root = main.tex
%%%%%%%%%%%%%%%%%%%%%%%%%%%%%%%%%%%%%%%%%%%%%%%%%%%%%%%%
%%%%%%%%%%%%%%%%%%%%%%%%%%%%%%%%%%%%%%%%%%%%%%%%%%%%%%%%
\section{Our Curriculum Strategy}
\label{sec:prox-corl-curr}

In Section~\ref{subsec:curr}, we propose a curriculum strategy for selecting a permissible cost budget $\alpha_t$ (Algorithm~\ref{alg:interaction} at Line~\ref{alg:teacher-step}). In Section~\ref{subsec:curr-theory}, we provide a theoretical analysis in a binary-tree environment, demonstrating that the proposed strategy accelerates the agent's training process.

\subsection{Curriculum Strategy} 
\label{subsec:curr}

First, we discuss the challenges of using deployment-time cost budgets during training, i.e., selecting $\alpha_t = \alpha^*_{x_t}$ in Line~\ref{alg:teacher-step} of Algorithm~\ref{alg:interaction}. When the reward function $R_x$ is goal-oriented (i.e., $R_x(s, a) = 1$ for all $a \in \mathcal{A}$ and $s \in \mathcal{G}_x$, and $R_x(s, a) = 0$ otherwise for some goal space $\mathcal{G}_x$), training with the deployment-time cost budget $\alpha^*_x$ can be highly ineffective, especially when $\alpha^*_x$ is small. In such cases, obtaining successful rollouts, i.e., trajectories $\xi$ where $\overline{J}_x(\xi) > 0$, using the randomly initialized policy $\pi_1$ is extremely difficult. As a result, directly using the original deployment-time budget $\alpha^*_x$ (or the corresponding reward function $\overline{J}_x$) during training may fail to provide a meaningful learning signal for policy updates, making the update rule $\pi_{t+1} \gets L(\pi_t, x_t, \overline{J}_{x_t}, \bcc{\xi_i}_{i=1}^K)$ ineffective.
%or even entirely uninformative.

To address these challenges, we introduce a curriculum strategy that dynamically selects a permissible cost budget $\alpha_t$ based on the agent's current performance. At a high level, the intuition behind our approach is as follows: at each step $t$, given a task $x_t$, we shape the trajectory-level reward function $\overline{J}_{x_t}$ to obtain a new reward function $\widehat{J}_{x_t}$, which remains close to $\overline{J}_{x_t}$ while ensuring that the performance of the current policy $\pi_t$ on task $x_t$ is above a certain threshold. This naturally leads to a curriculum strategy for selecting $\alpha_t$.

\textbf{Formalization of the curriculum strategy.} At each step $t$, given a task $x_t$, the teacher component picks $\alpha_t$ by solving the following optimization problem for a given performance threshold $\beta > 0$:
\begin{equation}
\label{eq:curriculum-optimization}
\alpha_t ~\gets~ \argmin_{\alpha \in \bss{0,\frac{1}{1-\gamma}}} \brr{\alpha - \alpha^*_{x_t}}^2 \quad \text{subject to} \quad V^{\pi_t}(x_t; J^\alpha_{x_t}) \geq \min \{\beta, V^{\pi_t}(x_t; J^{\frac{1}{1-\gamma}}_{x_t})\} , 
\end{equation}
where $V^{\pi_t}(x_t; J^\alpha_{x_t}) = \mathbb{E}_{\xi \sim \pi_t, \mathcal{M}_{x_t}}\bss{J^\alpha_{x_t}(\xi)}$ represents the value of policy $\pi_t$ on task $x_t$ under the reward function $J^\alpha_{x_t}$. Since $V^{\pi_t}(x_t; J^{\frac{1}{1-\gamma}}_{x_t})$ is the maximum achievable value due to the monotonically non-decreasing property of $V^{\pi}(x; J^\alpha_{x})$ with respect to $\alpha$, the optimization problem in Eq.~\eqref{eq:curriculum-optimization} always has at least one feasible solution: $\alpha = \frac{1}{1-\gamma}$. Our curriculum strategy adaptively selects $\alpha_t$ such that: (1) $\alpha_t$ remains as close as possible to the target permissible cost budget $\alpha^*_{x_t}$, and (2) the resulting reward function $J^{\alpha_t}_{x_t}$ provides sufficient learning signal for policy updates, as enforced by the constraint $V^{\pi_t}(x_t; J^{\alpha_t}_{x_t}) \geq \min \{\beta, V^{\pi_t}(x_t; J^{\frac{1}{1-\gamma}}_{x_t})\}$. Notably, we identify two particular cases of our strategy. First, when $\beta = 0$, the constraint is trivially satisfied for $\alpha = \alpha^*_{x_t}$, leading to $\alpha_t = \alpha^*_{x_t}$. This corresponds to the target curriculum, where the teacher directly picks the target $\alpha^*_{x_t}$. Second, when the maximum achievable value for the unconstrained problem $V^{\pi_t}(x_t; J^{\frac{1}{1-\gamma}}_{x_t})$ is zero, the curriculum selects the largest possible target parameter, aligning with the unconstrained curriculum.

\textbf{Practical implementation.} For any task $x$ and trajectory $\xi$, the reward function $J_x^\alpha(\xi)$ is monotonically non-decreasing in $\alpha$. Consequently, for any policy $\pi$, the value function $V^\pi(x; J_x^\alpha)$ is also monotonically non-decreasing with respect to $\alpha$. This property enables efficient solving of the optimization problem in Eq.~\eqref{eq:curriculum-optimization} using binary search. The full procedure is presented in Algorithm~\ref{alg:curriculum}.

\begin{algorithm*}
    \caption{Teacher Component: Curriculum Strategy for Picking $\alpha_t$ (Line~\ref{alg:teacher-step} of Algorithm~\ref{alg:interaction})}
    \begin{algorithmic}[1]
        \STATE \textbf{Input:} current policy $\pi_{t}$, current task $x_t$, and performance threshold $\beta$.
        \STATE Generate a set of rollouts $\Xi = \bcc{\xi_i}_{i=1}^N$ using the policy $\pi_t$ in $\mathcal{M}_{x_t}$.
        \STATE Set $\beta \gets \min \bcc{\beta, \frac{1}{N}\sum_{i=1}^N J^{\frac{1}{1-\gamma}}_{x_t}(\xi_i)}$. \label{alg:beta-set-step}
        \STATE Binary search the smallest $\alpha_t \in \bss{\alpha^*_{x_t}, \frac{1}{1-\gamma}}$ such that $\frac{1}{N}\sum_{i=1}^N J^{\alpha_t}_{x_t}(\xi_i) \geq \beta$.\label{alg:binary-search-step}
        \STATE \textbf{Output:} permissible cost budget $\alpha_t$.
    \end{algorithmic}
    \label{alg:curriculum}
\end{algorithm*}

\subsection{Theoretical Analysis} 
\label{subsec:curr-theory}

We theoretically demonstrate the usefulness of our curriculum strategy in accelerating an agent's learning in a binary-tree environment. We select this basic RL setup, because it encapsulates the core challenge of our study, i.e., the sparse reward nature in strictly constrained RL problems.

\textbf{Binary-tree environment.} We consider a binary-tree MDP $\mathcal{M}_{\overline{x}} := (\mathcal{S}, \mathcal{A}, \gamma, \mathcal{T}_{\overline{x}}, R_{\overline{x}}, C_{\overline{x}}, \alpha_{\overline{x}}^*, P^0_{\overline{x}})$ of depth $H$ (focusing on the single-task setting $\mathcal{X} = \bcc{\overline{x}}$). The state space $\mathcal{S}$ consists of the nodes of a binary tree, denoted as $\mathcal{S} = \{y^{(0)}_0, y^{(1)}_0, y^{(1)}_1, \cdots , y^{(H)}_0, y^{(H)}_1, \cdots , y^{(H)}_{2^H - 1}\}$, where $y^{(h)}_i$ represents the $i$-th node (from the left) at level $h$. At each node, the agent can choose one of two actions, $\mathcal{A} = \bcc{\textsc{Left}, \textsc{Right}}$, which deterministically transitions the agent to the respective child node in the tree. The initial-state distribution is concentrated at the root node $y^{(0)}_0$, with $P^0_{\overline{x}}(y^{(0)}_0) = 1$. Each trajectory through the tree terminates at a leaf node, representing a unique deterministic path from the root. All leaf nodes are designated as goal states, with rewards defined as $R_{\overline{x}}\brr{s, a} = 1$ for all $s \in \{y^{(H)}_0, y^{(H)}_1, \cdots , y^{(H)}_{2^H - 1}\}$ and $a \in \mathcal{A}$, and $R_{\overline{x}}\brr{s, a} = 0$ otherwise. The cost function $C_{\overline{x}}$ is structured such that total trajectory cost increases monotonically from the leftmost to the rightmost leaf node. Denoting the total cost of a trajectory $y^{(0)}_0, \cdots, y^{(H)}_i$ as $C_{\overline{x}}(y^{(0)}_0, \cdots, y^{(H)}_i)$, this condition is formally expressed as: $C_{\overline{x}}(y^{(0)}_0, \cdots, y^{(H)}_0) < C_{\overline{x}}(y^{(0)}_0, \cdots, y^{(H)}_1) < \cdots < C_{\overline{x}}(y^{(0)}_0, \cdots, y^{(H)}_{2^H -1})$.
Any given tree structure can be rearranged to meet this property. A permissible cost budget, $\alpha_{\overline{x}}^* \in \bss{0, 1}$, is specified as a constraint. Only the leftmost leaf node satisfies $C_{\overline{x}}(y^{(0)}_0, \cdots, y^{(H)}_0) \leq \alpha^*_{\overline{x}}$, while all other leaf nodes exceed the budget, i.e., $C_{\overline{x}}(y^{(0)}_0, \cdots, y^{(H)}_i) > \alpha^*_{\overline{x}}$ for $i \in \bcc{1, \dots, 2^H - 1}$. The agent begins with a uniformly random policy where $\pi_1 (\textsc{Left} \mid y^{(h)}_i) = \pi_1 (\textsc{Right} \mid y^{(h)}_i) = 0.5, \forall i \in \bcc{0, \dots, 2^h - 1}, h \in \bcc{0, \dots, H-1}$. The optimal policy is deterministic, always choosing $\textsc{Left}$ at all relevant states, i.e., $\pi^* (\textsc{Left} \mid y^{(h)}_0) = 1.0, h \in \bcc{0, \dots, H-1}$.

Now, we compare the effects of two strategies in the binary-tree MDP described above:
\begin{enumerate}
\item \textbf{Target curriculum strategy:} The teacher always selects the target cost budget parameter $\alpha^*_{\overline{x}}$ (i.e., $\beta = 0$), meaning the learner receives a non-zero reward only upon completing a full successful trajectory (i.e., reaching the leftmost leaf). As we will demonstrate, this baseline strategy requires an exponential number of rollouts with respect to the tree depth $H$.
\item \textbf{Curriculum strategy with adaptive $\beta_t > 0$:} Here, the teacher gradually tightens the cost constraint by setting the performance threshold parameters to $\beta_t = 0.5 \cdot (1 - \epsilon)^{t-1}$ for $t = 1, \dots, H$. This design ensures that a non-negligible fraction of rollouts yield a positive reward well before the optimal trajectory is reached, thereby promoting progressive learning.
\end{enumerate}
The following theorem establishes that our curriculum strategy dramatically reduces the sample complexity compared to the baseline.
\begin{theorem}
\label{thm:curriculum}
Consider the binary-tree MDP with depth $H$. Suppose the teacher employs the curriculum strategy as defined in Eq.~\eqref{eq:curriculum-optimization} with threshold parameters $\beta_t = 0.5 \cdot (1 - \epsilon)^{t-1}, \text{ for } t = 1, \dots, H$, and selects the cost budget parameter $\alpha_t$ at each time step by solving
\[
\alpha_t ~\gets~ \argmin_{\alpha \in \bss{0,\frac{1}{1-\gamma}}} \brr{\alpha - \alpha^*_{\overline{x}}}^2 \quad \text{subject to} \quad V^{\pi_t}(\overline{x}; J^\alpha_{\overline{x}}) \geq \beta_t ,
\]
where $\gamma$ is chosen so that $\gamma \geq 0.5^{\frac{1}{H}}$ (which guarantees that $\beta_t \leq \gamma^H$ for all $t$). Then, with probability at least $1-\delta$, the learner obtains an $\epsilon$-suboptimal policy (i.e., $\pi_{H + 1} (\textsc{Left} \mid y^{(h)}_0) \geq 1 - \epsilon, \forall h \in \bcc{0, \dots, H-1}$) after a total of $\sum_{t=1}^H K_t = \sum_{t=1}^H \frac{\ln \brr{\frac{2}{\delta}}}{2 \cdot \epsilon^2 \cdot (1-\epsilon)^{t - 1}}$ rollouts, where $K_t$ is the number of rollouts required at step $t$. In particular, setting $\epsilon = \frac{2}{H + 1}$ yields an overall sample complexity of $\mathcal{O}\brr{H^3}$. In contrast, the baseline target curriculum strategy (with $\alpha_t = \alpha^*_{\overline{x}}$) has an expected overall sample complexity of $\mathcal{O}\brr{2^H}$.
\end{theorem}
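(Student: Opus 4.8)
I would prove the two bounds separately, running the curriculum upper bound as an induction over the $H$ curriculum steps with the invariant that after step $t$ the learner has \emph{mastered} the top $t$ levels of the tree — it places mass $\ge 1-\epsilon$ on \textsc{Left} at $y^{(0)}_0,\dots,y^{(t-1)}_0$ — while still being uniform at all deeper levels. Step $t$ is then responsible for learning level $t-1$, and the threshold schedule $\beta_t$ is precisely what makes that one-level-of-progress possible. Before the induction I would record the structural simplifications of this MDP: since every leaf is a goal at depth $H$ and the total cost increases monotonically along leaves, for any policy $\pi$ and budget $\alpha$ one has
\[
V^{\pi}(\overline{x}; J^{\alpha}_{\overline{x}}) \;=\; \gamma^{H}\cdot\Pr_{\xi\sim\pi}\big[\,\mathrm{leaf}(\xi) < m(\alpha)\,\big],
\]
where $m(\alpha)\in\{1,\dots,2^{H}\}$ is the number of admissible leaves. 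In particular the unconstrained value $V^{\pi}(\overline{x}; J^{1/(1-\gamma)}_{\overline{x}})$ equals $\gamma^{H}$ for every $\pi$, so $\gamma\ge 0.5^{1/H}$ gives $\gamma^{H}\ge\tfrac12\ge\beta_{t}$, the $\min\{\beta,\cdot\}$ in Eq.~\eqref{eq:curriculum-optimization} collapses to $\beta_{t}$, and feasibility of the budget program is automatic. I would also fix the natural Monte-Carlo learner implicit in the statement: from the rollouts, at each visited node estimate the per-action success frequency and move the policy toward the empirically better action, so that (i) a batch with no successful rollout produces no update, and (ii) a node traversed by $n$ rollouts whose two actions have success probabilities separated by a constant is set to place mass $\ge 1-\epsilon$ on the better action as soon as $n\ge\ln(2/\delta)/(2\epsilon^{2})$ (Hoeffding).

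\textbf{Curriculum, inductive step.} Suppose the invariant holds entering step $t$. Then a fraction $Q_{t}\ge(1-\epsilon)^{t-1}$ of rollouts reach $y^{(t-1)}_{0}$, and conditioned on reaching it the continuation is uniform over the $2^{H-t+1}$ leaves of that subtree, so $V^{\pi_{t}}(\overline{x}; J^{\alpha}_{\overline{x}}) = \gamma^{H}Q_{t}\cdot m(\alpha)/2^{H-t+1}$ for $m(\alpha)\le 2^{H-t+1}$. Solving the budget program in Eq.~\eqref{eq:curriculum-optimization} (smallest budget with value $\ge\beta_{t}$) therefore selects $m_{t}=\big\lceil 2^{H-t+1}\beta_{t}/(\gamma^{H}Q_{t})\big\rceil$, and the crucial elementary estimate is that $\beta_{t}=0.5(1-\epsilon)^{t-1}$ together with $\gamma^{H}\ge\tfrac12$ and $Q_{t}\ge(1-\epsilon)^{t-1}$ keeps $m_{t}<2^{H-t+1}$: the admissible set stays inside the current subtree but excludes part of its right half, so at $y^{(t-1)}_{0}$ the action \textsc{Left} reaches an admissible leaf with probability larger than \textsc{Right} by a fixed amount of order $2-1/\gamma^{H}>0$. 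The same confinement makes admissibility require \textsc{Left} at every already-mastered level, so the update there only reinforces the invariant, and deeper nodes get no conflicting signal. Since a $Q_{t}\ge(1-\epsilon)^{t-1}$ fraction of the $K_{t}=\ln(2/\delta)/\big(2\epsilon^{2}(1-\epsilon)^{t-1}\big)$ rollouts reach $y^{(t-1)}_{0}$, there are at least $\ln(2/\delta)/(2\epsilon^{2})$ visits there, and property (ii) extends the invariant to step $t+1$ (with a union bound over the $H$ steps, which replaces $\delta$ by $\delta/H$ and costs only a $\log H$ factor). After $H$ steps $\pi_{H+1}$ is $\epsilon$-suboptimal with probability $\ge 1-\delta$, and $\sum_{t=1}^{H}K_{t}=\frac{\ln(2/\delta)}{2\epsilon^{2}}\sum_{t=1}^{H}(1-\epsilon)^{-(t-1)}$; substituting $\epsilon=\tfrac{2}{H+1}$ makes $(1-\epsilon)^{-1}=\tfrac{H+1}{H-1}$, bounds the geometric sum by $O(H)$ (since $(1-\epsilon)^{-H}=O(1)$), and combined with $\epsilon^{-2}=O(H^{2})$ yields the stated $O(H^{3})$.

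\textbf{Baseline lower bound.} With $\alpha_{t}\equiv\alpha^{*}_{\overline{x}}$ only the leftmost leaf is admissible, so a rollout earns positive reward iff it selects \textsc{Left} at all $H$ levels, which under the uniform $\pi_{1}$ has probability $2^{-H}$. By property (i) of the learner the policy is unchanged after any batch containing no success, so it stays equal to $\pi_{1}$ until the first successful rollout; the number of rollouts up to that event stochastically dominates a $\mathrm{Geometric}(2^{-H})$ variable with mean $2^{H}$. Since the uniform policy is not $\epsilon$-suboptimal for $\epsilon<\tfrac12$, at least one success is necessary, giving expected sample complexity $\Omega(2^{H})$.

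\textbf{Expected main obstacle.} The delicate part is the ``crucial elementary estimate'' in the inductive step — certifying that the curriculum's smallest-feasible-budget rule lands on a budget that genuinely isolates the next level ($m_{t}<2^{H-t+1}$ with a constant-order gap between \textsc{Left} and \textsc{Right} at $y^{(t-1)}_{0}$), rather than a slack budget under which both actions at level $t-1$ are equally rewarded and no progress is made. This is exactly what the schedule $\beta_{t}=0.5(1-\epsilon)^{t-1}$ is designed to guarantee: it keeps the performance threshold a constant factor below the value $\gamma^{H}Q_{t}$ attainable along the partially-mastered spine, and the slack $\gamma^{H}\ge\tfrac12$ converts into the needed separation. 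Making this airtight — carrying the accumulated $(1-\epsilon)$ factors through $Q_{t}$, controlling the finite-sample fluctuations of $m_{t}$ and of the per-action frequency estimates, and accounting for the fact that shallower levels are mastered only to precision $1-\epsilon$ — along with fixing the learner's update rule so that $K_{t}$ comes out exactly as stated, is where essentially all the effort lies; by contrast the two sample-complexity sums are routine.
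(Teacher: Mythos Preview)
Your proposal is correct and follows essentially the same approach as the paper: an induction over the $H$ curriculum steps in which step $t$ extends the ``mastered'' prefix by one level, with $\alpha_t$ chosen so that roughly the leftmost $2^{H-t}$ leaves are admissible, Hoeffding applied at the newly-exposed node $y^{(t-1)}_0$, and a geometric waiting-time argument for the baseline. You are in fact more careful than the paper's own proof, which simply asserts the teacher's choice of $\alpha_h$ and the resulting success fractions without tracking the $\gamma^{H}$ factor, deriving $m_t$ from the optimization, or mentioning the union bound over steps; the obstacle you flag (that the smallest-feasible budget must land strictly inside the current subtree) is exactly the point the paper glosses over.
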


\begin{proof}[Proof sketch]
Under the target curriculum strategy, the learner receives a non-zero reward only when a successful rollout (reaching the leftmost leaf) occurs. For a uniformly random policy, the probability of following the leftmost branch in a tree of depth $H$ is $p = \brr{\frac{1}{2}}^{H}$. Thus, the expected number of rollouts needed to achieve a successful trajectory is $\frac{1}{p} = 2^H$, leading to $\mathcal{O}(2^H)$ sample complexity.

In our curriculum strategy, at each step $t$, the teacher sets the cost threshold $\alpha_t$ so that a fraction $0.5 \cdot (1 - \epsilon)^{t-1}$ of the rollouts yield a reward of $1$. For example, at $t=1$ the teacher selects $\alpha_1$ such that half of the rollouts are successful, ensuring $V^{\pi_1}(\overline{x}; J^{\alpha_1}_{\overline{x}}) \geq 0.5$. As the learner collects $K_t$ rollouts at step $t$, Hoeffding’s inequality guarantees that the updated policy $\pi_{t+1}$ satisfies the following: $\Prob{\abs{\pi_{t+1} (\textsc{Left} \mid y^{(t-1)}_0) - \pi^* (\textsc{Left} \mid y^{(t-1)}_0)} > \epsilon} \leq 2 \cdot \exp (-2 \cdot K_t \cdot \epsilon^2)$. Setting this probability to be at most $\delta$ and accounting for the compounded $\epsilon$-suboptimality from previous steps, one obtains $K_t \geq \frac{\ln \brr{\frac{2}{\delta}}}{2 \cdot \epsilon^2 \cdot (1-\epsilon)^{t - 1}}$. Summing over all $t$ gives the total sample complexity. With a proper choice of $\epsilon$ (e.g., $\epsilon = \frac{2}{H + 1}$), one can show that the overall complexity is bounded by $\mathcal{O}(H^3)$.
\end{proof}

Theorem~\ref{thm:curriculum} rigorously shows that our curriculum strategy dramatically reduces the rollout complexity for learning an $\epsilon$-suboptimal policy. In contrast, the baseline suffers from exponential dependence on tree depth $H$, our approach leverages intermediate learning signals to guide the agent more efficiently.

    % !TEX root =  main.tex
%%%%%%%%%%%%%%%%%%%%%%%%%%%%%%%%%%%%%
%%%%%%%%%%%%%%%%%%%%%%%%%%%%%%%%%%%%%
\section{Experimental Evaluation}
\label{sec:experiments}

We evaluate our curriculum across three RL environments of increasing complexity with challenging trajectory constraints, culminating in a real-world math reasoning task involving an LLM agent. For RL experiments, we employ REINFORCE in both tabular and neural single-/multi-task settings~\citep{williams1992simple}. For LLMs, we adapt Hugging Face's TRL library~\citep{vonwerra2022trl} using the RLOO trainer~\citep{ahmadian-etal-2024-back}, an RL fine-tuning method equivalent to REINFORCE. This unified setup ensures consistency across all experiments. To assess the robustness of our findings to the choice of the training algorithm, we provide additional results in Appendix~\ref{subsec-app:ppo_experiments}, where RL agents are trained with Proximal Policy Optimization (PPO)~\cite{schulman2017proximal}.

\subsection{Environments}
%%%
\begin{figure*}[t!]
    \centering
	{  
	    \begin{subfigure}[b]{0.28\textwidth}
		    \includegraphics[trim = 0cm 1.5cm 0cm 0cm, height=2.00cm]{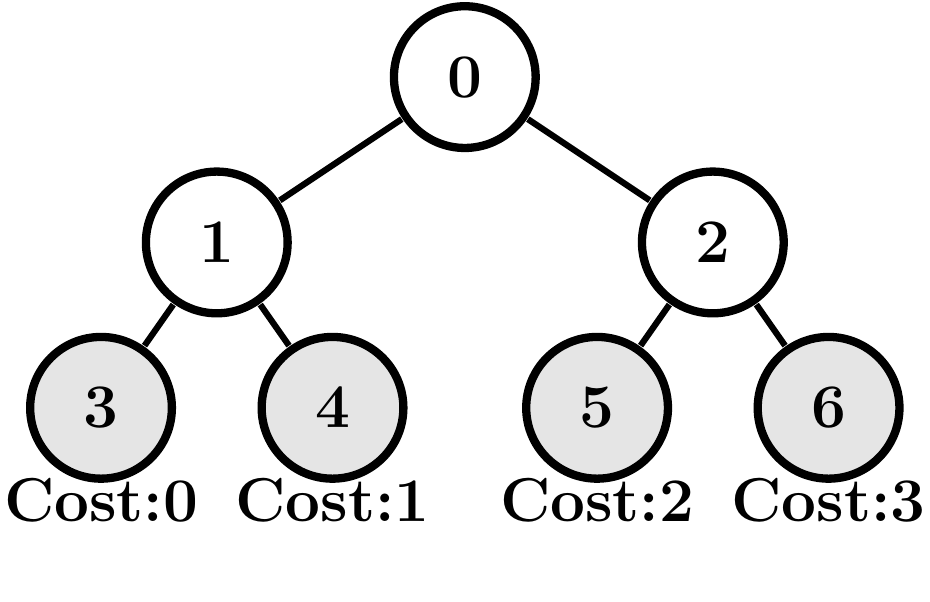}
            \caption{}
            \label{fig.env:1}
        \end{subfigure}
	    \begin{subfigure}[b]{0.17\textwidth}     
            \includegraphics[height=2.10cm]{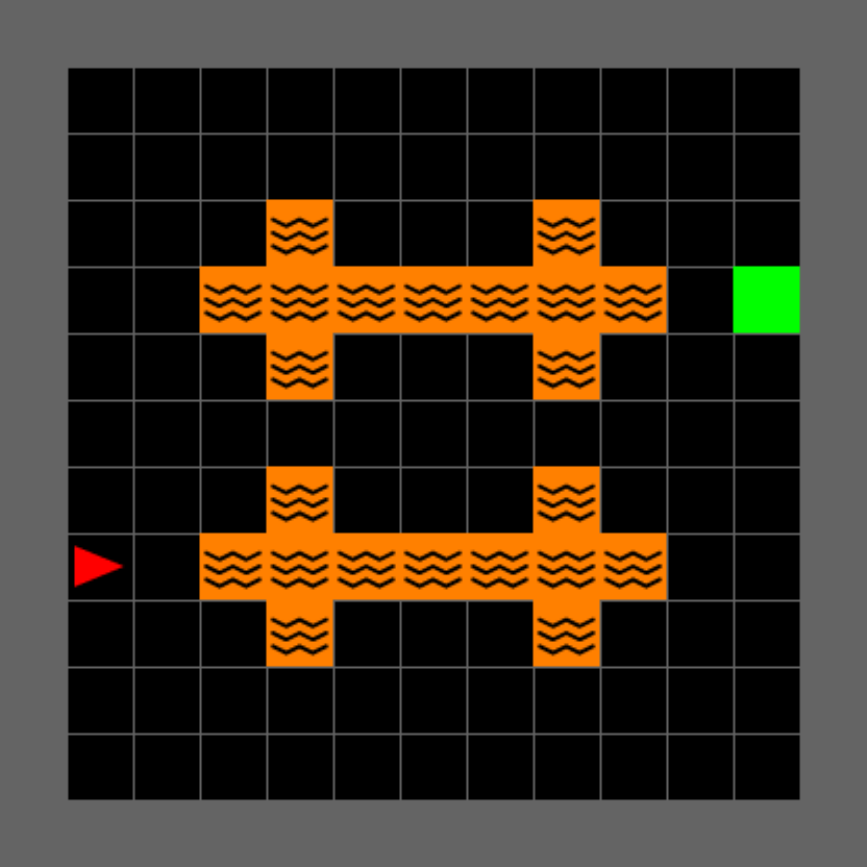}
            \caption{}
            \label{fig.env:2}
        \end{subfigure}    
        \begin{subfigure}[b]{0.17\textwidth}
		    \includegraphics[height=2.10cm]{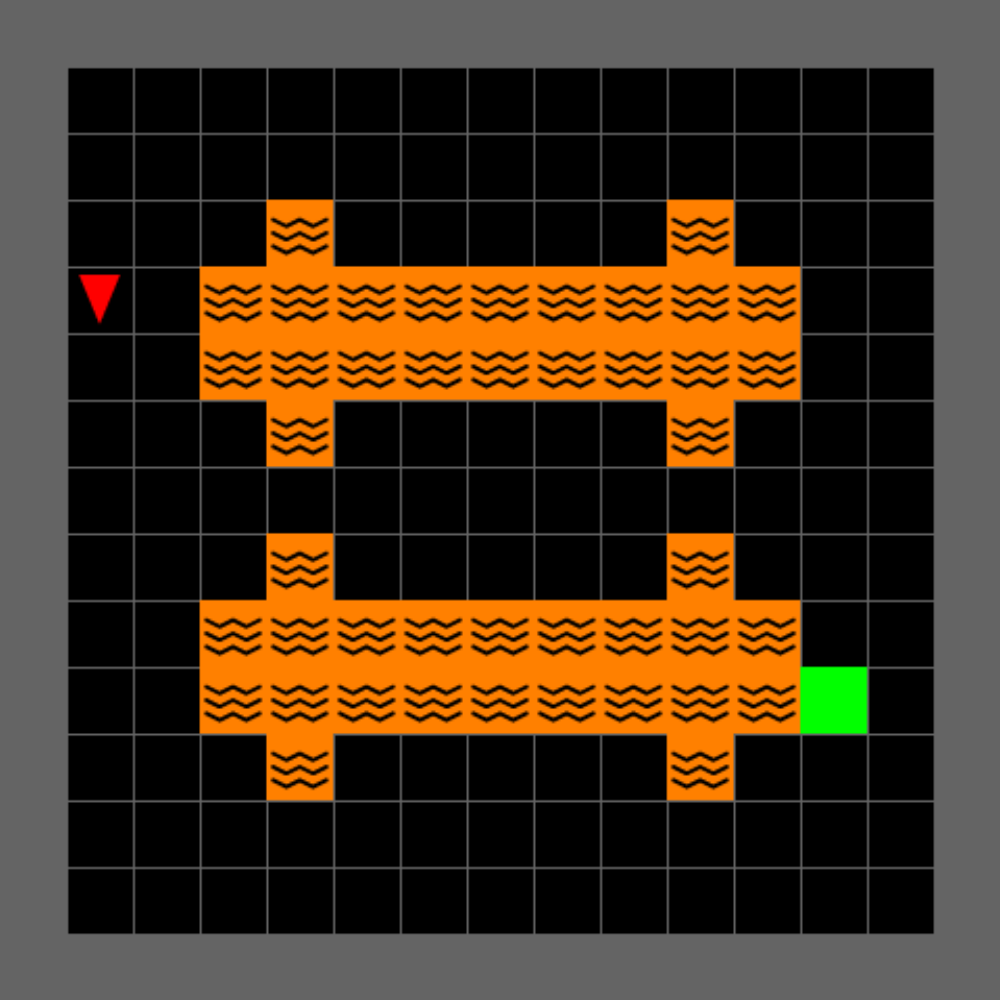}
            \caption{}
            \label{fig.env:3}
        \end{subfigure} 
        \begin{subfigure}[b]{0.36\textwidth}
		    \includegraphics[height=2.10cm]{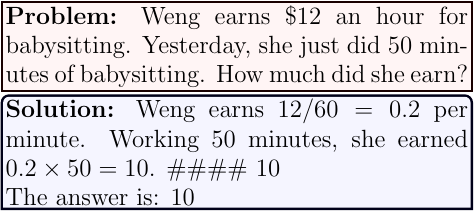}
            \caption{}
            \label{fig.env:4}
        \end{subfigure} 
\label{fig.experiments.environments.illustrations}        
	}
	\caption{Illustrative visualization of each environment (from left to right): (a) \textsc{BinaryTree}, (b) \textsc{PuddleGrid-Single}, (c) \textsc{PuddleGrid-Multi}, and (d) \textsc{SVAMP} / \textsc{GSM8K}. 
    }
    \label{fig:env}
\end{figure*}
%%%

\textbf{\envTree.} This environment is inspired by our theoretical analysis (Figure~\ref{fig.env:1}). We use a binary tree of depth $H = 12$, resulting in $2^H=4096$ leaf nodes, each representing a terminal state associated with a cost. The leftmost leaf has a cost of $0$, with costs increasing monotonically from left to right. The target cost budget is $\alpha^*=0$, i.e., only the leftmost leaf yields a reward of $1$ at test time. 

\textbf{\envSinglePuddle.} The second environment, \envSinglePuddle \ (Figure~\ref{fig.env:2}), is a customized variant of the \textsc{MiniGrid} environment. The state representation includes the red agent's location and orientation, while the action space comprises three discrete actions: \emph{move}, \emph{turn-left}, and \emph{turn-right}. The agent's objective is to reach the green square. Stepping on a lava (orange) square increases the trajectory cost by $1$ per time step. The agent receives a reward of $1$ only if it reaches the green square with a trajectory cost below the target budget; otherwise, the reward is $0$. The target cost budget is set to $\alpha^*=0$, i.e., the agent must avoid stepping on lava squares entirely.

\textbf{\envMultiPuddle.} Building on the previous environment, we design \envMultiPuddle \ (Figure~\ref{fig.env:3}), which extends \envSinglePuddle \ to a multi-task setting. The core structure remains the same, but after each episode, both the agent’s initial location and the goal location are randomly selected. The state representation is augmented to include the goal location within the grid. In this multi-task setup, each task can have its own constraint during training and testing. As in \envSinglePuddle, the target cost budget is set to $\alpha^*=0$. 

% !TEX root =  main.tex
%%%%%%%%%%%%%%%%%%%%%%%%%%%%%%%%%%%%%
%%%%%%%%%%%%%%%%%%%%%%%%%%%%%%%%%%%%%
%%%%%%%%%%%%%%%%%%%%%%%%%%%%%%%%%%%%%
\begin{figure*}[t!]
\centering
    \begin{subfigure}[b]{\textwidth}
    \centering
    {
        \includegraphics[height=0.467cm]{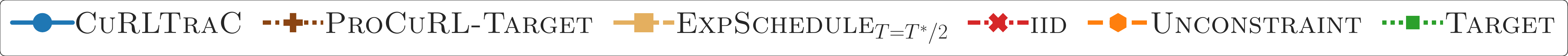}
    }
    \end{subfigure}
    %%%%%%%%%%%%%%%%%
    \begin{subfigure}[b]{.33\textwidth}
    \centering
    {
        \includegraphics[height=3.20cm]{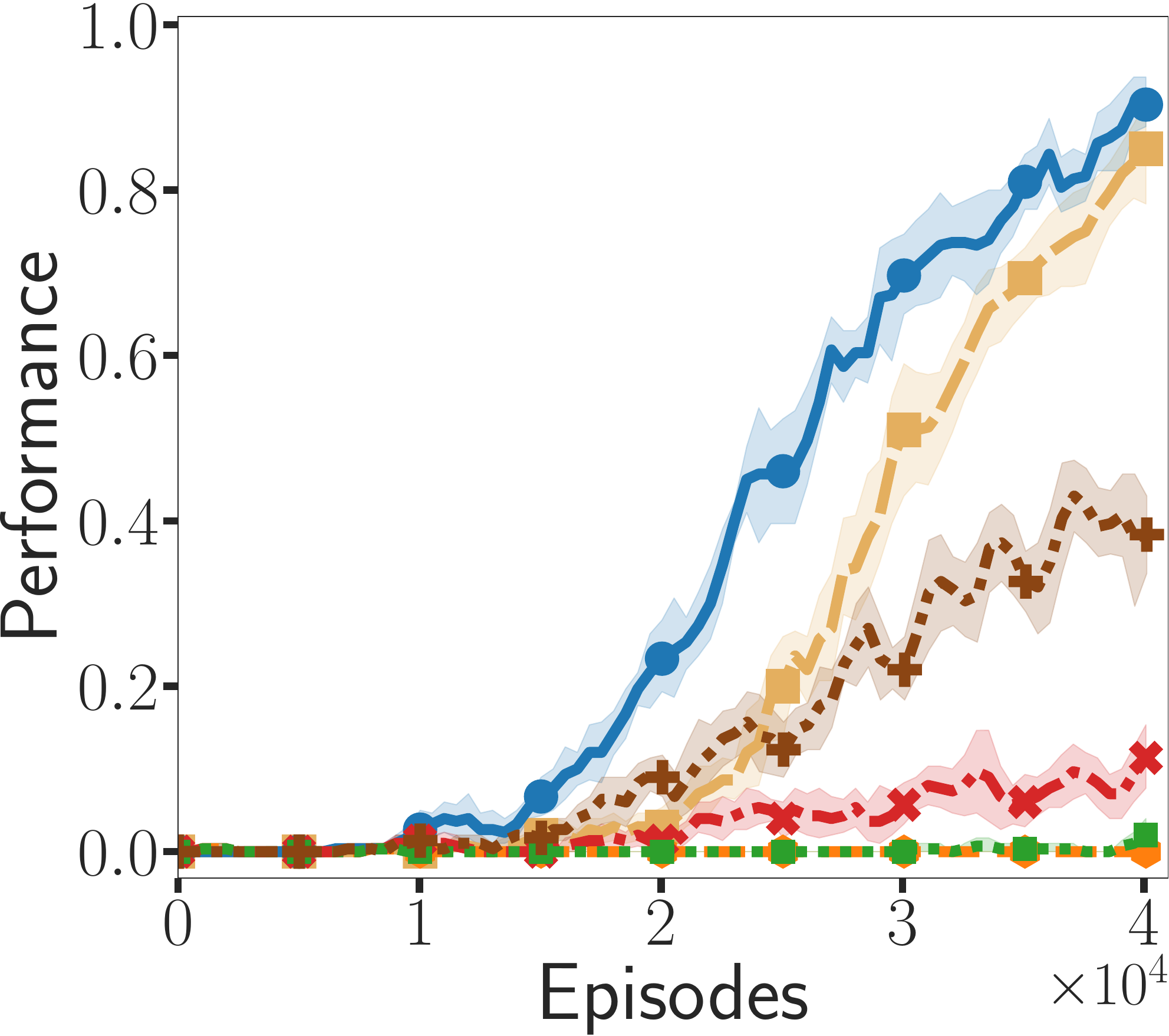}
        \caption{\envTree} 
        \label{fig:results_rl_conv.1}
    }
    \end{subfigure}
    %%%%%%%%%%%%%%%%%
    \begin{subfigure}[b]{.33\textwidth}
    \centering
    {
        \includegraphics[height=3.20cm]{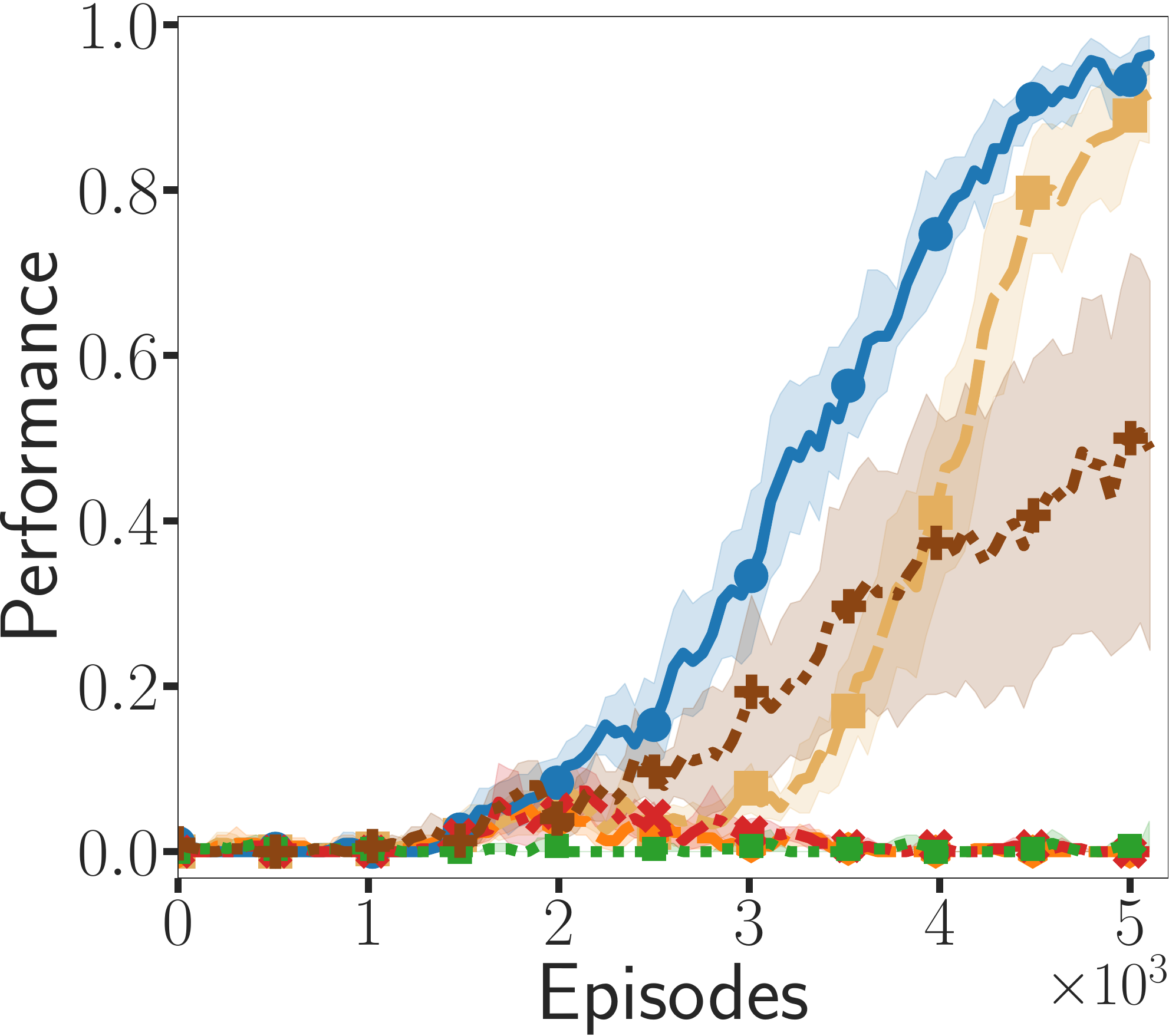}
        \caption{\envSinglePuddle} 
        \label{fig:results_rl_conv.2}
    }
    \end{subfigure}
    % %%%%%%%%%%%%%%%%% 
     %%%%%%%%%%%%%%%%%
    \begin{subfigure}[b]{.32\textwidth}
    \centering
     {
        \includegraphics[height=3.20cm]{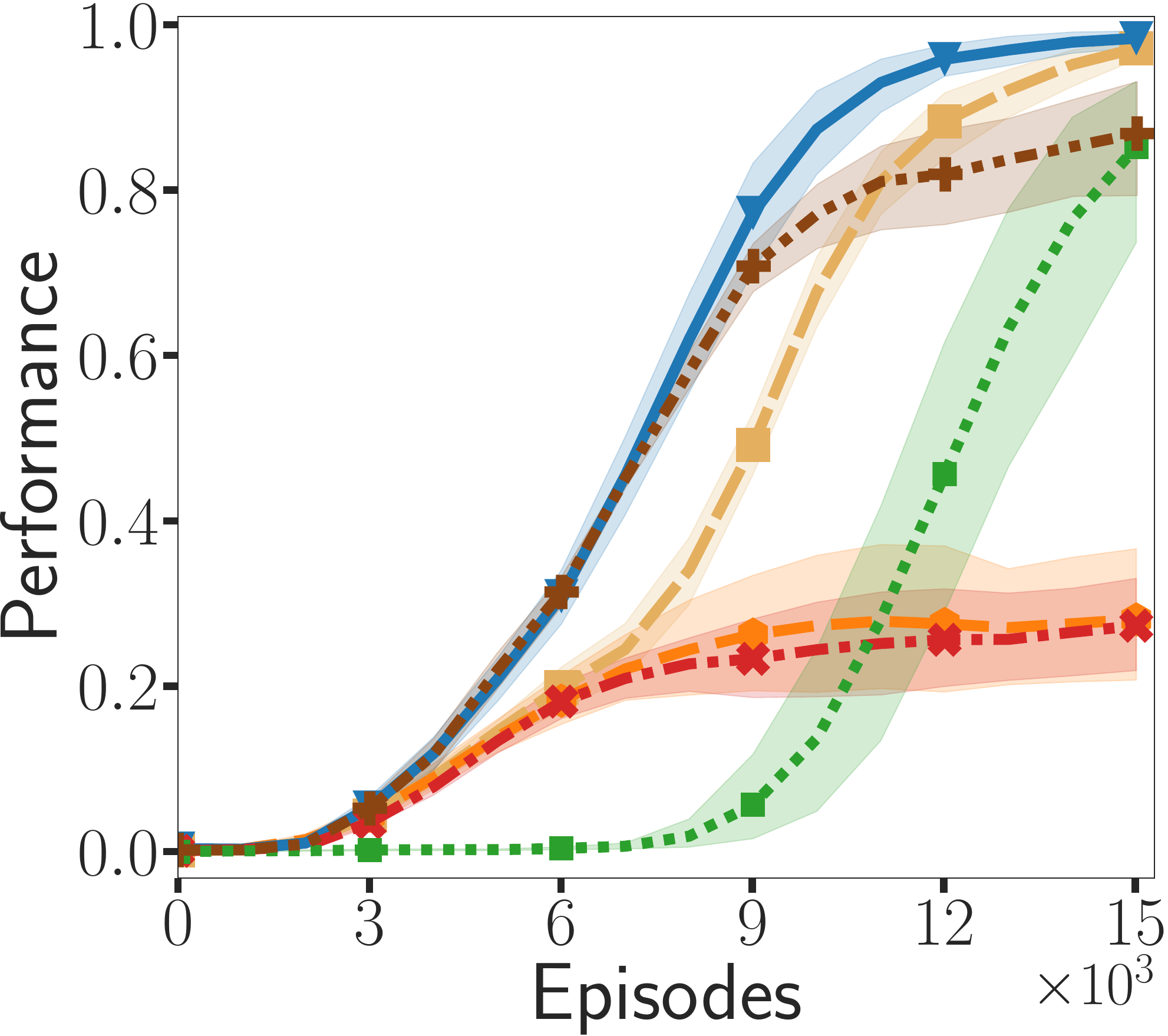}
        \caption{\envMultiPuddle} 
        \label{fig:results_rl_conv.3}
    }
    \end{subfigure}

    \caption{Performance of RL agents trained with different strategies, measured by the agent’s mean return (with $95\%$ confidence intervals over $10$ random runs), evaluated under test-time constraints.}
    \label{fig:results_rl_conv}
\end{figure*}

\begin{figure*}[t!]
\centering
    \begin{subfigure}[b]{\textwidth}
    \centering
     {
        \includegraphics[height=0.43cm]{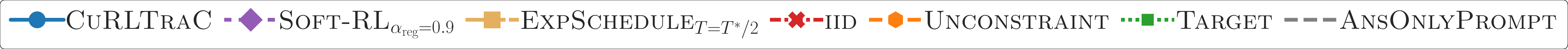}
    }
    \end{subfigure}
    %\bigskip
    % %%%%%%%%%%%%%%%%%  
    \begin{subfigure}[b]{.49\textwidth}
    \centering
     {
        \includegraphics[height=3.20cm]{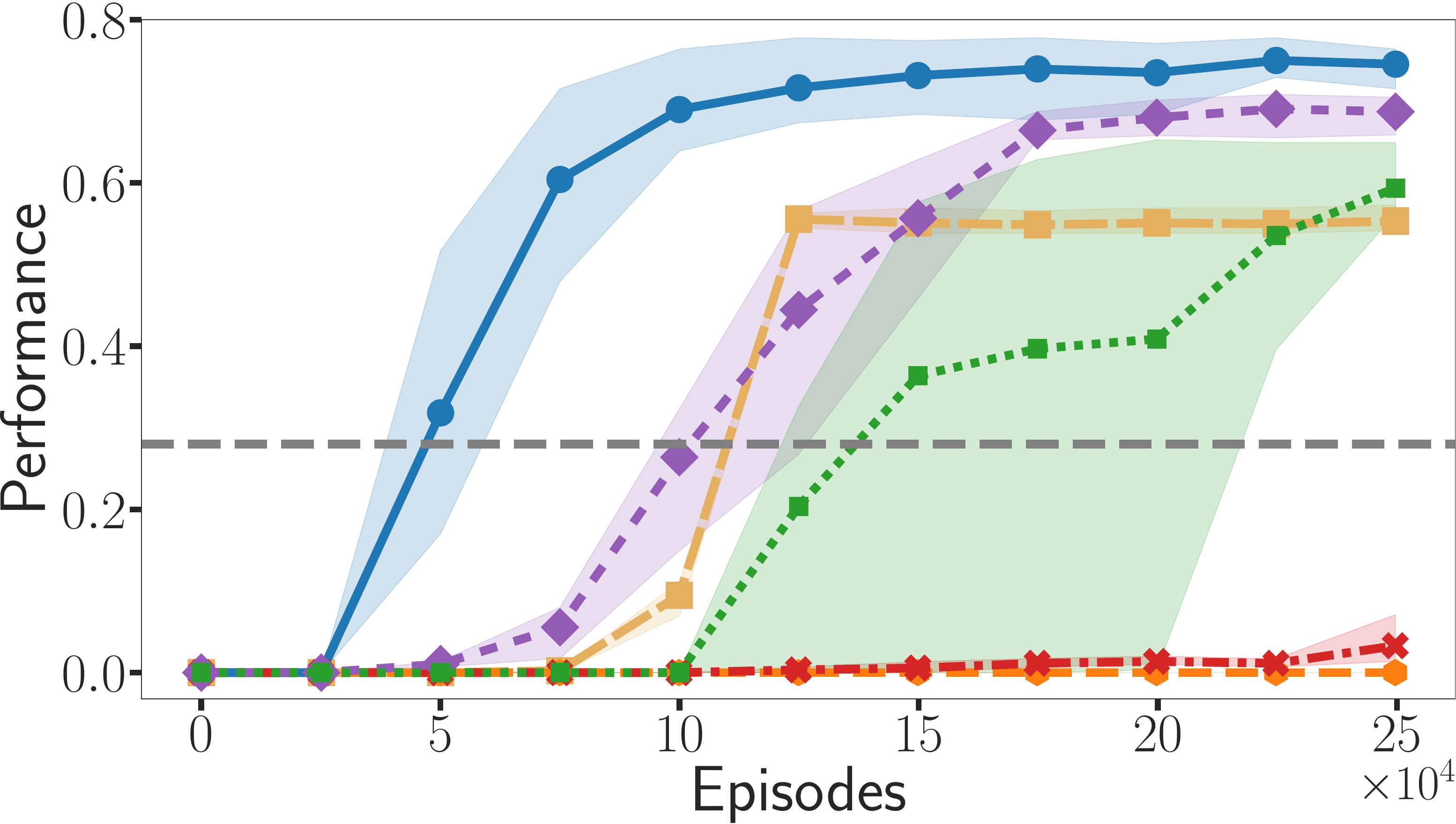}
        \caption{\textsc{SVAMP}-\textsc{Qwen}}
        \label{fig:results_llm_conv.1}
    }
    \end{subfigure}
    % %%%%%%%%%%%%%%%%%  
    \begin{subfigure}[b]{.49\textwidth}
    \centering
     {
        \includegraphics[height=3.20cm]{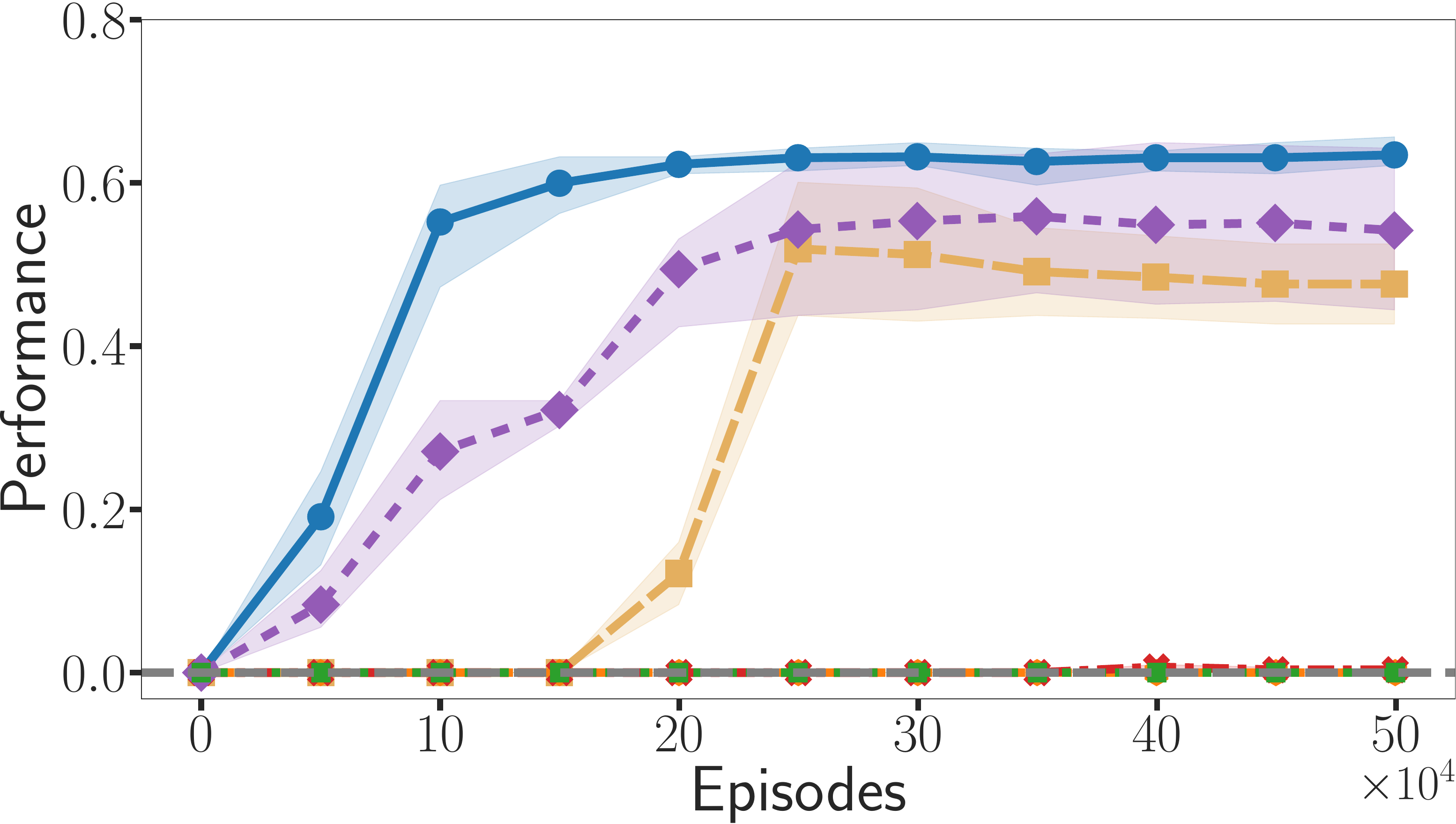}
        \caption{\textsc{SVAMP}-\textsc{MetaMath}} 
        \label{fig:results_llm_conv.2}
    }
    \end{subfigure}

    \begin{subfigure}[b]{.49\textwidth}
    \centering
     {
        \includegraphics[height=3.20cm]{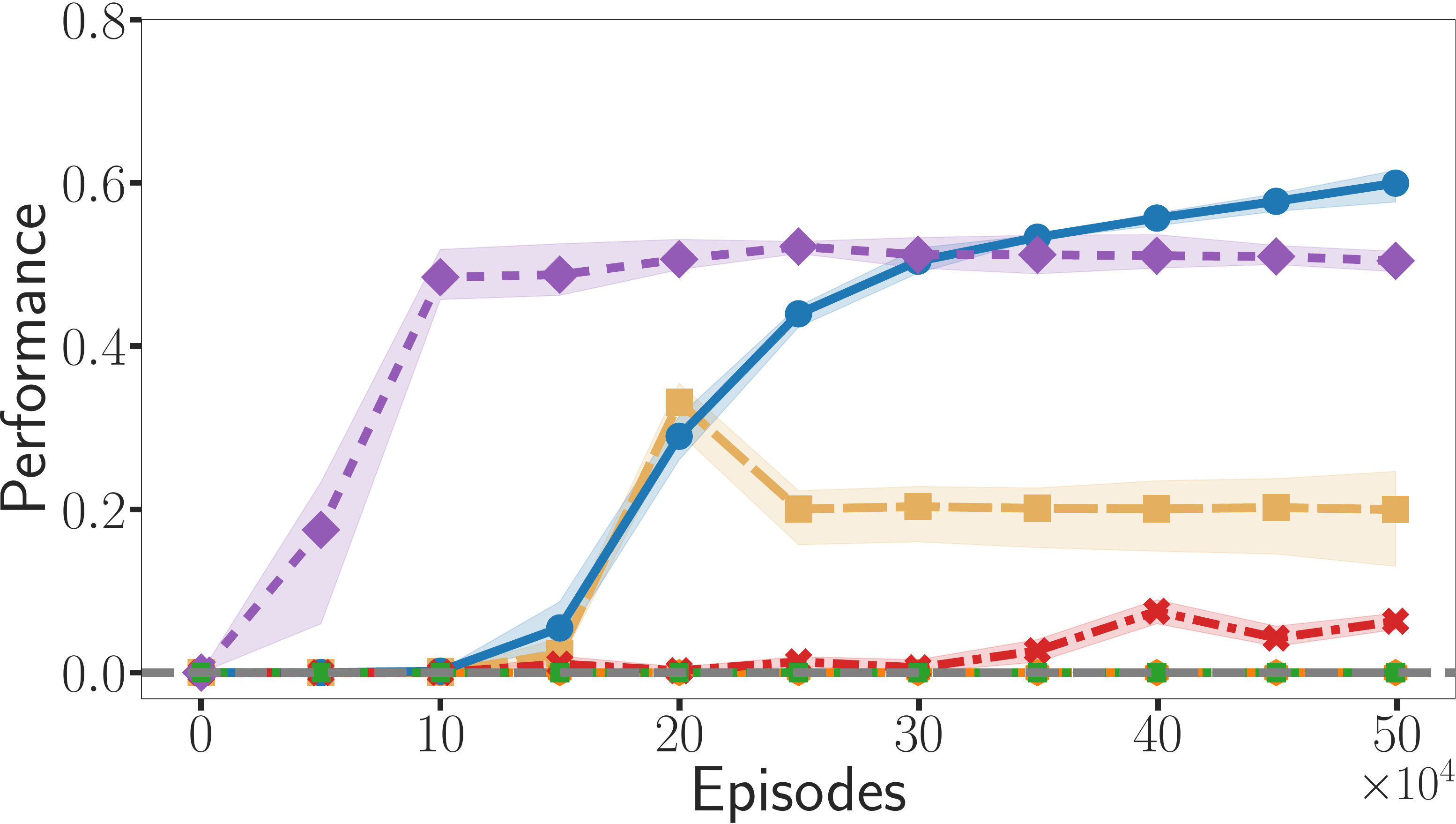}
        \caption{\textsc{GSM8K}-\textsc{Qwen}}
        \label{fig:results_llm_conv.3}
    }
    \end{subfigure}
    % %%%%%%%%%%%%%%%%%  
    \begin{subfigure}[b]{.49\textwidth}
    \centering
     {
        \includegraphics[height=3.20cm]{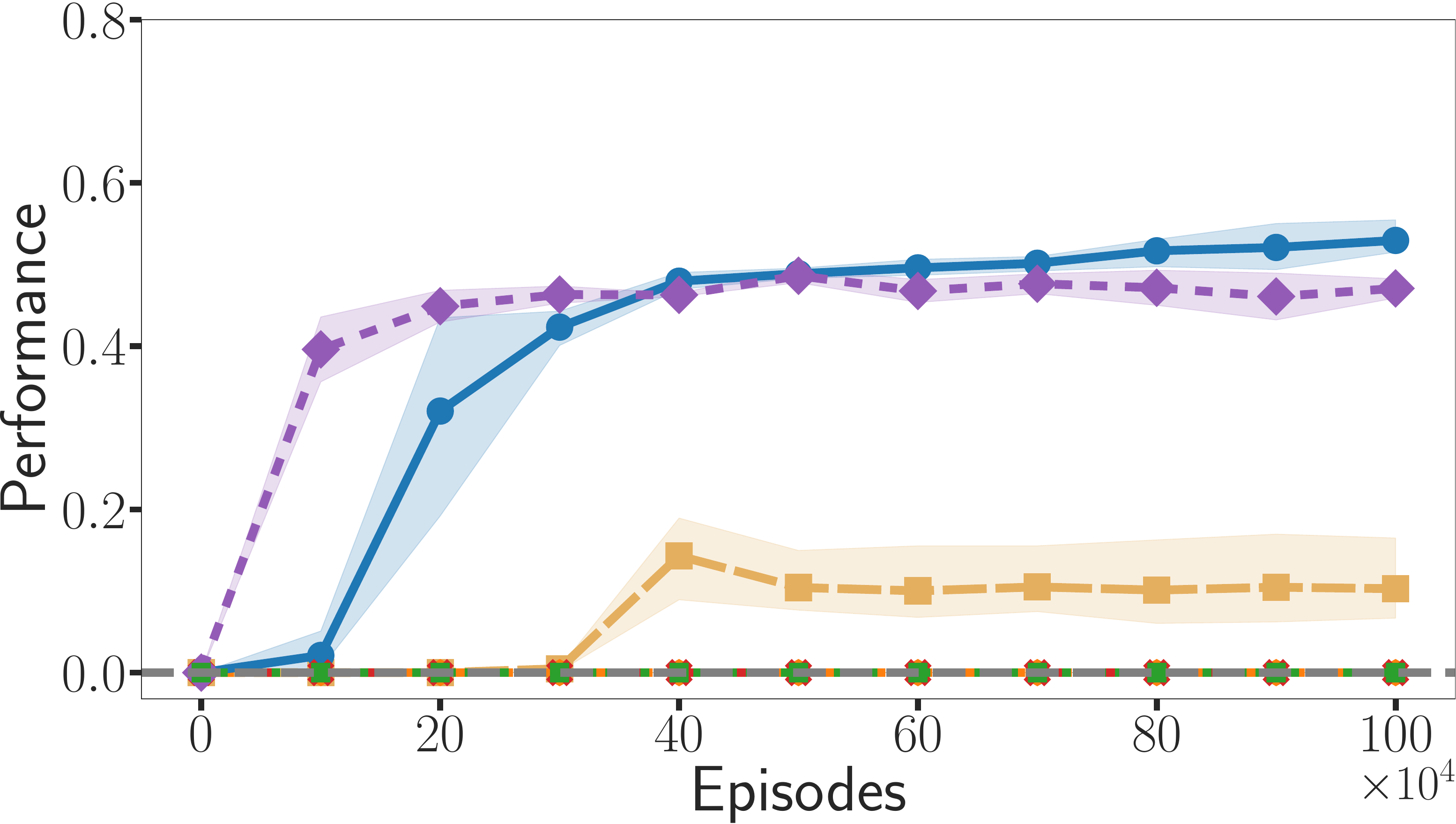}
        \caption{\textsc{GSM8K}-\textsc{MetaMath}} 
        \label{fig:results_llm_conv.4}
    }
    \end{subfigure}
    %%%%%%%%%%%%%%%%%
	\caption{Performance of LLM agents trained with different strategies, measured by the agent's mean accuracy (with $95\%$ confidence intervals over $3$ random runs), evaluated under test-time constraints.} 
    \label{fig:results_llm_conv}
\end{figure*}
%%%%%%%%%%%%%%%%%%%%%%%%%%%%%%%%%%%%%

\textbf{\textsc{SVAMP} / \textsc{GSM8K}.} Last, we consider a more challenging setting which involves using LLMs to solve math problems. Given a math problem (Figure~\ref{fig.env:4}), an LLM agent generates a solution trajectory. A reward of $1$ is assigned only if the final answer is correct and the number of generated tokens does not exceed the target budget. To ensure the generality of our evaluation, we assess the performance of our algorithm on two mathematical reasoning benchmarks: \textsc{SVAMP}~\citep{patel-etal-2021-nlp}, and \textsc{GSM8K}~\citep{cobbe2021gsm8k}. Given the varying levels of difficulty across datasets, we set the target cost per test sample as a percentage of the base model's original response length. We fine-tune two base models with LoRA~\citep{hu2022lora} using RLOO: \textsc{Qwen2.5-Math-1.5B} (\textsc{Qwen})~\citep{yang2024qwen25mathtechnicalreportmathematical}, and \textsc{MetaMath-Llemma-7B} (\textsc{MetaMath})~\citep{yu2024metamath}.

\subsection{Methods Evaluated}

\textbf{Our strategy \algCurriculum{}.} We follow Algorithm~\ref{alg:curriculum} with a fixed performance threshold of $\beta = 0.5$ across all experiments, following the idea of intermediate difficulty. In Appendix~\ref{subsec-app:ablation}, we conduct a sensitivity analysis of $\beta$ across all RL environments to assess the robustness of the proposed method. Starting from an upper bound on $\alpha$, we perform a binary search to find the smallest value satisfying the constraint in Line~\ref{alg:binary-search-step} of Algorithm~\ref{alg:curriculum}. A history of per-task rollouts is maintained in a rolling buffer. When task $x$ is selected, its rollout outcome and length are added to the corresponding per-task buffer, and the associated training budget $\alpha_x$ is updated. In practice, updating  $\alpha_x$ does not require collecting additional rollouts at different constraint levels. Instead, the update relies solely on previously collected training rollouts, introducing no additional computational overhead. 

\textbf{Contextual RL curriculum.} We employ a recent curriculum strategy, \algContext{}~\citep{tzannetos2024proximal}, which trains RL agents toward challenging target distributions. To apply a contextual RL curriculum in our setting, the trajectory constraint must be treated as an input context, and the deployment-time constraint as the target distribution. While this adaptation is conceptually straightforward, applying it to domains with large context spaces, as in LLMs, is impractical. Effective training with such a technique would require generating all possible (prompt, budget) pairs. Even with discretization, the resulting augmented context space becomes intractable. Therefore, we apply \algContext{} only in the RL environments. 

\textbf{Regularized RL baselines.} An alternative option is to relax the strict trajectory-level constrained RL objective in Eq.~\eqref{eq:constrained_objective} by considering a regularized, softer version, denoted as \algSoftRL{}, following~\citep{arora2025training}. This method aims to balance the trade-off between accuracy and response length rather than explicitly enforcing hard constraints. It can also be applied to our LLM setup. Although it does not guarantee satisfaction of the target constraints at the end of training, the added regularization term encourages both correctness and brevity in the model's responses. While \citep{arora2025training} used a coefficient of $\alpha_{\mathrm{reg}}=0.4$, we found stronger regularization to be more competitive. Hence, we report results with $\alpha_{\mathrm{reg}}=0.9$, corresponding to \algSoftRLStrong{}, to further encourage  shorter responses.

\textbf{Typical and heuristic baselines.} We compare against three standard baselines: \algTarget{}, trains directly under the deployment constraint; \algNoConstraint{}, trains without restrictions; and \algRandom{}, trains by sampling the cost budget randomly per episode. Motivated by the observation that \algCurriculum{} exhibits an empirical decay pattern resembling exponential scheduling, we also include \algExp{}~\citep{shperberg2024relaxed}, a static curriculum with exponential decay. While the decay horizon $T$ is typically unknown in advance, we estimate it based on the convergence behavior of \algCurriculum{} in each environment. In Appendix~\ref{subsec-app:additional_experiments}, we evaluate \algExp{} with variable and fixed decay lengths $T = \{T^*, T^*/2, 50000, 5000\}$, where $T^*$ denotes the total number of training episodes per environment. We report results for the most competitive variant, \algExphalf.

\textbf{Prompting-based baseline.} We also consider a prompting-based baseline, \OnlyAnsPrompt{}, to evaluate model's accuracy under test-time constraints without any fine-tuning. \OnlyAnsPrompt{} prompts the model to provide a direct answer without any chain-of-thought reasoning.\footnote{Note that all other methods use a default chain-of-thought prompt that encourages step-by-step reasoning.} Further implementation details are provided in Appendix~\ref{subsec-app:implementation_details}.

\subsection{Results}

\textbf{Convergence behavior.} RL and LLM agents trained with \algCurriculum{} consistently outperform all baselines under test-time constraints (Figures~\ref{fig:results_rl_conv} and \ref{fig:results_llm_conv}). \algContext{} performs relatively well in RL environments but relies on providing the target budget as context, limiting its applicability in complex LLM domains. \algExphalf{} effectively trains RL agents in settings where the target budget is identical across tasks; however, its performance becomes inconsistent in per-task target settings, underscoring the need for per-task exponential scheduling. \algSoftRLStrong{}, a state-of-the-art method for balancing response length, achieves competitive performance, yet its formulation lacks an explicit notion of target constraints and therefore requires per-domain tuning of its regularization coefficient. \OnlyAnsPrompt{} method shows that, without fine-tuning, the LLM fails to satisfy target constraints.

% !TEX root =  main.tex
%%%%%%%%%%%%%%%%%%%%%%%%%%%%%%%%%%%
%%%%%%%%%%%%%%%%%%%%%%%%%%%%%%%%%%%
%%%%%%%%%%%%%%%%%%%%%%%%%%%%%%%%%%%%%
\begin{figure*}[t!]

\centering
% Legend on top (not a subfigure)
\includegraphics[height=0.495cm]{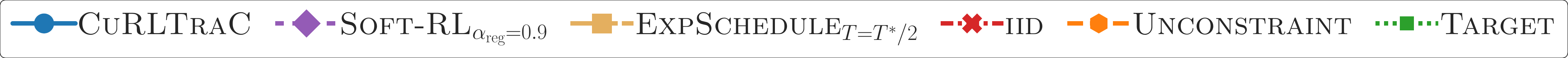}

\begin{subfigure}[b]{.33\textwidth}
    \centering
    \includegraphics[height=2.9cm]{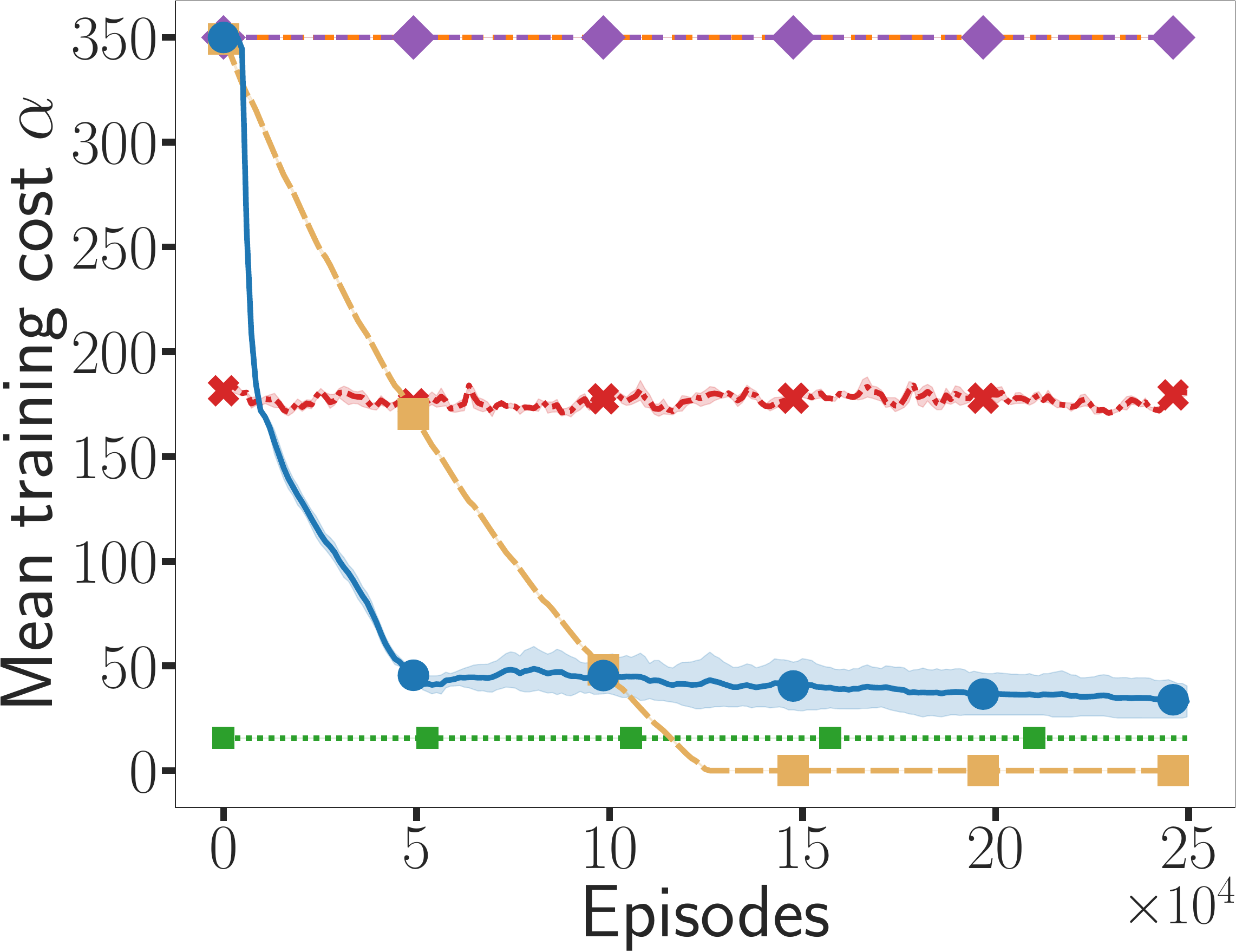}
    \caption{Training cost $\alpha$} 
    \label{fig:avg_alpha}
\end{subfigure}
\hfill
\begin{subfigure}[b]{.32\textwidth}
    \centering
    \includegraphics[height=2.9cm]{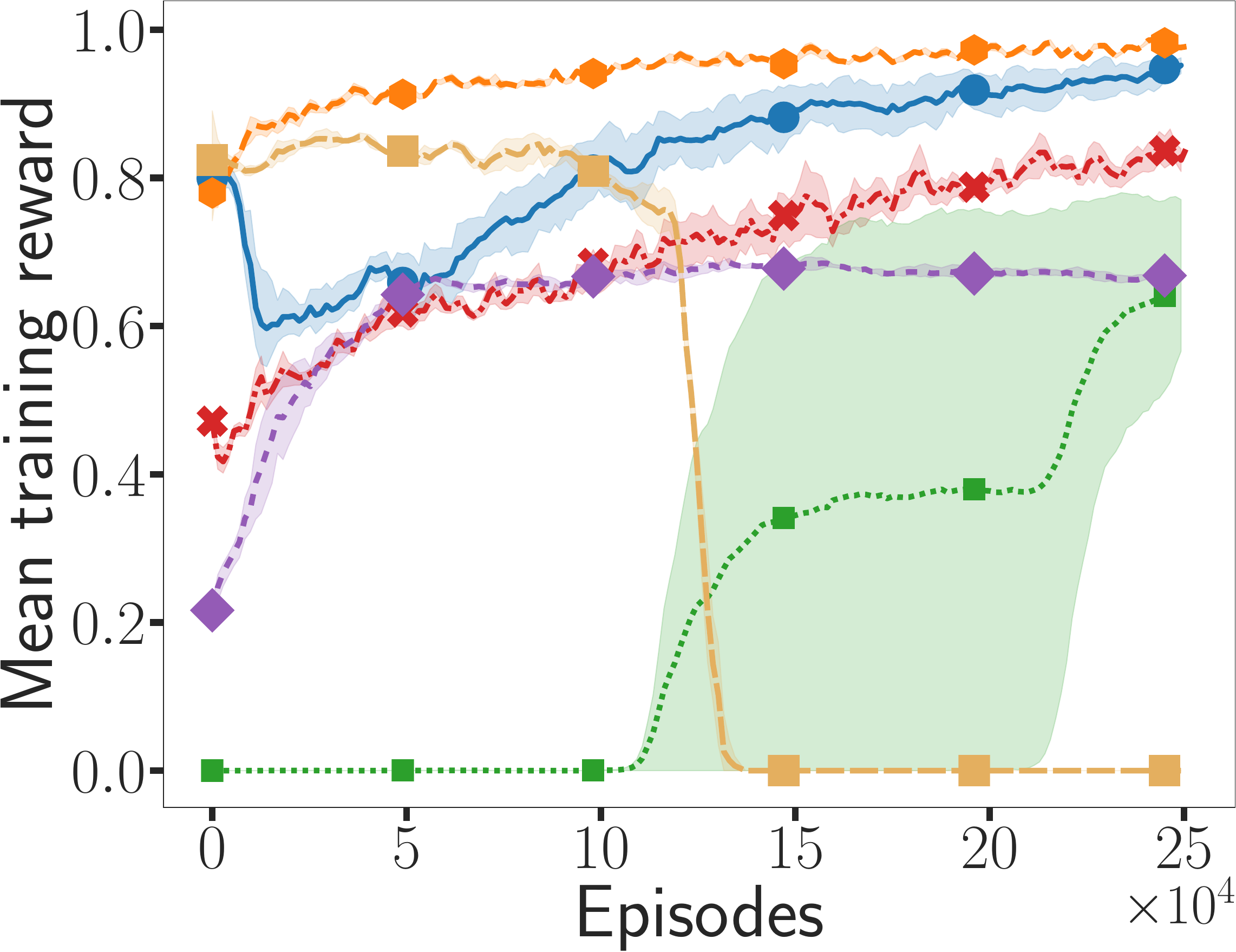}
    \caption{Training Reward} 
    \label{fig:avg_reward}
\end{subfigure}
\hfill
\begin{subfigure}[b]{.32\textwidth}
    \centering
    \includegraphics[height=2.9cm]{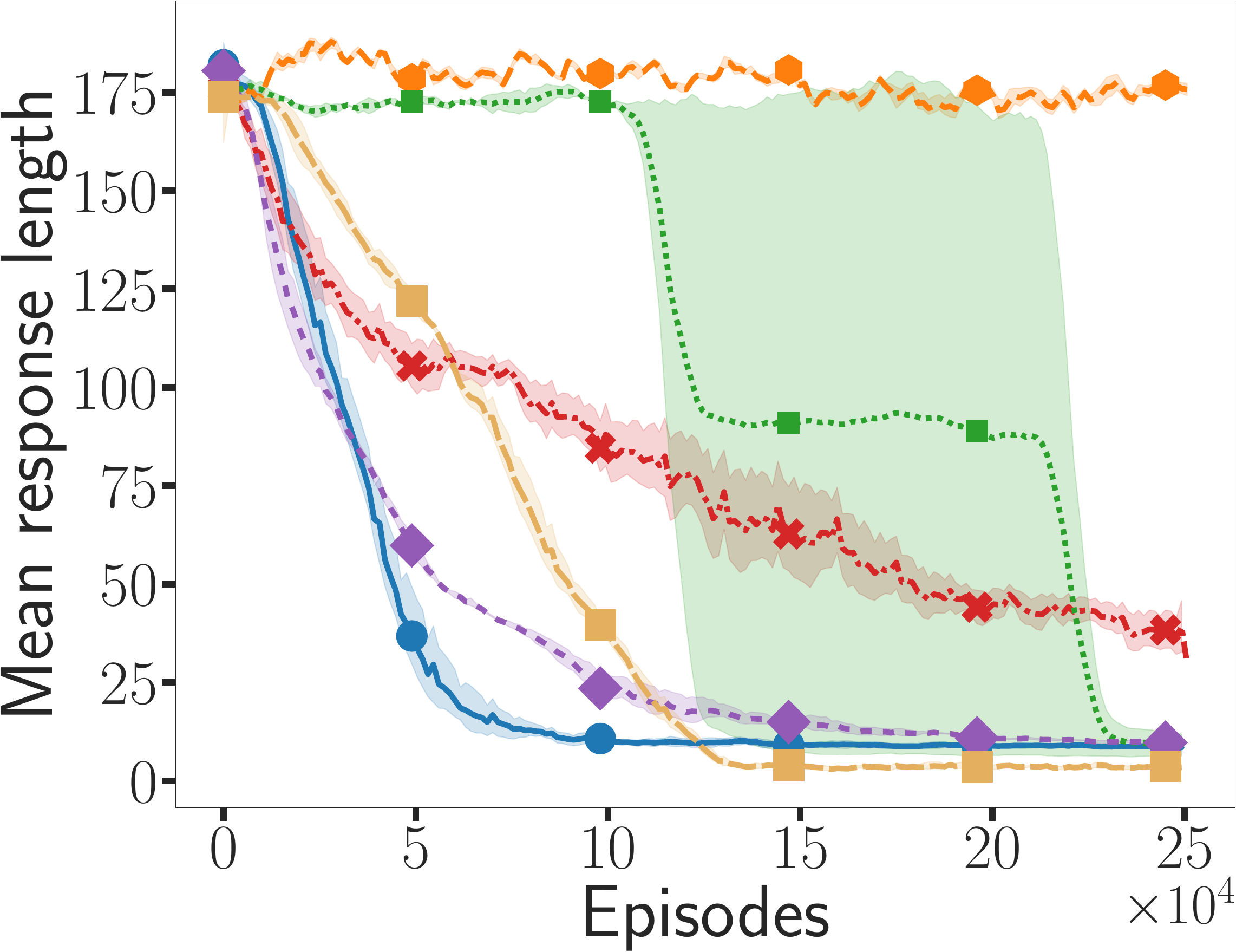}
    \caption{Training Response Length}
    \label{fig:avg_length}
\end{subfigure}

\caption{Training plots of \textsc{SVAMP}-\textsc{Qwen} for different strategies. (a) shows the progression of the cost budget $\alpha$ during training. (b) shows the average observed reward during training. (c) shows the progression of the generated response lengths during training.
}

\label{fig:results_curr}
\end{figure*}
%%%%%%%%%%%%%%%%%%%%%%%%%%%%%%%%%%%%%

\textbf{Training cost.} 
Figure~\ref{fig:avg_alpha} shows the progression of the average training cost budget $\alpha$, which reflects task difficulty and defines the training curriculum. \algSoftRLStrong{} incorporates response length directly into the reward function; hence, it does not control $\alpha$. \algTarget{} maintains a fixed cost budget, while \algNoConstraint{} allows maximum token generation. \algRandom{} samples $\alpha$ randomly per episode. \algExphalf{} decays $\alpha$ at a fixed rate, whereas \algCurriculum{} adaptively tightens constraints by decreasing $\alpha$ based on model performance, gradually converging to the test target.

\textbf{Training reward.} Figure~\ref{fig:avg_reward} shows that although \algNoConstraint{} and \algRandom{} achieve increasing training rewards, these gains do not transfer to test-time performance (see Figure~\ref{fig:results_llm_conv}). \algTarget{} yields delayed or no rewards due to its strict constraints, providing weak learning signals. In contrast, \algCurriculum{} exhibits an initial reward drop as $\alpha$ decreases and tasks become more challenging, yet the reward remains above the threshold $\beta = 0.5$. Once stabilized around $\beta$, performance improves, indicating that the models learn to adapt to progressively tighter constraints. \algExphalf{} shows a sharp reward decline, indicating rapid constraint tightening. Finally, \algSoftRLStrong{} regularizes \algNoConstraint{}, balancing accuracy and brevity through a regularized reward signal.

\textbf{Training Response length.} Figure~\ref{fig:avg_length} shows varying rates of response-length reduction as models adapt to meet test-time constraints. \algCurriculum{} achieves the fastest adaptation to shorter responses, followed by \algSoftRLStrong{}. \algRandom{} shows a moderate reduction but remains above the target lengths. \algNoConstraint{} and \algTarget{} exhibit no and late change during training.

\looseness-1\textbf{LLM Deployment on Consumer Hardware.} Next, we assess the practical impact of our strategy by comparing inference metrics of \textsc{Qwen} models across hardware configurations (Table~\ref{table:deployment}). Performance of the base model serves as a default baseline of the model used for fine-tuning. \OnlyAnsPrompt{} fails to adequately reduce the response time, and displays a drop in accuracy. Fine-tuning with \algNoConstraint{} improves accuracy, but results in high response time; fine-tuning with \algTarget{} decreases response time, but leads to a notable drop in accuracy. Our strategy \algCurriculum{} achieves a balanced trade-off between response time and accuracy, while also ensuring high constrained accuracy. Relative to the base model, \algCurriculum{} provides a substantial reduction in response time (e.g., 7.5 vs. 0.6 on M1 configuration), while maintaining accuracy (e.g., 77 vs. 74 on M1 configuration). 
 
\begin{table*}[t]
\caption{Comparison of models in terms of inference metrics when deployed on various consumer hardware configurations. Results are reported for inference on \textsc{SVAMP} test-set with \textsc{Qwen} as base model used for fine-tuning. We consider the following inference metrics: (a) ``Response Time'', (b) ``Response Length'', (c) ``Accuracy'', and (d) ``Constr. Accuracy''. Here, ``Accuracy'' denotes the overall test-set accuracy, while ``Constr. Accuracy'' corresponds to the performance metric used in Figure~\ref{fig:results_llm_conv} (i.e., mean accuracy under test-time constraints). We report results for three configurations, namely, \textsc{M1} (Apple M1 Pro), \textsc{GTX} (Nvidia GTX 1070), and \textsc{RTX} (Nvidia RTX 3060).}
%Finetuning with \algCurriculum{} achieves up to $12\times$ inference speedup while maintaining accuracy.}
%\vspace{1.5mm}
\centering
\renewcommand{\arraystretch}{1.1}
\setlength{\tabcolsep}{6.1pt}

\begin{adjustbox}{max width=\textwidth}
\begin{tabular}
{
   l
   *{12}{r} % all numeric columns right-aligned
}
\toprule
\multirow{4}{*}{\textbf{Method}} &
\multicolumn{12}{c}{\textbf{Inference Metrics}} \\
\cmidrule(lr){2-13}
& \multicolumn{3}{c}{\makebox[2.5cm][c]{\textbf{Response Time (s)}}} &
  \multicolumn{3}{c}{\makebox[2.5cm][c]{\textbf{Response Length (tokens)}}} &
  \multicolumn{3}{c}{\makebox[2.5cm][c]{\textbf{Accuracy (\%)}}} &
  \multicolumn{3}{c}{\makebox[2.5cm][c]{\textbf{Constr. Accuracy (\%)}}} \\
\cmidrule(lr){2-4}\cmidrule(lr){5-7}\cmidrule(lr){8-10}\cmidrule(lr){11-13}
& \textsc{M1} & \textsc{GTX} & \textsc{RTX} &
  \textsc{M1} & \textsc{GTX} & \textsc{RTX} &
  \textsc{M1} & \textsc{GTX} & \textsc{RTX} &
  \textsc{M1} & \textsc{GTX} & \textsc{RTX} \\
\midrule
%\CoTPrompt{} 
Base model &
$7.5\,\textcolor{gray}{(0.2)}$ & $3.9\,\textcolor{gray}{(0.1)}$ & $2.5\,\textcolor{gray}{(0.1)}$ &
$138\,\textcolor{gray}{(3.9)}$ & $142\,\textcolor{gray}{(4.0)}$ & $145\,\textcolor{gray}{(4.2)}$ &
$77$ & $77$ & $75$ &
$0$ & $0$ & $0$ \\
%%\midrule
\OnlyAnsPrompt &
$5.6\,\textcolor{gray}{(0.3)}$ & $2.8\,\textcolor{gray}{(0.1)}$ & $1.7\,\textcolor{gray}{(0.1)}$ &
$104\,\textcolor{gray}{(5.4)}$ & $102\,\textcolor{gray}{(5.3)}$ & $104\,\textcolor{gray}{(5.5)}$ &
$60$ & $63$ & $64$ &
$9$ & $8$ & $9$ \\

\algNoConstraint{} &
$8.1\,\textcolor{gray}{(0.2)}$ & $4.2\,\textcolor{gray}{(0.1)}$ & $2.5\,\textcolor{gray}{(0.1)}$ &
$121\,\textcolor{gray}{(2.1)}$ & $119\,\textcolor{gray}{(2.2)}$ & $118\,\textcolor{gray}{(2.1)}$ &
$91$ & $90$ & $91$ &
$0$ & $0$ & $0$ \\

\algTarget{} &
$0.6\,\textcolor{gray}{(0.1)}$ & $0.5\,\textcolor{gray}{(0.0)}$ & $0.3\,\textcolor{gray}{(0.0)}$ &
$5\,\textcolor{gray}{(0.8)}$ & $4\,\textcolor{gray}{(0.1)}$ & $4\,\textcolor{gray}{(0.6)}$ &
$61$ & $61$ & $62$ &
$61$ & $61$ & $62$ \\

\algCurriculum{} &
$0.6\,\textcolor{gray}{(0.0)}$ & $0.5\,\textcolor{gray}{(0.0)}$ & $0.3\,\textcolor{gray}{(0.0)}$ &
$3\,\textcolor{gray}{(0.1)}$ & $3\,\textcolor{gray}{(0.1)}$ & $3\,\textcolor{gray}{(0.1)}$ &
$74$ & $74$ & $72$ &
$74$ & $74$ & $72$ \\
\bottomrule
\end{tabular}
\end{adjustbox}
\label{table:deployment}
\end{table*}
\vspace{-0.5mm}

    % !TEX root =  main.tex
%%%%%%%%%%%%%%%%%%%%%%%%%%%%%%%%%%%%%
%%%%%%%%%%%%%%%%%%%%%%%%%%%%%%%%%%%%%
\section{Concluding Discussions} \label{sec:conclusion}
We proposed a curriculum strategy for RL under strict trajectory-level constraints, enabling agents to adapt to stringent resource or safety requirements. Our theoretical analysis on a binary-tree MDP showed faster convergence compared to imposing constraints from the outset, and our empirical results across multiple domains highlighted its effectiveness. Applied to math reasoning, our strategy allowed us to fine-tune an LLM to compress its output chain-of-thought tokens to strict deployment constraints, achieving a substantial speedup during inference time.

Next, we discuss a few limitations and outline a plan for addressing them in future work. First, we assume that deployment-time constraints are predefined. Future work could explore Matryoshka-style learning~\citep{matryoska22} to train models capable of dynamically adapting to varying target constraints, thereby accommodating different test-time resources. Second, in our setting we compressed the output chain-of-thought tokens of the model to satisfy certain deployment constraints. In future work, it would be interesting to apply this idea of token compression to mitigate other inference-time bottlenecks, such as reducing the number of rollouts in best-of-N sampling or reducing the size of input context. Finally, while our strategy improves performance under constraints, its impact on the model's internal behavior remains unclear. Investigating changes in attention patterns or internal representations after fine-tuning could provide deeper insight into the model's adaptation.

    \begin{ack}
    Funded/Co-funded by the European Union (ERC, TOPS, 101039090). Views and opinions expressed are however those of the author(s) only and do not necessarily reflect those of the European Union or the European Research Council. Neither the European Union nor the granting authority can be held responsible for them.\end{ack}
    \bibliography{main}
    \bibliographystyle{unsrt}
    \clearpage
    %\input{8_checklist}
    %\clearpage    
    % \appendix
    % \beginSupplementaryMaterials
    \appendix
    % !TEX root =  main.tex
%%%%%%%%%%%%%%%%%%%%%%%%%%%%%%%%%%%%%
%%%%%%%%%%%%%%%%%%%%%%%%%%%%%%%%%%%%%
%\onecolumn
\appendix
{
    \allowdisplaybreaks
    % !TEX root =  main.tex
%%%%%%%%%%%%%%%%%%%%%%%%%%%%%%%%%%%%%
%%%%%%%%%%%%%%%%%%%%%%%%%%%%%%%%%%%%%
\section{Table of Contents}
\label{sec-app:toc}

In this section, we briefly outline the content included in the paper's appendices: 
\begin{itemize}
    \item Appendix~\ref{sec-app:examples} presents additional examples of model responses observed during training.
    \item Appendix~\ref{sec-app:proof} contains the proof of Theorem\iftoggle{MainSuppContent}{~\ref{thm:curriculum}}{~1} (\iftoggle{MainSuppContent}{Section~\ref{sec:prox-corl-curr}}{Section~3}). 
    \item Appendix~\ref{sec-app:implementation} details further implementation aspects of the experiments (\iftoggle{MainSuppContent}{Section~\ref{sec:experiments}}{Section~4}).
    \item Appendix~\ref{sec-app:experiments} provides additional experimental results, including experiments with Proximal Policy Optimization as the learning algorithm, a parameter sensitivity analysis of \algCurriculum{}, and comparisons with additional baselines.
\end{itemize}

    %\clearpage
    \section{Additional Examples with Model Responses}
\label{sec-app:examples}

We present additional qualitative outputs generated by the language model at various stages of training. These examples demonstrate how the fine-tuned model learns to produce compressed responses that explicitly skip steps in chain-of-thought reasoning without compromising final correctness. To maintain clarity and conciseness, we omit intermediate evaluation steps where the model's responses show negligible variation for a given test question.
\iftoggle{MainSuppContent}{Figure~\ref{fig:math_response_appendix_3}}{Figure~6} illustrates the progression of \textsc{MetaMath} model responses for a representative example from the \textsc{SVAMP} dataset. 
In \iftoggle{MainSuppContent}{Figure~\ref{fig:math_response_appendix_4}}{Figure~7}, we highlight a notable case where the \textsc{MetaMath} model, by the end of training, produces the correct answer directly, without relying on intermediate chain-of-thought reasoning.
\iftoggle{MainSuppContent}{Figure~\ref{fig:math_response_appendix_1}}{Figure~8} shows the progression of \textsc{Qwen}'s responses for an example from the \textsc{GSM8K} dataset. The model implicitly halves the total number of glasses and computes the discounted glass price, significantly compressing the chain-of-thought process by reducing the number of generated tokens.
Similarly, \iftoggle{MainSuppContent}{Figure~\ref{fig:math_response_appendix_2}}{Figure~9} presents the progression of \textsc{Qwen}'s responses for another example from the \textsc{GSM8K} dataset. In this case, the model implicitly computes the number of downloads for the second and third months, and directly proceeds to calculate the total.

\begin{figure}[H]
   \centering
    \small

        % Problem Statement
        \begin{tcolorbox}[colframe=red!10!black, colback=red!4, boxsep=1.0pt, sharpish corners = all,
        top=1pt,
        bottom=1pt,
        left=1pt,
        right=1pt,
        boxrule=1.2pt,
        before skip=0pt,
        after skip=1pt
        ]
            {\textbf{Test-set Math Question}}: Below is an instruction that describes a task. Write a response that appropriately completes the request. \#\#\# Instruction:
            Jesse's rooms are 19 feet long and 18 feet wide. If she has 20 rooms, how much carpet does she need to cover the floors of all rooms? \#\#\# Response: Let's think step by step.
        \end{tcolorbox}
        % Initial Response
        \begin{tcolorbox}[colframe=blue!10!black, colback=blue!4,
        boxsep=1.0pt,
        top=1pt,
        bottom=1pt,
        left=1pt,
        right=1pt,
        boxrule=1.2pt,
        before skip=2.5pt,
        after skip=1pt]
            {\textbf{Base Model Response}}:
            Jesse's rooms are 19 feet long and 18 feet wide. To find the area of one room, we multiply the length by the width: 19 feet * 18 feet = 342 square feet. Jesse has 20 rooms in total. To find the total area of all the rooms, we multiply the area of one room by the number of rooms: 342 square feet * 20 = 6840 square feet. Therefore, Jesse needs 6840 square feet of carpet to cover the floors of all the rooms. \#\#\#\# 6840 
            
            The answer is: 6840
            \end{tcolorbox}    
        % Response After N Training Steps
        \begin{tcolorbox}[colframe=blue!10!black, colback=blue!4,
        boxsep=1.0pt,
        top=1pt,
        bottom=1pt,
        left=1pt,
        right=1pt,
        boxrule=1.2pt,
        before skip=2.5pt,
        after skip=1pt]
            {\textbf{Fine-tuned Model Response (25K Episodes)}}:
            The area of each room is 19 feet x 18 feet = 342 square feet.Since Jesse has 20 rooms, the total area of all the rooms is 342 square feet x 20 = 6840 square feet.Therefore, Jesse needs 6840 square feet of carpet to cover the floors of all the rooms. \#\#\#\# 6840 
            
            The answer is: 6840
        \end{tcolorbox}    
        % Response After M Training Steps
        \begin{tcolorbox}[colframe=blue!10!black, colback=blue!4,
        boxsep=1.0pt,
        top=1pt,
        bottom=1pt,
        left=1pt,
        right=1pt,
        boxrule=1.2pt,
        before skip=2.5pt,
        after skip=1pt]
            {\textbf{Fine-tuned Model Response (50K Episodes)}}: 
            The area of each room is 19 feet x 18 feet = 342 square feet.Since Jesse has 20 rooms, she needs a total of 20 x 342 square feet of carpet. \#\#\#\# 6840
            
            The answer is: 6840
        \end{tcolorbox}    
        % Response After M Training Steps
        \begin{tcolorbox}[colframe=blue!10!black, colback=blue!4,
        boxsep=1.0pt,
        top=1pt,
        bottom=1pt,
        left=1pt,
        right=1pt,
        boxrule=1.2pt,
        before skip=2.5pt,
        after skip=1pt]
            {\textbf{Fine-tuned Model Response (75K Episodes)}}: 
            She needs 20 * (19 * 18) = 6840 square feet of carpet. \#\#\#\# 6840
            
            The answer is: 6840
        \end{tcolorbox}    
        
         \begin{tcolorbox}[colframe=blue!10!black, colback=blue!4,
        boxsep=1.0pt,
        top=1pt,
        bottom=1pt,
        left=1pt,
        right=1pt,
        boxrule=1.2pt,
        before skip=2.5pt,
        after skip=1pt]
            {\textbf{Fine-tuned Model Response (100K Episodes)}} 
            19*18*20 \#\#\#\# 6840
            
            The answer is: 6840
        \end{tcolorbox}  

         \begin{tcolorbox}[colframe=blue!10!black, colback=blue!4,
        boxsep=1.0pt,
        top=1pt,
        bottom=1pt,
        left=1pt,
        right=1pt,
        boxrule=1.2pt,
        before skip=2.5pt,
        after skip=10pt]
            {\textbf{Fine-tuned Model Response (250K Episodes)}}: 
            19*18*20 
            
            The answer is: 6840
            \end{tcolorbox}

    \caption{\textsc{MetaMath} responses to a \textsc{SVAMP} test question at different stages of training. The model trained with our curriculum technique learns to minimize the chain-of-thought process. In this example, only the final calculation is generated.}
    \label{fig:math_response_appendix_3}
\end{figure}

\begin{figure}[ht!]
   \centering
    \small
        % Problem Statement
        \begin{tcolorbox}[colframe=red!10!black, colback=red!4, boxsep=1.0pt, sharpish corners = all,
        top=1pt,
        bottom=1pt,
        left=1pt,
        right=1pt,
        boxrule=1.2pt,
        before skip=0pt,
        after skip=1pt
        ]
            {\textbf{Test-set Math Question}}: Below is an instruction that describes a task. Write a response that appropriately completes the request. \#\#\# Instruction:
            Melissa scored a total of 21 points playing some games. Is she scored 7 points in each game. How many games did she play? \#\#\# Response: Let's think step by step.
        \end{tcolorbox}
        % Initial Response
        \begin{tcolorbox}[colframe=blue!10!black, colback=blue!4,
        boxsep=1.0pt,
        top=1pt,
        bottom=1pt,
        left=1pt,
        right=1pt,
        boxrule=1.2pt,
        before skip=2.5pt,
        after skip=1pt]
            {\textbf{Base Model Response}}:
            If Melissa scored 7 points in each game, then the total number of games she played is 21 / 7 = 3.
            Therefore, Melissa played 3 games.
            \#\#\#\# 3
            The answer is: 3
            \end{tcolorbox}    
        % Response After N Training Steps
        \begin{tcolorbox}[colframe=blue!10!black, colback=blue!4,
        boxsep=1.0pt,
        top=1pt,
        bottom=1pt,
        left=1pt,
        right=1pt,
        boxrule=1.2pt,
        before skip=2.5pt,
        after skip=1pt]
            {\textbf{Fine-tuned Model Response (50K Episodes)}}:
            If Melissa scored 7 points in each game, then the number of games she played is 21 / 7 = 3. \#\#\#\# 3

            The answer is: 3
        \end{tcolorbox}    
        % Response After M Training Steps
        \begin{tcolorbox}[colframe=blue!10!black, colback=blue!4,
        boxsep=1.0pt,
        top=1pt,
        bottom=1pt,
        left=1pt,
        right=1pt,
        boxrule=1.2pt,
        before skip=2.5pt,
        after skip=1pt]
            {\textbf{Fine-tuned Model Response (75K Episodes)}}: 
            She played 21/7 = 3 games.
            \#\#\#\# 3
            
            The answer is: 3
        \end{tcolorbox}    
        % Response After M Training Steps
        \begin{tcolorbox}[colframe=blue!10!black, colback=blue!4,
        boxsep=1.0pt,
        top=1pt,
        bottom=1pt,
        left=1pt,
        right=1pt,
        boxrule=1.2pt,
        before skip=2.5pt,
        after skip=1pt]
            {\textbf{Fine-tuned Model Response (100K Episodes)}}: 
            21/7 \#\#\#\# 3

            The answer is: 3
        \end{tcolorbox}    
        
        \begin{tcolorbox}[colframe=blue!10!black, colback=blue!4,
        boxsep=1.0pt,
        top=1pt,
        bottom=1pt,
        left=1pt,
        right=1pt,
        boxrule=1.2pt,
        before skip=2.5pt,
        after skip=10pt]
            {\textbf{Fine-tuned Model Response (225K Episodes)}}: 
            3
            
            The answer is: 3
        \end{tcolorbox}    
    \caption{\looseness-1\textsc{MetaMath} responses to a \textsc{SVAMP} test question at different training stages. By the end of training, the fine-tuned model directly answers without any chain-of-thought tokens.}
    \label{fig:math_response_appendix_4}
\end{figure}

\begin{figure}[H]
   \centering
    \small

        % Problem Statement
        \begin{tcolorbox}[colframe=red!10!black, colback=red!4, boxsep=1.0pt, sharpish corners = all,
        top=1pt,
        bottom=1pt,
        left=1pt,
        right=1pt,
        boxrule=1.2pt,
        before skip=0pt,
        after skip=1pt
        ]
            {\textbf{Test-set Math Question}}: Below is an instruction that describes a task. Write a response that appropriately completes the request. \#\#\# Instruction:
            Kylar went to the store to buy glasses for his new apartment. One glass costs \$5, but every second glass costs only 60\% of the price. Kylar wants to buy 16 glasses. How much does he need to pay for them? \#\#\# Response: Let's think step by step.
        \end{tcolorbox}
        % Initial Response
        \begin{tcolorbox}[colframe=blue!10!black, colback=blue!4,
        boxsep=1.0pt,
        top=1pt,
        bottom=1pt,
        left=1pt,
        right=1pt,
        boxrule=1.2pt,
        before skip=2.5pt,
        after skip=1pt]
            {\textbf{Base Model Response}}:
            To determine how much Kylar needs to pay for 16 glasses, we need to consider the pricing structure: one glass costs \$5, but every second glass costs only 60\% of \$5, which is \$3. We can break down the cost as follows:

        1. Calculate the cost of the first glass.
        
        2. Calculate the cost of the second glass.
        
        3. Repeat this pattern for all 16 glasses.
        
        4. Sum up the costs to get the total amount Kylar needs to pay.
        
        Let's write the Python code to calculate this.
        
        ```python
        
        \# Define the cost of a glass and the discounted cost
        
        cost\_per\_glass = 5
        discounted\_cost\_per\_glass = cost\_per\_glass * 0.6
  
        \# Total number of glasses
        
        total\_glasses = 16
        
        \# Calculate the total cost
        
        total\_cost = 0
        for i in range(total\_glasses):
            if (i + 1) \% 2 == 0:
                total\_cost += discounted\_cost\_per\_glass
            else:
                total\_cost += cost\_per\_glass
        
        print(total\_cost)
        ```
        
        ```
        output
        64.0
        ```
        The total amount Kylar needs to pay for 16 glasses is \(\boxed{64}\).
            \end{tcolorbox}    
        % Response After N Training Steps
        \begin{tcolorbox}[colframe=blue!10!black, colback=blue!4,
        boxsep=1.0pt,
        top=1pt,
        bottom=1pt,
        left=1pt,
        right=1pt,
        boxrule=1.2pt,
        before skip=2.5pt,
        after skip=1pt]
            {\textbf{Fine-tuned Model Response (100K Episodes)}}:
            - The cost of the first glass is \$5.
            - The cost of the second glass is \$5 * 0.60 = \$3.
            - The cost of the third glass is \$5.
            - The cost of the fourth glass is \$5 * 0.60 = \$3.
            - And so on...
            We can see that the cost alternates between \$5 and \$3. Since Kylar wants to buy 16 glasses, there will be 8 glasses at \$5 and 8 glasses at \$3.
            The total cost is:
            \[ 8 \times 5 + 8 \times 3 = 40 + 24 = 64 \]
            
            So, Kylar needs to pay \(\boxed{64}\).
        \end{tcolorbox}    
        % Response After M Training Steps
        \begin{tcolorbox}[colframe=blue!10!black, colback=blue!4,
        boxsep=1.0pt,
        top=1pt,
        bottom=1pt,
        left=1pt,
        right=1pt,
        boxrule=1.2pt,
        before skip=2.5pt,
        after skip=1pt]
            {\textbf{Fine-tuned Model Response (200K Episodes)}}: 
            Cost = 8 * 5 + 8 * 5 * 0.6 = 40 + 24 = 64
            \boxed{64}
        \end{tcolorbox}    
        % Response After M Training Steps
        \begin{tcolorbox}[colframe=blue!10!black, colback=blue!4,
        boxsep=1.0pt,
        top=1pt,
        bottom=1pt,
        left=1pt,
        right=1pt,
        boxrule=1.2pt,
        before skip=2.5pt,
        after skip=1pt]
            {\textbf{Fine-tuned Model Response (300K Episodes)}}: 
            Total = 8*5 + 8*3 = 40+24=64
            \boxed{64}
        \end{tcolorbox}    
        
         \begin{tcolorbox}[colframe=blue!10!black, colback=blue!4,
        boxsep=1.0pt,
        top=1pt,
        bottom=1pt,
        left=1pt,
        right=1pt,
        boxrule=1.2pt,
        before skip=2.5pt,
        after skip=10pt]
            {\textbf{Fine-tuned Model Response (500K Episodes)}}: 
            8*5+8*3=40+24=64
            \boxed{64}
        \end{tcolorbox}          
    \caption{\textsc{Qwen}'s responses to a \textsc{GSM8K} test question at different training stages. The model trained with our curriculum technique learns to reduce the chain-of-thought process, either by simplifying steps or skipping them. In this example, the model implicitly halves the total number of glasses and computes the discounted glass price, without explicitly generating those intermediate steps.}
    \label{fig:math_response_appendix_1}
\end{figure}

\begin{figure}[H]
   \centering
    \small

        % Problem Statement
        \begin{tcolorbox}[colframe=red!10!black, colback=red!4, boxsep=1.0pt, sharpish corners = all,
        top=1pt,
        bottom=1pt,
        left=1pt,
        right=1pt,
        boxrule=1.2pt,
        before skip=0pt,
        after skip=1pt
        ]
            {\textbf{Test-set Math Question}}: Below is an instruction that describes a task. Write a response that appropriately completes the request. \#\#\# Instruction:
            A new program had 60 downloads in the first month. The number of downloads in the second month was three times as many as the downloads in the first month, but then reduced by 30\% in the third month. How many downloads did the program have total over the three months?\#\#\# Response: Let's think step by step.
        \end{tcolorbox}
        % Initial Response
        \begin{tcolorbox}[colframe=blue!10!black, colback=blue!4,
        boxsep=1.0pt,
        top=1pt,
        bottom=1pt,
        left=1pt,
        right=1pt,
        boxrule=1.2pt,
        before skip=2.5pt,
        after skip=1pt]
            {\textbf{Base Model Response}}:
            To determine the total number of downloads over the three months, we need to calculate the number of downloads for each month and then sum them up. Let's break it down:

1. **First Month:**
   The program had 60 downloads.

2. **Second Month:**
   The number of downloads in the second month was three times as many as the downloads in the first month. So, we calculate:
   \[
   3 \times 60 = 180
   \]
   Therefore, the program had 180 downloads in the second month.

3. **Third Month:**
   The number of downloads in the third month was reduced by 30% from the second month. First, we find 30% of 180:
   \[
   0.30 \times 180 = 54
   \]
   Then, we subtract this from the second month's downloads:
   \[
   180 - 54 = 126
   \]
   Therefore, the program had 126 downloads in the third month.

4. **Total Downloads:**
   To find the total number of downloads over the three months, we sum the downloads from each month:
   \[
   60 + 180 + 126 = 366
   \]

So, the total number of downloads over the three months is \boxed{366}.
            \end{tcolorbox}    
        % Response After N Training Steps
        \begin{tcolorbox}[colframe=blue!10!black, colback=blue!4,
        boxsep=1.0pt,
        top=1pt,
        bottom=1pt,
        left=1pt,
        right=1pt,
        boxrule=1.2pt,
        before skip=2.5pt,
        after skip=1pt]
            {\textbf{Fine-tuned Model Response (100K Episodes)}}:
            1. The number of downloads in the first month is 60.
            
2. The number of downloads in the second month is three times the first month, so it is \(3 \times 60 = 180\).

3. The number of downloads in the third month is reduced by 30\% from the second month, so it is \(180 - 0.3 \times 180 = 180 - 54 = 126\).

4. The total number of downloads over the three months is \(60 + 180 + 126 = 366\).

The total number of downloads over the three months is \(\boxed{366}\).
        \end{tcolorbox}    
        % Response After M Training Steps
        \begin{tcolorbox}[colframe=blue!10!black, colback=blue!4,
        boxsep=1.0pt,
        top=1pt,
        bottom=1pt,
        left=1pt,
        right=1pt,
        boxrule=1.2pt,
        before skip=2.5pt,
        after skip=1pt]
            {\textbf{Fine-tuned Model Response (200K Episodes)}}:
            First month: 60
            
Second month: 3 * 60 = 180

Third month: 180 * (1 - 0.30) = 126

Total: 60 + 180 + 126 = 366
\boxed{366}
        \end{tcolorbox}    
        % Response After M Training Steps
        \begin{tcolorbox}[colframe=blue!10!black, colback=blue!4,
        boxsep=1.0pt,
        top=1pt,
        bottom=1pt,
        left=1pt,
        right=1pt,
        boxrule=1.2pt,
        before skip=2.5pt,
        after skip=1pt]
            {\textbf{Fine-tuned Model Response (300K Episodes)}}:
            3*60=180
            
180-0.3*180=126

60+180+126=366
\boxed{366}
        \end{tcolorbox}    
        
         \begin{tcolorbox}[colframe=blue!10!black, colback=blue!4,
        boxsep=1.0pt,
        top=1pt,
        bottom=1pt,
        left=1pt,
        right=1pt,
        boxrule=1.2pt,
        before skip=2.5pt,
        after skip=10pt]
            {\textbf{Fine-tuned Model Response (500K Episodes)}}: 
            60+180+126=366
\boxed{366}
        \end{tcolorbox}          
    \caption{\textsc{Qwen}'s responses to a \textsc{GSM8K} test question at different training stages. The model trained with our curriculum technique learns to reduce the chain-of-thought process, either by simplifying steps or skipping them. In this example, the model implicitly computes the number of downloads for the second and third months, then directly proceeds to calculate the total.}
    \label{fig:math_response_appendix_2}
\end{figure}

    \clearpage
    % !TEX root =  main.tex
%%%%%%%%%%%%%%%%%%%%%%%%%%%%%%%%%%%%%
%%%%%%%%%%%%%%%%%%%%%%%%%%%%%%%%%%%%%
\section{Proof of Theorem\iftoggle{MainSuppContent}{~\ref{thm:curriculum}}{~1}}

\label{sec-app:proof}

\begin{proof}
We analyze Algorithm\iftoggle{MainSuppContent}{~\ref{alg:interaction}}{~1} in a single-task setting where the environment consistently selects the same task $\overline{x}$ (\iftoggle{MainSuppContent}{Line~\ref{alg:env-step}}{Line~3}). 

\paragraph{Analysis of the target curriculum strategy ($\beta = 0$).} Under the target curriculum strategy, the teacher always selects $\alpha_t = \alpha^*_{\overline{x}}$ at each time step $t$, ensuring that the learner consistently trains under the target setting. Initially, until a successful rollout (trajectory reaching the leftmost leaf node) is realized using the learner's current policy, no learning signal is provided to the learner due to the zero reward. Consequently, the learner's policy remains unchanged during this phase.

Once a successful rollout is realized, the learner receives a non-zero reward, which serves as a learning signal. In the best-case scenario, the learner immediately identifies the optimal policy $\pi^*$, given that the reward signal aligns perfectly with the target optimal trajectory. Thus, the sample or rollout complexity of this baseline is determined by the time required to achieve a successful trajectory using the initial random policy.

When sampling from a distribution, the expected number of trials needed to achieve the first success is given by the geometric distribution. If the probability of success in a single trial is $p$, the expected number of trials to get the first success is: $\Expect{\text{Number of trials}} = \frac{1}{p}$.

For a random policy, the probability of reaching the leftmost leaf node in a trajectory of depth $H$ is $p = \brr{\frac{1}{2}}^{H}$. Consequently, the expected number of rollouts required to achieve a successful trajectory is $\frac{1}{p} = 2^H$. Thus, the total number of steps taken by the learner is $t \cdot K \propto \mathcal{O}\brr{2^H}$. This highlights the exponential dependence of the baseline's complexity on the depth $H$, reflecting the difficulty of achieving success with this baseline.

\paragraph{Analysis of the curriculum strategy with $\beta_t = 0.5 \cdot (1 - \epsilon)^{t-1}$.} We consider the curriculum strategy defined in Eq. (\iftoggle{MainSuppContent}{~\eqref{eq:curriculum-optimization}}{1}) and set the threshold parameter as $\beta_t = 0.5 \cdot (1 - \epsilon)^{t-1}$, which guarantees that at each time step $t$ the teacher selects a cost budget parameter $\alpha_t$ that facilitates progressive learning by the agent. For the binary-tree MDP under consideration, the optimal value is given by $\max_{\pi, \alpha}V^{\pi}(\overline{x}; J^\alpha_{\overline{x}}) = \gamma^H$. Thus, provided that $\gamma$ is sufficiently large---specifically, if $\beta_t \leq \gamma^H$ for all $t$, which is ensured when $\gamma \geq 0.5^{\frac{1}{H}}$---we can simplify the curriculum strategy in Eq.~\eqref{eq:curriculum-optimization} to: 
\[
\alpha_t ~\gets~ \argmin_{\alpha \in \bss{0,\frac{1}{1-\gamma}}} \brr{\alpha - \alpha^*_{\overline{x}}}^2 \quad \text{subject to} \quad V^{\pi_t}(\overline{x}; J^\alpha_{\overline{x}}) \geq \beta_t .
\]

At time step $t = 1$, the teacher selects $\alpha_1$ such that:
\begin{align*}
C_{\overline{x}}\brr{y^{(0)}_0, \cdots, y^{(H)}_i} ~\leq~& \alpha_1, \quad \forall i \in \bcc{0, \dots, 2^{H-1} - 1} , \\
C_{\overline{x}}\brr{y^{(0)}_0, \cdots, y^{(H)}_i} ~>~& \alpha_1, \quad \forall i \in \bcc{2^{H-1}, \dots, 2^{H} - 1} .
\end{align*}
This choice guarantees that, in expectation, half of the rollouts $\xi$ produced by policy $\pi_1$ yield a reward $J^{\alpha_1}_{\overline{x}}\brr{\xi} = 1$ and the other half yield $J^{\alpha_1}_{\overline{x}}\brr{\xi} = 0$. Consequently, the expected value satisfies $V^{\pi_1}(\overline{x}; J^{\alpha_1}_{\overline{x}}) \geq \beta_1 = 0.5$. By the end of step $t = 1$, with a sufficiently large number $K_1$ of rollouts, the learner correctly identifies the optimal action at level $0$ with high probability. In particular, by using Hoeffding's inequality, the updated policy $\pi_2$ satisfies the following:
\[
\Prob{\abs{\pi_2 \brr{\textsc{Left} \mid y^{(0)}_0} - \pi^* \brr{\textsc{Left} \mid y^{(0)}_0}} > \epsilon} ~\leq~ 2 \cdot \exp \brr{-2 \cdot K_1 \cdot \epsilon^2} ~\leq~ \delta .
\]
Thus, with probability at least $1 - \delta$, if $K_1 \geq \frac{\ln \brr{\frac{2}{\delta}}}{2 \cdot \epsilon^2}$, then:
\[
\pi_2 \brr{\textsc{Left} \mid y^{(0)}_0} ~\geq~ \pi^* \brr{\textsc{Left} \mid y^{(0)}_0} - \epsilon ~=~ 1 - \epsilon .
\]

At step $t = 2$, the teacher selects $\alpha_2$ such that:
\begin{align*}
C_{\overline{x}}\brr{y^{(0)}_0, \cdots, y^{(H)}_i} ~\leq~& \alpha_2, \quad \forall i \in \bcc{0, \dots, 2^{H-2} - 1} \\
C_{\overline{x}}\brr{y^{(0)}_0, \cdots, y^{(H)}_i} ~>~& \alpha_2, \quad \forall i \in \bcc{2^{H-2}, \dots, 2^{H} - 1} .
\end{align*}
This choice ensures that, in expectation, a fraction $0.5 \cdot (1 - \epsilon)$ of the rollouts $\xi$ generated by policy $\pi_2$ receive a reward $J^{\alpha_2}_{\overline{x}}\brr{\xi} = 1$, with the remainder receiving $J^{\alpha_2}_{\overline{x}}\brr{\xi} = 0$. Therefore, the expected value satisfies $V^{\pi_2}(\overline{x}; J^{\alpha_2}_{\overline{x}}) \geq \beta_2 = 0.5 \cdot (1 - \epsilon)$. By the end of step $t = 2$, with a sufficiently large number $K_2$ of rollouts, the learner identifies the optimal action at level $1$ with high probability. In particular, by using Hoeffding's inequality, the updated policy $\pi_3$ satisfies the following:
\[
\Prob{\abs{\pi_3 \brr{\textsc{Left} \mid y^{(1)}_0} - \pi^* \brr{\textsc{Left} \mid y^{(1)}_0}} > \epsilon} ~\leq~ 2 \cdot \exp \brr{-2 \cdot K_2 \cdot \epsilon^2} ~\leq~ \delta .
\]
Thus, with probability at least $1 - \delta$, if $K_2 \geq \frac{\ln \brr{\frac{2}{\delta}}}{2 \cdot \epsilon^2 \cdot (1-\epsilon)}$ (where the factor $\frac{1}{1-\epsilon}$ compensates for the $\epsilon$-suboptimality accumulated up to level $0$), then:
\[
\pi_3 \brr{\textsc{Left} \mid y^{(i)}_0} ~\geq~ \pi^* \brr{\textsc{Left} \mid y^{(i)}_0} - \epsilon ~=~ 1 - \epsilon , \quad \forall i \in \bcc{0, 1} .
\]

More generally, at step $t = h$, the teacher selects $\alpha_h$ such that:
\begin{align*}
C_{\overline{x}}\brr{y^{(0)}_0, \cdots, y^{(H)}_i} ~\leq~& \alpha_h, \quad \forall i \in \bcc{0, \dots, 2^{H-h} - 1} \\
C_{\overline{x}}\brr{y^{(0)}_0, \cdots, y^{(H)}_i} ~>~& \alpha_h, \quad \forall i \in \bcc{2^{H-h}, \dots, 2^{H} - 1} .
\end{align*}
This choice guarantees that, in expectation, a fraction $0.5 \cdot (1 - \epsilon)^{h-1}$ of the rollouts produced by policy $\pi_h$ yield a reward $J^{\alpha_h}_{\overline{x}}\brr{\xi} = 1$, with the remaining rollouts yielding $J^{\alpha_h}_{\overline{x}}\brr{\xi} = 0$; hence, $V^{\pi_h}(\overline{x}; J^{\alpha_h}_{\overline{x}}) \geq \beta_h = 0.5 \cdot (1 - \epsilon)^{h-1}$. By the end of step $t = h$, with a sufficiently large number $K_h$ of rollouts, the learner identifies the optimal action at level $h-1$ with high probability. Consequently, the updated policy $\pi_{h + 1}$ satisfies the following:
\[
\pi_{h + 1} \brr{\textsc{Left} \mid y^{(i)}_0} ~\geq~ 1 - \epsilon, \quad \forall i \in \bcc{0, \dots, h-1} ,
\]
with probability at least $1 - \delta$, provided that $K_h \geq \frac{\ln \brr{\frac{2}{\delta}}}{2 \cdot \epsilon^2 \cdot (1-\epsilon)^{h - 1}}$ (where the factor $\frac{1}{(1-\epsilon)^{h - 1}}$ compensates for the $\epsilon$-suboptimality accumulated up to level $h-2$).  

This process continues iteratively until $t = H$, at which point the learner recovers a near-optimal policy $\pi_{H + 1}$ satisfying:
\[
\pi_{H + 1} \brr{\textsc{Left} \mid y^{(i)}_0} ~\geq~ 1 - \epsilon, \quad \forall i \in \bcc{0, \dots, H-1} ,
\]
with probability at least $1 - \delta$, provided that $K \geq \frac{\ln \brr{\frac{2}{\delta}}}{2 \cdot \epsilon^2 \cdot (1-\epsilon)^{H - 1}}$.

The total sample complexity for learning an $\epsilon$-suboptimal policy with probability at least $1-\delta$ is therefore:
\[
\sum_{t=1}^H K_t ~=~ \sum_{t=1}^H \frac{\ln \brr{\frac{2}{\delta}}}{2 \cdot \epsilon^2 \cdot (1-\epsilon)^{t - 1}} ~\leq~ H \cdot \frac{\ln \brr{\frac{2}{\delta}}}{2 \cdot \epsilon^2 \cdot (1-\epsilon)^{H-1}} . 
\]
In particular, if we choose $\epsilon = \frac{2}{H + 1}$, then $\sum_{t=1}^H K_t = \mathcal{O}\brr{H^3}$.
\end{proof}

    \clearpage 
    \section{Implementation Details}
\label{sec-app:implementation}

We provide additional implementation details for the experiments described in \iftoggle{MainSuppContent}{Section~\ref{sec:experiments}}{Section~4}.

\subsection{Compute Resources}
\label{subsec-app:resources}

We conducted the LLM experiments on a SLURM cluster comprising nodes with eight Nvidia H100 GPUs. Using those resources, the longest LLM experiment ran for approximately three days. We conducted the RL experiments on a cluster comprising nodes equipped with Intel Xeon Gold CPUs. 

\subsection{Training Details and Hyperparameters}
\label{subsec-app:implementation_details}

\textbf{RL experiments.} We use the REINFORCE algorithm to train an MLP policy \citep{williams1992simple}. For \envTree{}, the policy is tabular, while for \envSinglePuddle{} and \envMultiPuddle{}, we use a neural policy with two hidden layers. Policies are updated using a batch size of five episodes and the Adam optimizer with a learning rate of $3\text{e}{-4}$ across all environments. \envSinglePuddle{} and \envMultiPuddle{} have a maximum of 200 steps per episode. Since \envMultiPuddle{} is a multi-task environment, we generated 100 tasks by randomly positioning the agent and goal. To ensure the tasks are challenging, the agent and goal are placed on opposite sides of the lava squares. 

\textbf{LLM experiments.} As \textsc{MetaMath-Llemma-7B} serves as one of the base models in our LLM experiments, we adopt the prompt template \iftoggle{MainSuppContent}{(Figure~\ref{fig:instruction-template})}{(Figure~10)}, introduced by \cite{yu2024metamath}. It corresponds to a default chain-of-thought prompt that encourages step-by-step reasoning. This prompting template is used across all methods during fine-tuning. Additionally, for both \textsc{SVAMP} \citep{patel-etal-2021-nlp} and \textsc{GSM8K} \citep{cobbe2021gsm8k}, we filter out prompts exceeding 512 tokens.
To conduct the experiments, we use Huggingface's \texttt{Accelerate} and \texttt{Transformers} libraries. For RL fine-tuning, we employ the RLOO trainer \citep{ahmadian-etal-2024-back} from the \texttt{TRL} library, training with DeepSpeed ZeRO Stage 2 \citep{rajbhandari2020zero}. The RLOO trainer is an adapted version of the PPO trainer; since REINFORCE is a special case of PPO, where the number of PPO epochs and the number of mini-batches are set to 1. This formulation is adopted in the \texttt{TRL} implementation. All training parameters and details are reported in \iftoggle{MainSuppContent}{Figure~\ref{fig:training-hparams}}{Figure~11}. Remaining hyperparameters are set to default values and kept consistent across all baselines. Target cost budgets for all tasks are defined as a percentage of the base model’s original response length. Given the differing characteristics of models and datasets, we set the target cost budgets to 10\% of the original response tokens for the \textsc{Qwen} model, and to 20\% and 35\% for the \textsc{MetaMath} model on \textsc{SVAMP} and \textsc{GSM8K}, respectively. Regarding the LLM deployment comparison, we evaluate inference metrics using Ollama~\cite{ollama} for both base and fine-tuned models across multiple hardware configurations. All models are converted to GGUF format.

\begin{figure}[ht!]
\centering
\begin{tcolorbox}[width=0.95\linewidth, colback=gray!5, colframe=black!50, title=Chain-of-Thought Prompt]
\ttfamily
Below is an instruction that describes a task. Write a response that appropriately completes the request.

\textbf{\#\#\# Instruction:} \\
<question>

\textbf{\#\#\# Response:} Let's think step by step.
\end{tcolorbox}
\caption{Instruction template that encourages step-by-step reasoning used to format the input question. The dataset's math question replaces the \texttt{<question>} placeholder. This is the default prompt used across all methods during fine-tuning.}
\label{fig:instruction-template}
\end{figure}

\begin{figure}[ht!]
\centering
\begin{minipage}{\textwidth}
\centering
\begin{tabular}{|>{\centering\arraybackslash}p{5.5cm}|>{\centering\arraybackslash}p{3.5cm}|>{\centering\arraybackslash}p{3.5cm}|}
\hline
\textbf{Parameter} & \textbf{\textsc{SVAMP}} & \textbf{\textsc{GSM8K}} \\
\hline
Fine-tuning Method & \multicolumn{2}{c|}{PEFT with LoRA} \\
LoRa $\alpha$ & \multicolumn{2}{c|}{96} \\
LoRa $r$ & \multicolumn{2}{c|}{96} \\
LoRa Dropout & \multicolumn{2}{c|}{0.1} \\
Target Modules & \multicolumn{2}{c|}{\texttt{all linear}} \\
Per-device Train Batch Size & 4 & 2 \\
Per-device Eval Batch Size & \multicolumn{2}{c|}{4} \\
Gradient Accumulation Steps & \multicolumn{2}{c|}{8} \\
Local Rollout Forward Batch Size & \multicolumn{2}{c|}{8} \\
Learning Rate & \multicolumn{2}{c|}{$3 \times 10^{-6}$} \\
KL Coefficient & \multicolumn{2}{c|}{0.0001} \\
Max Response Length & 350 & 512 \\
%Total Episodes & 250{,}000 & 500{,}000 \\
%Target Cost (Token Budget) & 20 & 40 \\
\hline
\end{tabular}
\caption{Training details and hyperparameters used across all baselines for \textsc{SVAMP} and \textsc{GSM8K}.}
\label{fig:training-hparams}
\end{minipage}
\end{figure}

\textbf{Variants of prompting templates.} We experiment with different prompting templates to examine whether it is possible to improve the performance of large language models under test-time constraints without any fine-tuning. We explore the following variants of prompting templates. In \iftoggle{MainSuppContent}{Figure~\ref{fig:instruction-template-answer}}{Figure~12}, we present a prompt that asks the model to provide an answer directly. In \iftoggle{MainSuppContent}{Figure~\ref{fig:instruction-template-short}}{Figure~13}, we present a prompt that specifically requests a short answer. In \iftoggle{MainSuppContent}{Figure~\ref{fig:instruction-template-target}}{Figure~14}, we show a prompt that instructs the model to generate an answer within a specified token budget. The method \OnlyAnsPrompt{} corresponds to the prompting template that achieved the highest test performance.

\begin{figure}[ht!]
\centering
\begin{tcolorbox}[width=0.95\linewidth, colback=gray!5, colframe=black!50, title=\OnlyAnsPrompt{}]
\ttfamily
Below is an instruction that describes a task. Write a response that appropriately completes the request. Provide a response that contains only the answer.

\textbf{\#\#\# Instruction:} \\
<question>

\textbf{\#\#\# Response:}
\end{tcolorbox}
\caption{Instruction template used to format the input questions to induce direct answers without any chain-of-thought reasoning. The dataset's math question replaces the \texttt{<question>} placeholder.}
\label{fig:instruction-template-answer}
\end{figure}

\begin{figure}[ht!]
\centering
\begin{tcolorbox}[width=0.95\linewidth, colback=gray!5, colframe=black!50, title=\textsc{ShortAnsPrompt}]
\ttfamily
Below is an instruction that describes a task. Write a response that appropriately completes the request. Provide a response that is as short as possible.

\textbf{\#\#\# Instruction:} \\
<question>

\textbf{\#\#\# Response:}
\end{tcolorbox}
\caption{Instruction template used to format the input questions to encourage short responses. The dataset's math question replaces the \texttt{<question>} placeholder.}
\label{fig:instruction-template-short}
\end{figure}

\begin{figure}[ht!]
\centering
\begin{tcolorbox}[width=0.95\linewidth, colback=gray!5, colframe=black!50, title=\textsc{AnsUnderConstraint}]
\ttfamily
Below is an instruction that describes a task. Write a response that appropriately completes the request. Provide a response where the maximum number of tokens is \{target\_tokens\}. 

\textbf{\#\#\# Instruction:} \\
<question>

\textbf{\#\#\# Response:}
\end{tcolorbox}
\caption{Instruction template used to format the input questions to encourage answers below a maximum number of tokens that correspond to the target constraint. The dataset's math question replaces the \texttt{<question>} placeholder.}
\label{fig:instruction-template-target}
\end{figure}

\vspace{1cm}

\textbf{Implementation details about our curriculum strategy \algCurriculum{} (\iftoggle{MainSuppContent}{Section~\ref{sec:prox-corl-curr}}{Section~3}).} We apply our curriculum training following Algorithm~\ref{alg:interaction}, and update the training cost budget $\alpha_t$ using Algorithm~\ref{alg:curriculum}. As shown, the only hyperparameter is the performance threshold $\beta$. According to Line~\ref{alg:beta-set-step} of Algorithm~\ref{alg:curriculum}, at each step and for each task $x_t$, the value of $\beta$ is determined by $\min \{\beta, V^{\pi_t}(x_t; J^{\frac{1}{1-\gamma}}_{x_t})\}$, where $V^{\pi_t}(x_t; J^{\frac{1}{1-\gamma}}_{x_t})$ denotes the maximum achievable performance for task $x_t$. We set $\beta = 0.5$ across all environments without any tuning. To avoid performing $K$ rollouts per task in Algorithm~1, we use a buffer to collect rollouts during online training. Several design choices are possible for the buffer. Our strategy, \algCurriculum{}, employs a fixed-size buffer that stores $K$ rollouts per task. Once the buffer for a given task is full, we update the corresponding $\alpha$ value using a binary search procedure. The buffer then continues to store new rollouts in a FIFO manner, and the $\alpha$ value is recomputed accordingly. The binary search is conducted over a predefined range: the upper bound is set to the maximum trajectory or response length observed in the environment, and the lower bound is defined as the per-task target cost. In our experiments, we use $K = 10$ for all RL and LLM settings, except for \textsc{SVAMP}-\textsc{Qwen}, where $K = 15$ is used.

    %\clearpage
    \section{Additional Experimental Evaluation}
\label{sec-app:experiments}

\subsection{PPO Experiments}
\label{subsec-app:ppo_experiments}
We selected REINFORCE~\cite{williams1992simple} as the primary learning algorithm for our main experiments because it closely aligns with the theoretical analysis. Moreover, REINFORCE-style algorithms have recently gained traction in fine-tuning LLMs, such as in RLOO~\cite{ahmadian-etal-2024-back}, REINFORCE++, and GRPO, due to their simplicity, efficiency, and stability. Nevertheless, we conducted additional experiments using PPO~\cite{schulman2017proximal} across all RL environments, and report the results in \iftoggle{MainSuppContent}{Figure~\ref{fig:app:ppo_performance}}{Figure~15}. We observe similar results with REINFORCE, indicating robustness of our technique to the choice of learning algorithm.

% !TEX root =  main.tex
%%%%%%%%%%%%%%%%%%%%%%%%%%%%%%%%%%%%%
%%%%%%%%%%%%%%%%%%%%%%%%%%%%%%%%%%%%%
%%%%%%%%%%%%%%%%%%%%%%%%%%%%%%%%%%%%%
\begin{figure*}[t!]
\centering
    \begin{subfigure}[b]{\textwidth}
    \centering
    {
        \includegraphics[height=0.49cm]{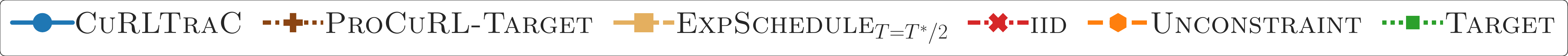}
    }
    \end{subfigure}
    %%%%%%%%%%%%%%%%%
    \begin{subfigure}[b]{.33\textwidth}
    \centering
    {
        \includegraphics[height=3.80cm]{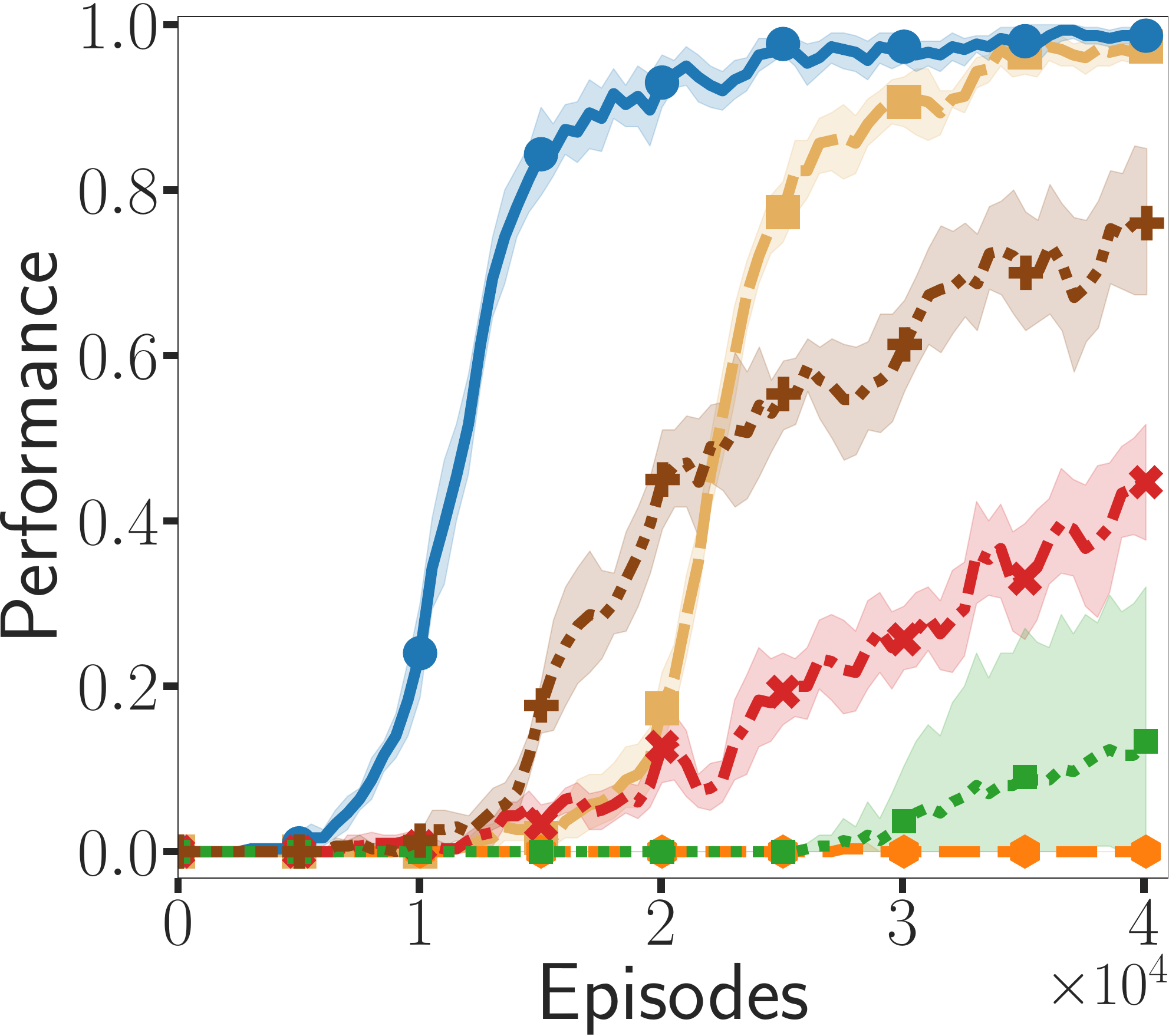}
        \caption{\envTree} 
        \label{fig:app:ppo_results_conv.1}
    }
    \end{subfigure}
    %%%%%%%%%%%%%%%%%
    \begin{subfigure}[b]{.33\textwidth}
    \centering
    {
        \includegraphics[height=3.80cm]{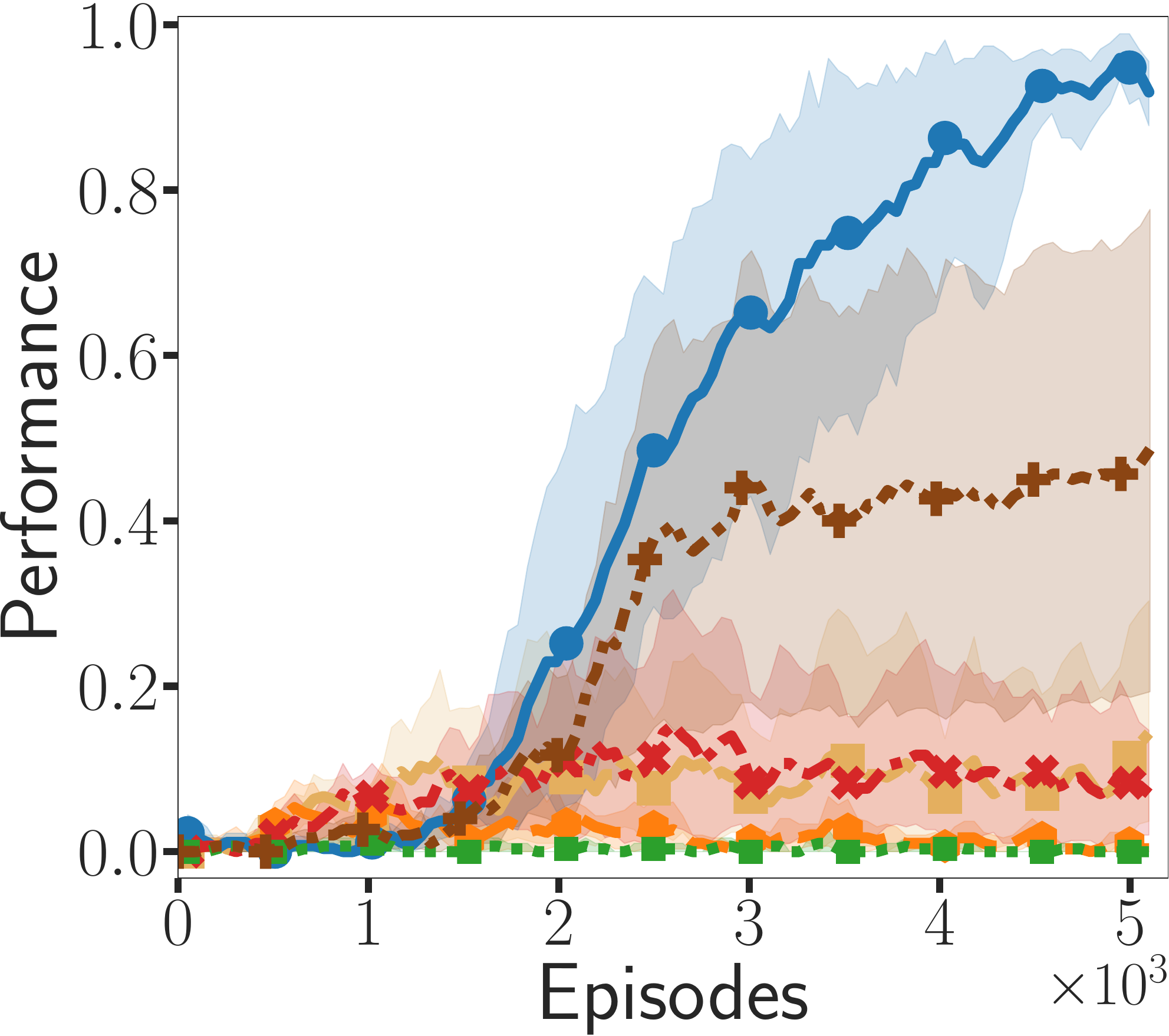}
        \caption{\envSinglePuddle} 
        \label{fig:app:ppo_results_conv.2}
    }
    \end{subfigure}
    % %%%%%%%%%%%%%%%%% 
     %%%%%%%%%%%%%%%%%
    \begin{subfigure}[b]{.32\textwidth}
    \centering
     {
        \includegraphics[height=3.80cm]{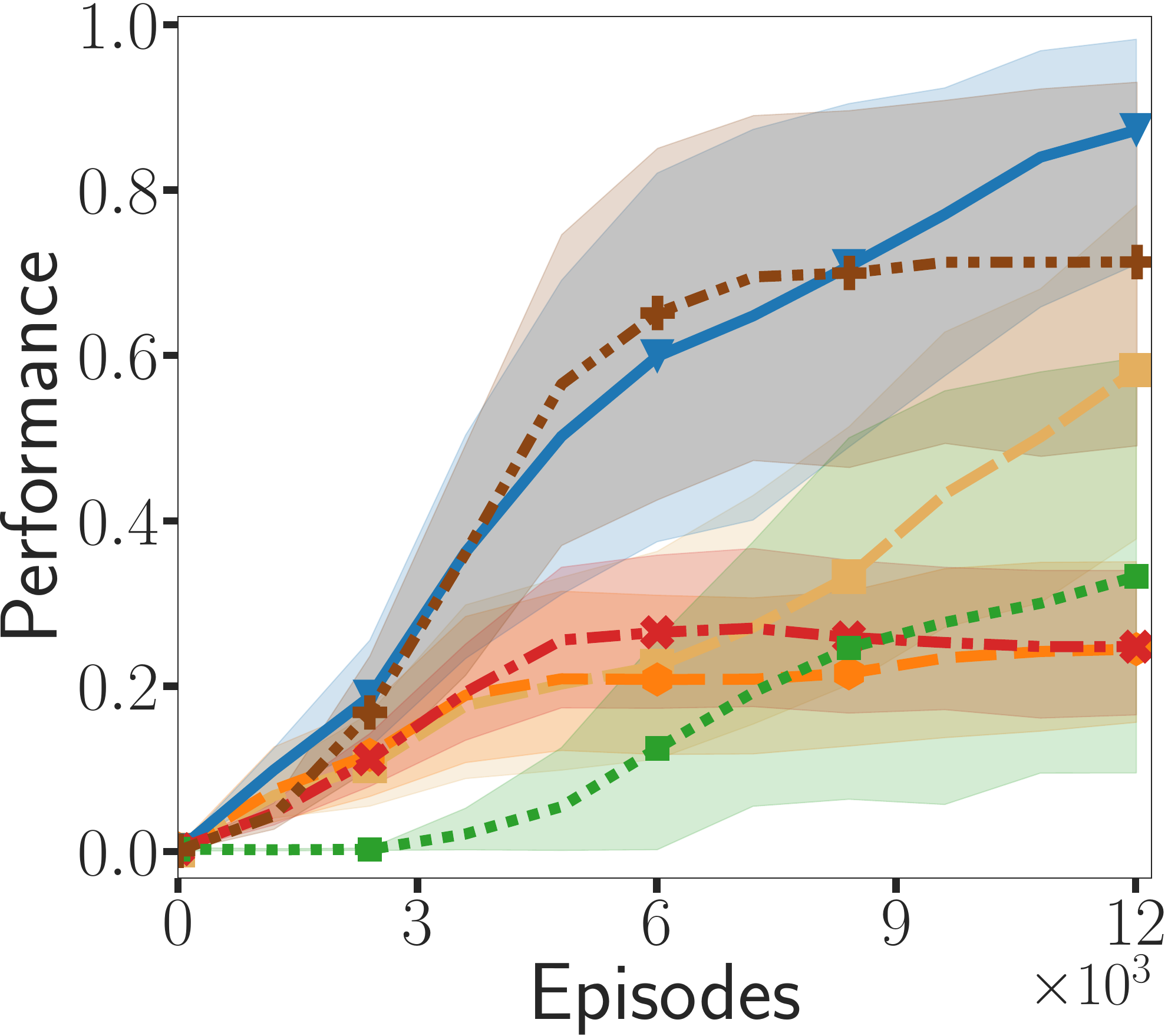}
        \caption{\envMultiPuddle} 
        \label{fig:app:ppo_results_conv.3}
    }
    \end{subfigure}
	\caption{\looseness-1Performance comparison of RL agents trained with PPO as learning algorithm, measured by mean return (with $95\%$ confidence intervals for $10$ random runs) under test-time constraints.} %For Fig.\ref{fig:fig:results_conv.5} only one seed is performed for \algExp{}.}
     
     %}
    \label{fig:app:ppo_performance}
    %\vspace{2mm}
    %\vspace{-5mm}
\end{figure*}
%%%%%%%%%%%%%%%%%%%%%%%%%%%%%%%%%%%%%

\subsection{Parameter Sensitivity Analysis}
\label{subsec-app:ablation}
The performance threshold $\beta$ is the primary hyperparameter of our strategy, \algCurriculum{}. As mentioned earlier, we use a standard value of $\beta=0.5$, which aligns with the idea of intermediate difficulty, and keep it fixed across all settings. In general, smaller values of $\beta$ allow for a faster reduction in the training budget $\alpha$, leading to quicker convergence to the target constraint. Conversely, larger values of $\beta$ slow down this progression. 

% !TEX root =  main.tex
%%%%%%%%%%%%%%%%%%%%%%%%%%%%%%%%%%%%%
%%%%%%%%%%%%%%%%%%%%%%%%%%%%%%%%%%%%%
%%%%%%%%%%%%%%%%%%%%%%%%%%%%%%%%%%%%%
\begin{figure*}[t!]
\centering
    \begin{subfigure}[b]{\textwidth}
    \centering
    \hspace{0.37cm}
    {
        \includegraphics[height=0.85cm]{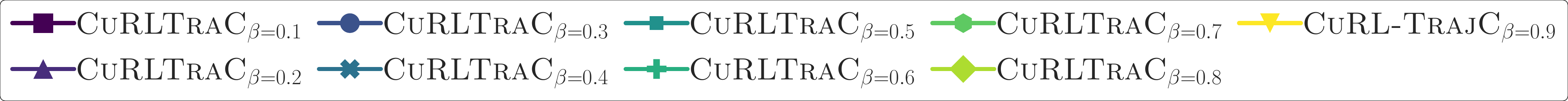}
    }
    \end{subfigure}
    %%%%%%%%%%%%%%%%%
    \begin{subfigure}[b]{.33\textwidth}
    \centering
    {
        \includegraphics[height=3.85cm]{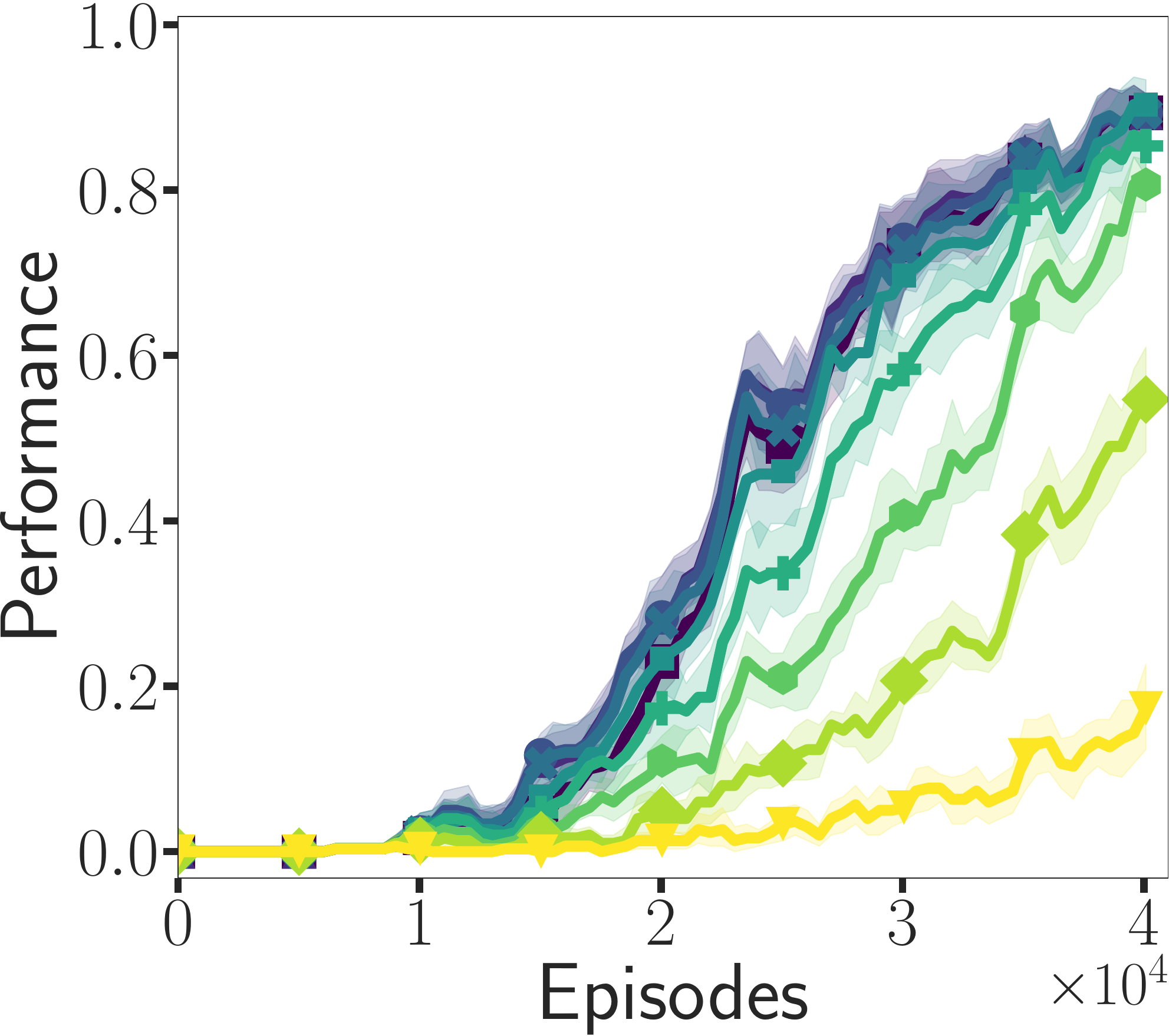}
        \caption{Performance} 
        \label{fig:app:results_tree_ablation.1}
    }
    \end{subfigure}
    %%%%%%%%%%%%%%%%%
    \begin{subfigure}[b]{.33\textwidth}
    \centering
    {
        \includegraphics[height=3.85cm]{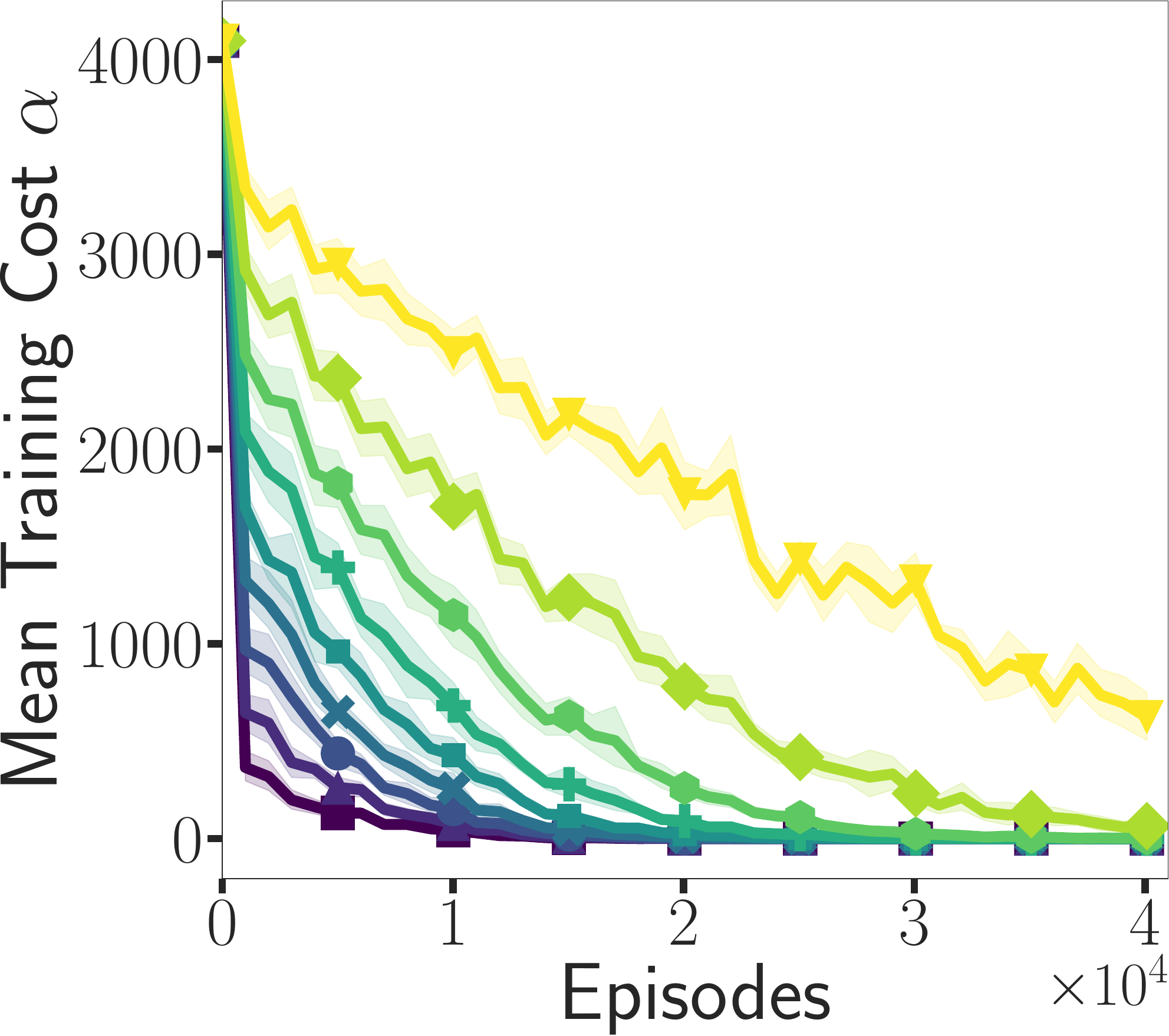}
        \caption{Training cost $\alpha$} 
        \label{fig:app:results_tree_ablation.2}
    }
    \end{subfigure}
    % %%%%%%%%%%%%%%%%% 
     %%%%%%%%%%%%%%%%%
    \begin{subfigure}[b]{.32\textwidth}
    \centering
     {
        \includegraphics[height=3.85cm]{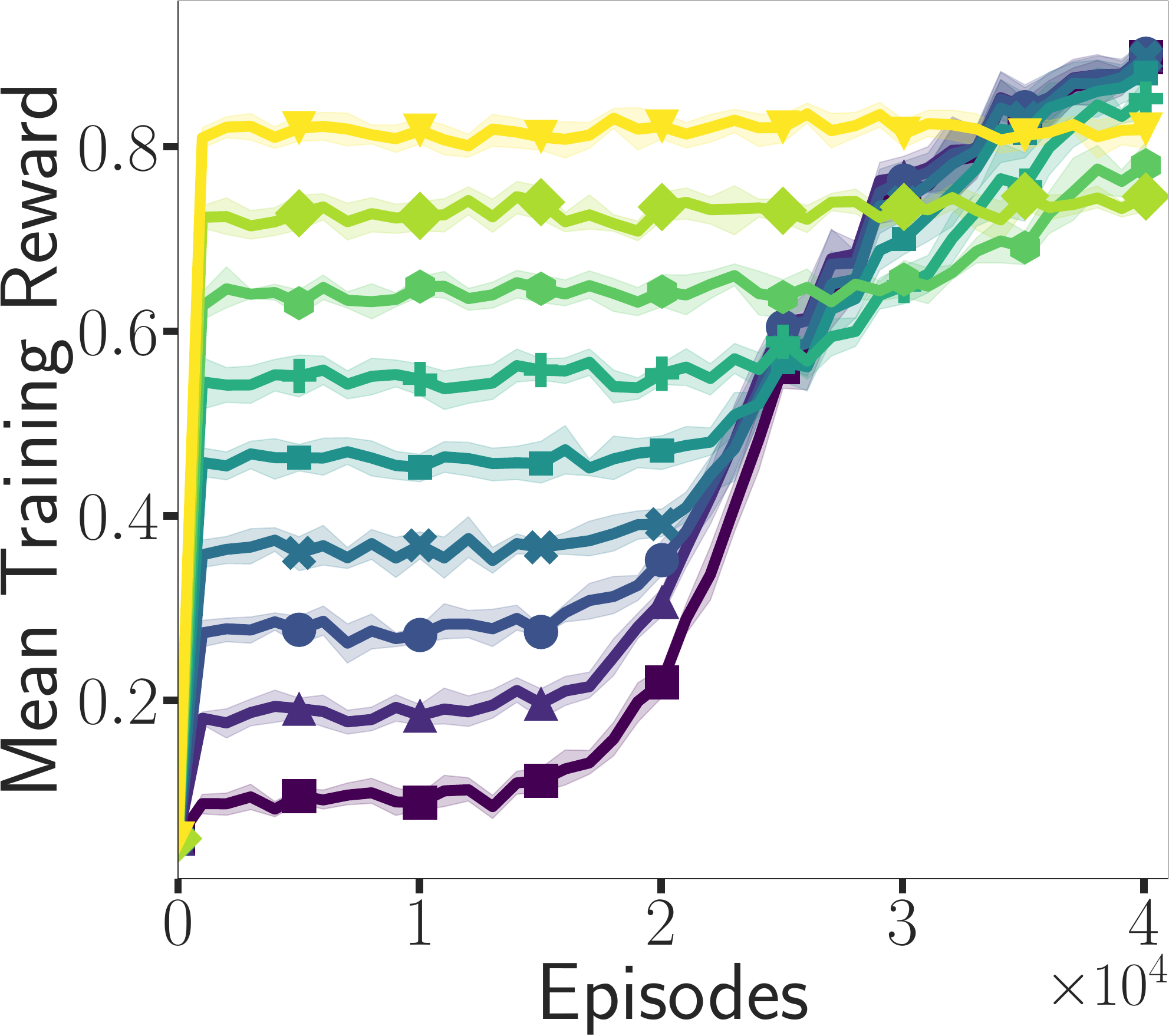}
        \caption{Training Reward} 
        \label{fig:app:results_tree_ablation.3}
    }
    \end{subfigure}
	\caption{\looseness-1Sensitivity analysis of performance threshold parameter $\beta$ for \algCurriculum{} strategy in \envTree{} environment. (a) shows the performance measured by mean return under test-time constraints. (b) shows the progression of the cost $\alpha$ during training. (c) shows the average observed reward during training.} 
    \label{fig:app:tree_ablation}
    %\vspace{2mm}
    %\vspace{-5mm}
\end{figure*}
%%%%%%%%%%%%%%%%%%%%%%%%%%%%%%%%%%%%%
% !TEX root =  main.tex
%%%%%%%%%%%%%%%%%%%%%%%%%%%%%%%%%%%%%
%%%%%%%%%%%%%%%%%%%%%%%%%%%%%%%%%%%%%
%%%%%%%%%%%%%%%%%%%%%%%%%%%%%%%%%%%%%
\begin{figure*}[t!]
\centering
    \begin{subfigure}[b]{\textwidth}
    \centering
    \hspace{0.36cm}
    {
        \includegraphics[height=0.85cm]{figs/appendix/ablation/legend_only_ABLATION.pdf}
    }
    \end{subfigure}
    %%%%%%%%%%%%%%%%%
    \begin{subfigure}[b]{.33\textwidth}
    \centering
    {
        \includegraphics[height=3.85cm]{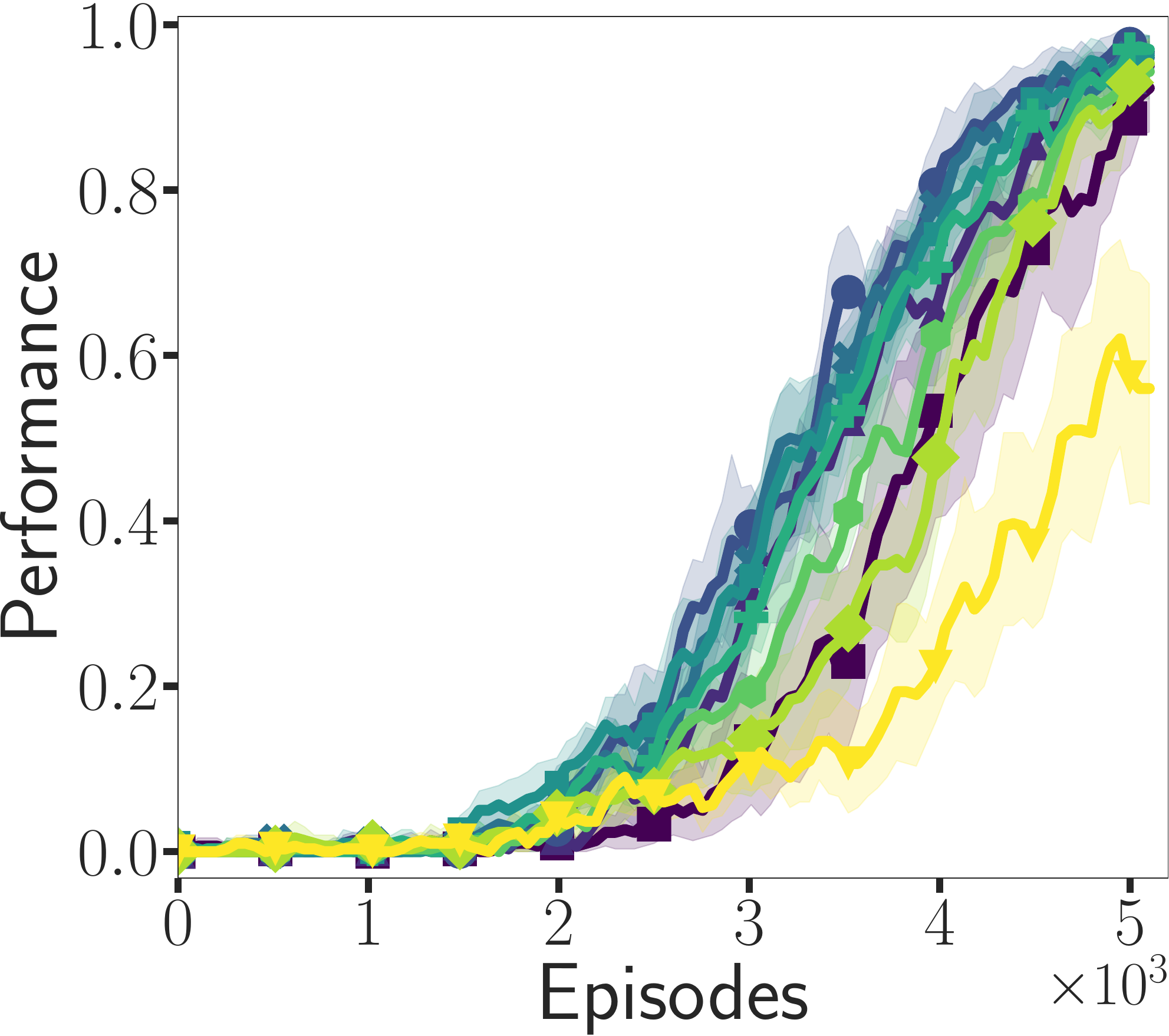}
        \caption{Performance} 
        \label{fig:app:results_single_ablation.1}
    }
    \end{subfigure}
    %%%%%%%%%%%%%%%%%
    \begin{subfigure}[b]{.33\textwidth}
    \centering
    {
        \includegraphics[height=3.85cm]{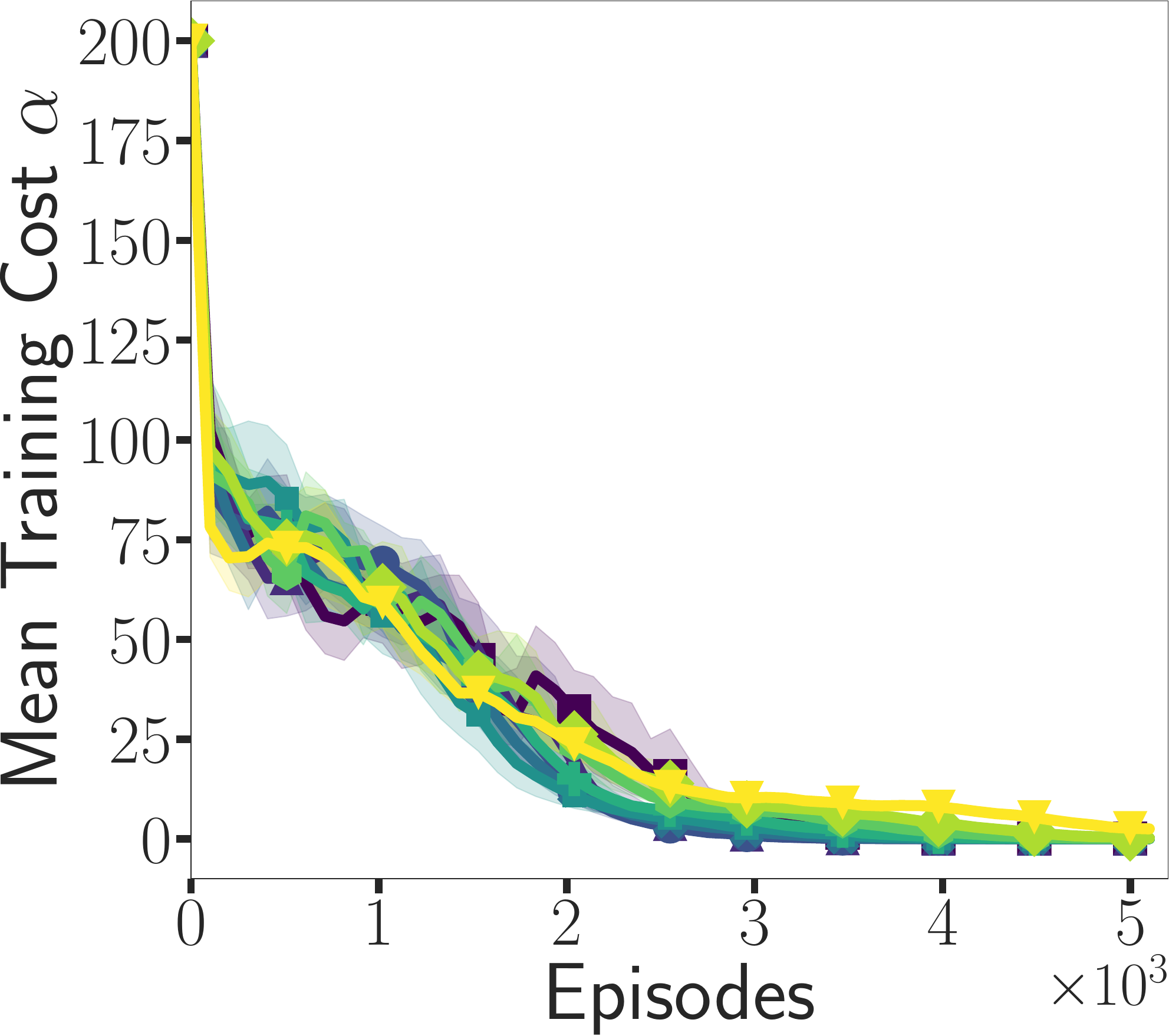}
        \caption{Training cost $\alpha$} 
        \label{fig:app:results_single_ablation.2}
    }
    \end{subfigure}
    % %%%%%%%%%%%%%%%%% 
     %%%%%%%%%%%%%%%%%
    \begin{subfigure}[b]{.32\textwidth}
    \centering
     {
        \includegraphics[height=3.85cm]{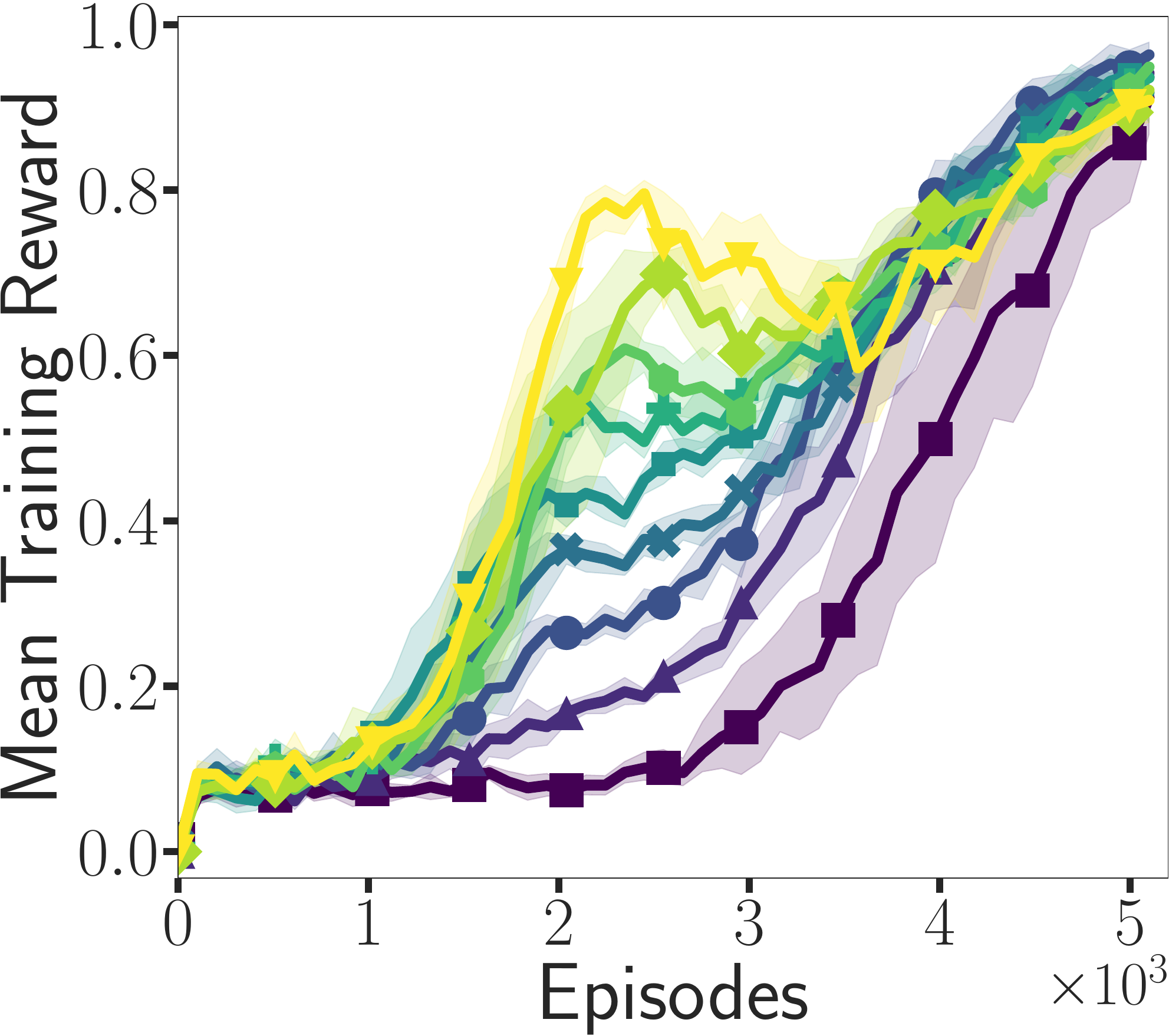}
        \caption{Training Reward} 
        \label{fig:app:results_single_ablation.3}
    }
    \end{subfigure}
	\caption{\looseness-1Sensitivity analysis of performance threshold parameter $\beta$ for \algCurriculum{} strategy in \envSinglePuddle{} environment. (a) shows the performance measured by mean return under test-time constraints. (b) shows the progression of the cost $\alpha$ during training. (c) shows the average observed reward during training.}
    \label{fig:app:single_ablation}
\end{figure*}
%%%%%%%%%%%%%%%%%%%%%%%%%%%%%%%%%%%%%
% !TEX root =  main.tex
%%%%%%%%%%%%%%%%%%%%%%%%%%%%%%%%%%%%%
%%%%%%%%%%%%%%%%%%%%%%%%%%%%%%%%%%%%%
%%%%%%%%%%%%%%%%%%%%%%%%%%%%%%%%%%%%%
\begin{figure*}[t!]
\centering
    \begin{subfigure}[b]{\textwidth}
    \centering
    \hspace{0.35cm}
    {
    \includegraphics[height=0.85cm]{figs/appendix/ablation/legend_only_ABLATION.pdf}
    }
    \end{subfigure}
    %%%%%%%%%%%%%%%%%
    \begin{subfigure}[b]{.33\textwidth}
    \centering
    {
        \includegraphics[height=3.85cm]{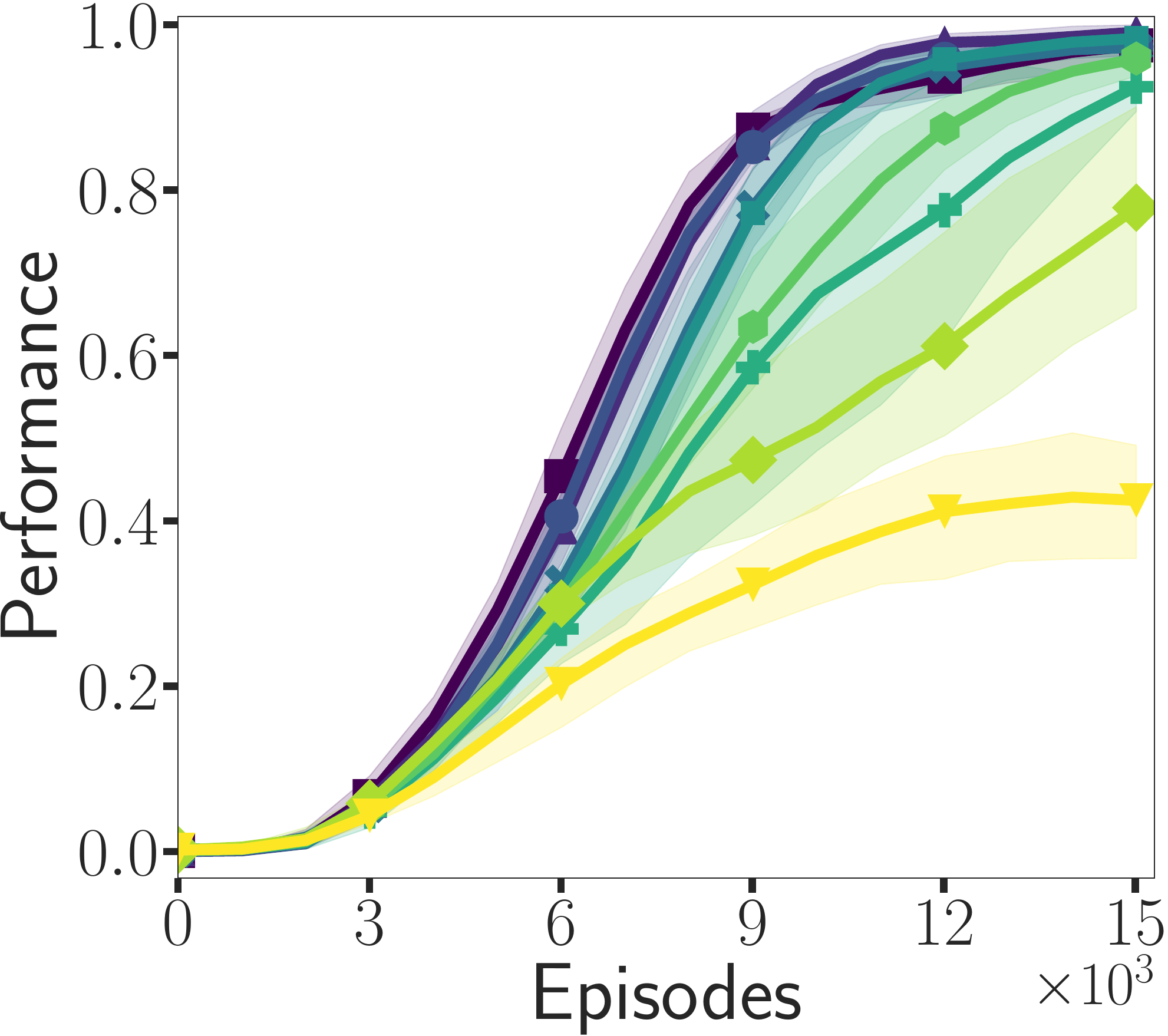}
        \caption{Performance} 
        \label{fig:app:results_multi_ablation.1}
    }
    \end{subfigure}
    %%%%%%%%%%%%%%%%%
    \begin{subfigure}[b]{.33\textwidth}
    \centering
    {
        \includegraphics[height=3.85cm]{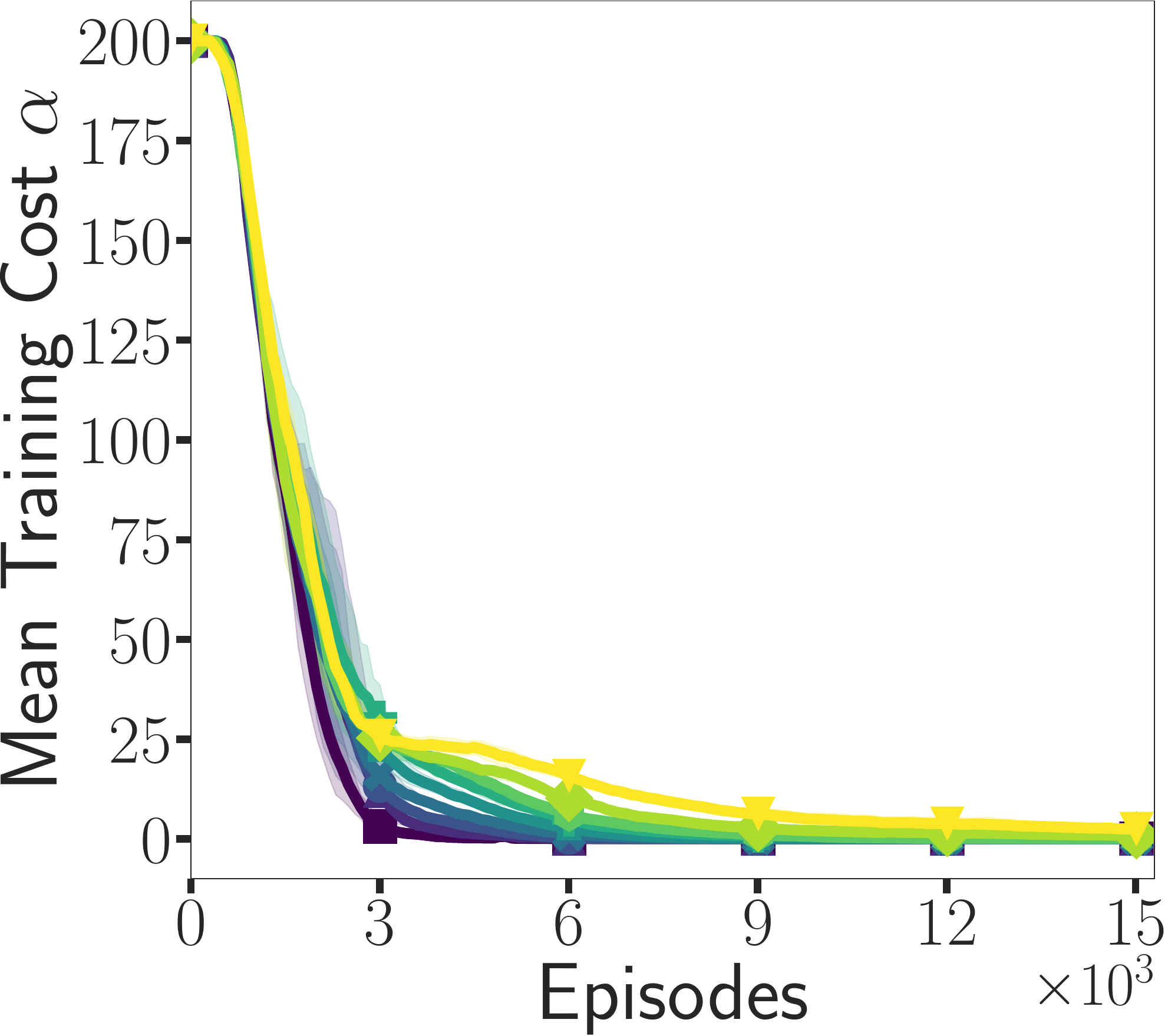}
        \caption{Training cost $\alpha$} 
        \label{fig:app:results_multi_ablation.2}
    }
    \end{subfigure}
    % %%%%%%%%%%%%%%%%% 
     %%%%%%%%%%%%%%%%%
    \begin{subfigure}[b]{.32\textwidth}
    \centering
     {
        \includegraphics[height=3.85cm]{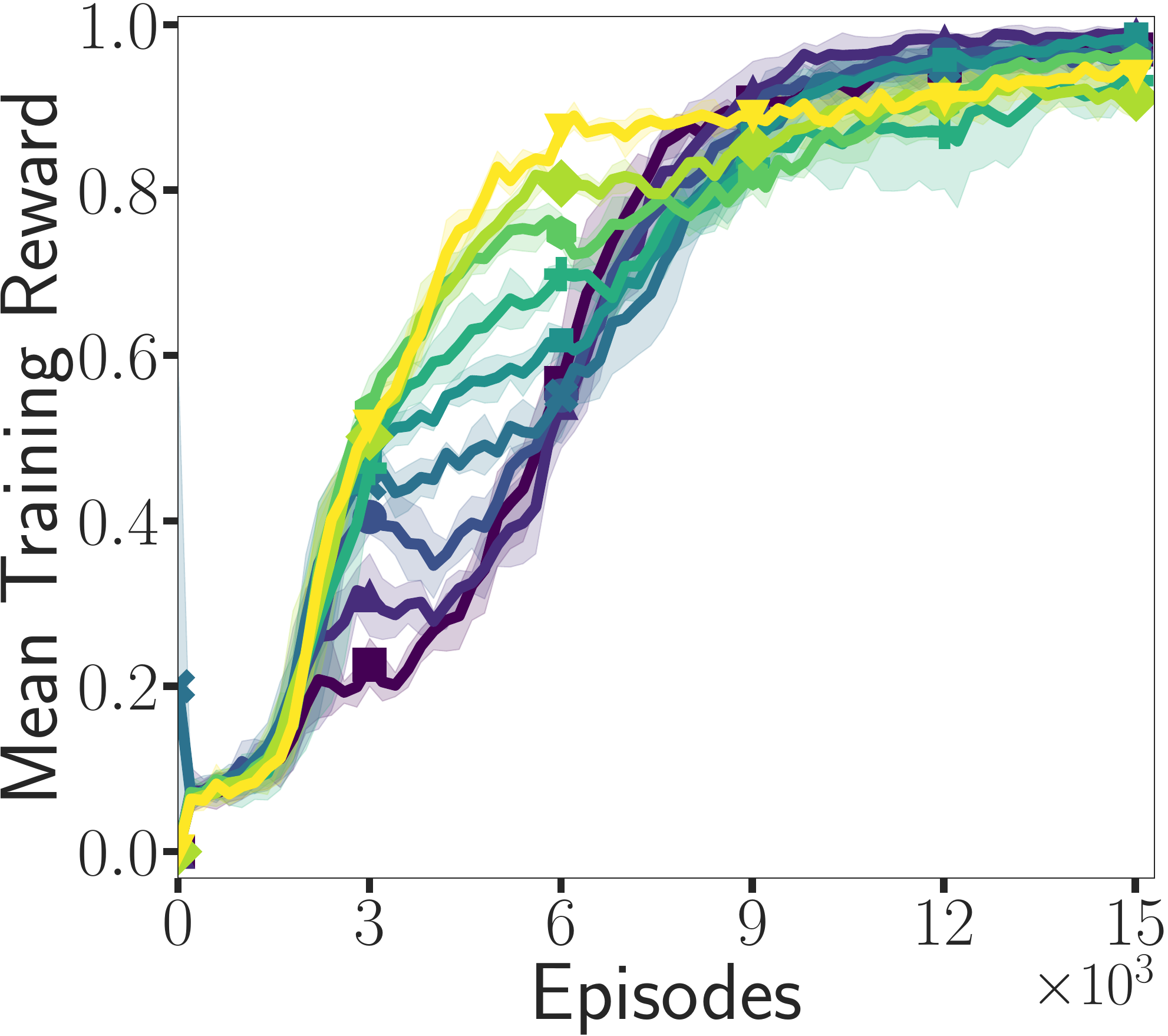}
        \caption{Training Reward} 
        \label{fig:app:results_multi_ablation.3}
    }
    \end{subfigure}
	\caption{\looseness-1Sensitivity analysis of performance threshold parameter $\beta$ for \algCurriculum{} strategy in \envMultiPuddle{} environment. (a) shows the performance measured by mean return under test-time constraints. (b) shows the progression of the cost $\alpha$ during training. (c) shows the average observed reward during training.}
    \label{fig:app:results_multi_ablation}
\end{figure*}
%%%%%%%%%%%%%%%%%%%%%%%%%%%%%%%%%%%%%

To evaluate the robustness of \algCurriculum{}, we conduct a sensitivity analysis across all RL environments. In Figure~\iftoggle{MainSuppContent}{\ref{fig:app:tree_ablation}}{16}, we report performance under test constraints for \algCurriculum{} with the parameter $\beta$ ranging from $0.1$ to $0.9$, where $\beta = 0.5$ corresponds to the default setting used in all experiments. Note that $\beta = 0$ and $\beta = 1$ correspond to the \algTarget{} and \algNoConstraint{} baselines, respectively, and are therefore excluded. Figure~\iftoggle{MainSuppContent}{\ref{fig:app:results_tree_ablation.1}}{16a} shows that \algCurriculum{} is highly robust, performing well across a wide range of $\beta$ values. As expected, increasing $\beta$ slows down the change in training cost budget $\alpha$ (Figure~\iftoggle{MainSuppContent}{\ref{fig:app:results_tree_ablation.2}}{16b}), which in turn affects performance. In Figure~\iftoggle{MainSuppContent}{\ref{fig:app:results_tree_ablation.3}}{16c}, we observe how \algCurriculum{} controls the observed reward during training, maintaining it below the respective threshold $\beta$. Additionally, we conduct sensitivity analysis across all RL environments. In Figure~\iftoggle{MainSuppContent}{\ref{fig:app:single_ablation}}{17}, we present the results for the \envSinglePuddle{} environment, where the robustness of \algCurriculum{} is evident (Figure~\iftoggle{MainSuppContent}{\ref{fig:app:results_single_ablation.1}}{17a}). As expected, increasing $\beta$ slows convergence to the test-time target constraints. The training cost trends are similar to those observed in \envTree{}, though the differences are less pronounced due to the increased complexity of \envSinglePuddle{} (Figure~\iftoggle{MainSuppContent}{\ref{fig:app:results_single_ablation.2}}{17b}). The choice of $\beta$ directly influences the level of training reward observed (Figure~\iftoggle{MainSuppContent}{\ref{fig:app:results_single_ablation.3}}{17c}). Similarly, for multi-task environments, \algCurriculum{} behaves as expected across different $\beta$ values (Figure~\iftoggle{MainSuppContent}{\ref{fig:app:results_multi_ablation}}{18}).

\subsection{Additional Baselines}
\label{subsec-app:additional_experiments}

To provide a more thorough evaluation of our curriculum, we incorporate several additional baselines.

\textbf{Variants of \algExp.} As illustrated in Figure~\iftoggle{MainSuppContent}{\ref{fig:results_curr}}{5}, our curriculum strategy naturally produces per-task $\alpha$ schedules that, on average, follow a form of exponential decay. This observation motivates the inclusion of a related baseline, \algExp{}, which applies a fixed exponential decay schedule \citep{shperberg2024relaxed}. The intuition is that a well-tuned exponential decay should perform comparably to our strategy. We evaluate variants of this strategy with different decay lengths $T = \{T^*, T^*/2, 50000, 5000\}$, where $T^*$ denotes the total number of training episodes per environment. These correspond to \algExpfull{}, \algExphalf{}, \algExpfifty{}, and \algExpfive{}, respectively. Figure~\iftoggle{MainSuppContent}{\ref{fig:app:exp_decay_comp}}{19} compares our strategy, \algCurriculum{}, with the variants of \algExp{} across RL environments. Our approach consistently achieves the strongest and most stable empirical performance. In contrast, the fixed exponential schedules exhibit instability, yielding competitive results in some environments but failing to generalize across all. Among the exponential variants, \algExphalf{} demonstrates comparatively greater stability, albeit with lower overall performance than \algCurriculum{}. This suggests that incorporating domain knowledge, such as the total number of training episodes $T^*$, plays a critical role in tuning the decay schedule effectively.

% !TEX root =  main.tex
%%%%%%%%%%%%%%%%%%%%%%%%%%%%%%%%%%%%%
%%%%%%%%%%%%%%%%%%%%%%%%%%%%%%%%%%%%%
%%%%%%%%%%%%%%%%%%%%%%%%%%%%%%%%%%%%%
\begin{figure*}[t!]
\centering
    \begin{subfigure}[b]{\textwidth}
    \centering
    {
        \includegraphics[height=0.44cm]{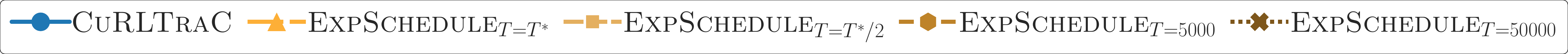}
    }
    \end{subfigure}
    %%%%%%%%%%%%%%%%%
    \begin{subfigure}[b]{.33\textwidth}
    \centering
    {
        \includegraphics[height=3.00cm]{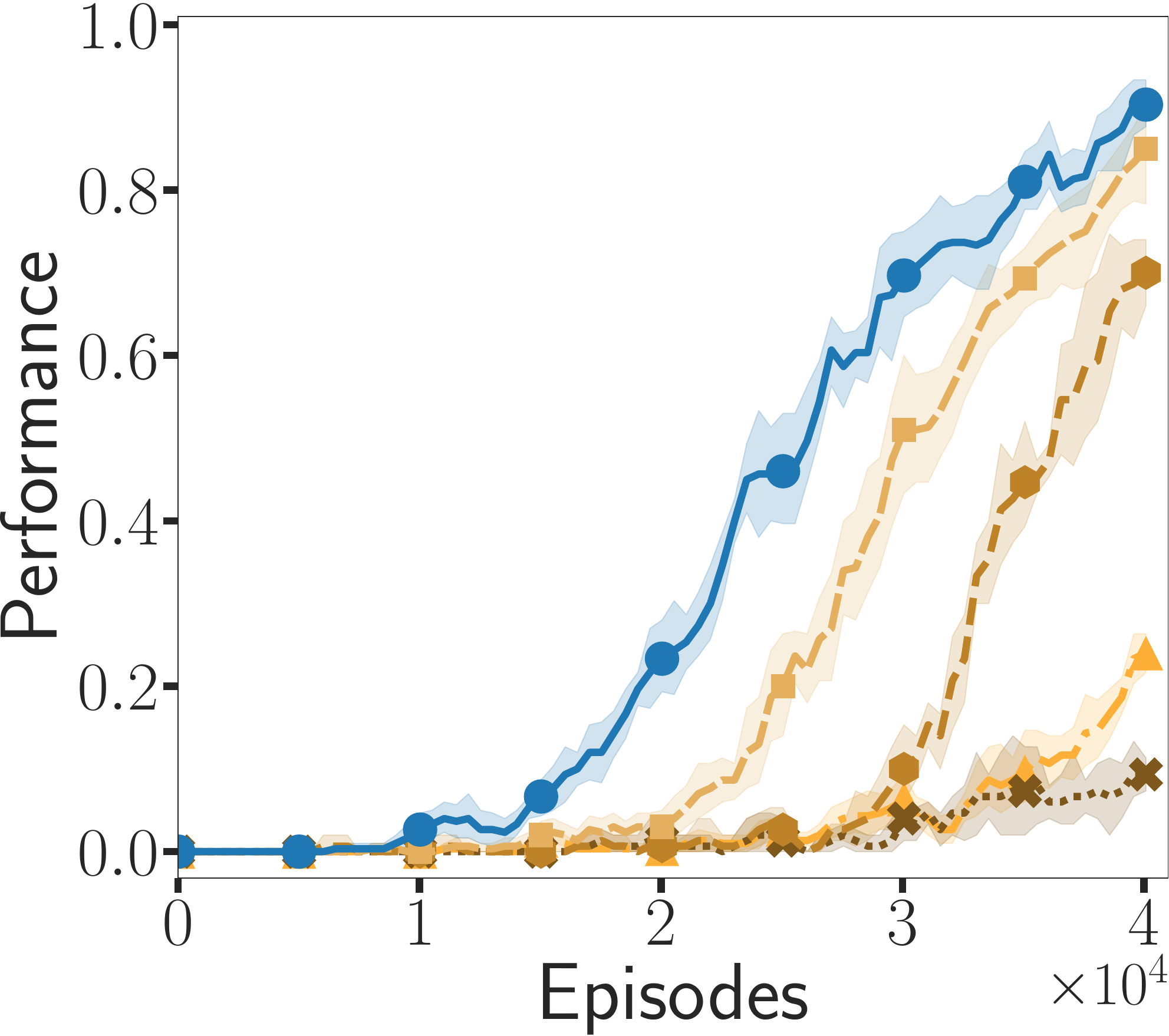}
        \caption{\envTree} 
        \label{fig:app:exp_decay_results_conv.1}
    }
    \end{subfigure}
    %%%%%%%%%%%%%%%%%
    \begin{subfigure}[b]{.33\textwidth}
    \centering
    {
        \includegraphics[height=3.00cm]{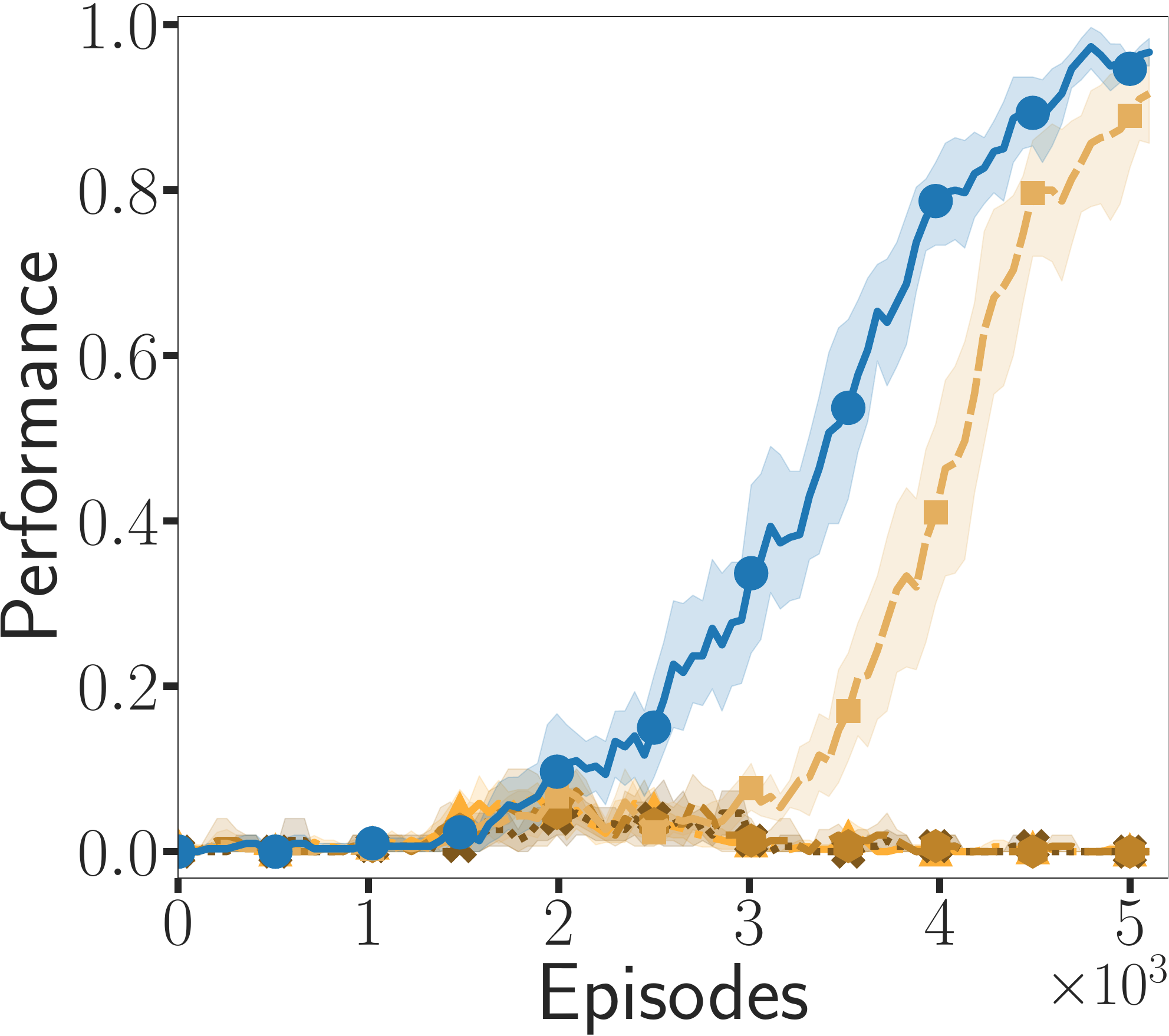}
        \caption{\envSinglePuddle} 
        \label{fig:app:exp_decay_results_conv.2}
    }
    \end{subfigure}
    % %%%%%%%%%%%%%%%%% 
     %%%%%%%%%%%%%%%%%
    \begin{subfigure}[b]{.32\textwidth}
    \centering
     {
        \includegraphics[height=3.00cm]{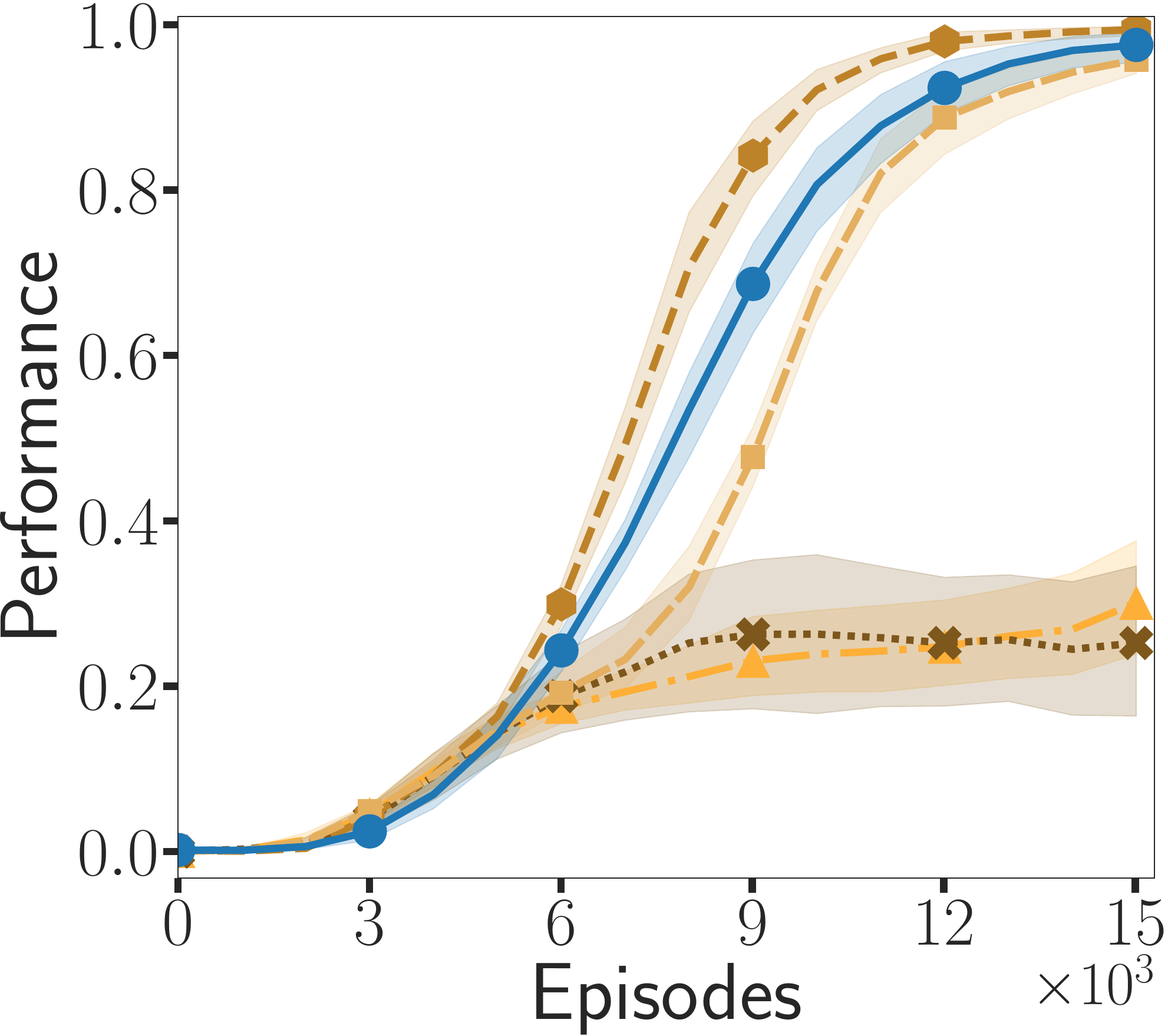}
        \caption{\envMultiPuddle} 
        \label{fig:app:exp_decay_results_conv.3}
    }
    \end{subfigure}
	\caption{\looseness-1Performance comparison of agents trained with variants of \algExp{}, measured by mean return (with $95\%$ confidence intervals for $10$ random runs) under test-time constraints.} %For Fig.\ref{fig:fig:results_conv.5} only one seed is performed for \algExp{}.}
     
     %}
    \label{fig:app:exp_decay_comp}
    %\vspace{2mm}
    %\vspace{-5mm}
\end{figure*}
%%%%%%%%%%%%%%%%%%%%%%%%%%%%%%%%%%%%%

\textbf{\algGlobal{}.} This is a variant of \algCurriculum{} that maintains a single global value of $\alpha$ shared across all tasks, rather than assigning an individual training budget $\alpha_x$ per task $x$. We refer to this method as \algGlobal{}. The procedure is identical to \algCurriculum{}, except that rollouts from all selected tasks are stored in a common global buffer. As a result, a single global training budget $\alpha$ is updated after each task. In single-task settings (as in \envTree{} and \envSinglePuddle{}), the two strategies are equivalent. The adaptive \algCurriculum{} requires collecting a sufficient number of rollouts per task to enable task-specific updates of $\alpha$. In contrast, one advantage of \algGlobal{} is a shorter warm-up phase, as the global buffer is populated with rollouts from multiple tasks, allowing the threshold $\alpha$ to begin adapting earlier. We provide a direct comparison between \algCurriculum{} and its global variant, \algGlobal{}.

\textbf{Performance comparison with additional baselines.} In Figure~\iftoggle{MainSuppContent}{\ref{fig:app:full_comparison}}{20}, we compare the convergence behavior of LLM agents across additional baselines, including the global strategy \algGlobal{}, a softer regularization variant \algSoftRLWeak{}, and a fixed decay rate strategy \algExpfifty{}. Table~\ref{table:llm_performance} summarizes test-time performance at two training checkpoints. The strategy \algGlobal{} shows a rapid initial performance increase, as it does not require a warm-up phase for collecting rollouts per task. However, due to per-task targets in the LLM setting, its performance ultimately plateaus below that of the adaptive \algCurriculum{}. The weaker regularization in \algSoftRLWeak{} leads to slower convergence toward test-time constraints compared to \algSoftRLStrong{}, as expected. The \algExpfifty{} strategy is unstable across all settings.

% !TEX root =  main.tex
%%%%%%%%%%%%%%%%%%%%%%%%%%%%%%%%%%%%%
%%%%%%%%%%%%%%%%%%%%%%%%%%%%%%%%%%%%%
%%%%%%%%%%%%%%%%%%%%%%%%%%%%%%%%%%%%%
\begin{figure*}[t!]
\centering
    \begin{subfigure}[b]{\textwidth}
    \centering
     {
        \includegraphics[height=0.78cm]{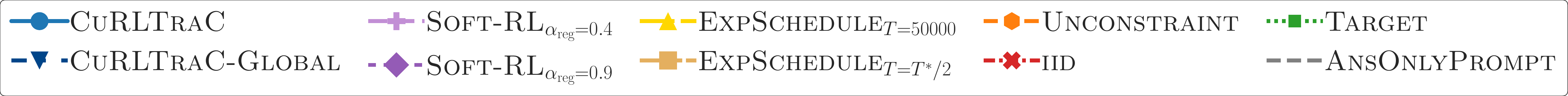}
    }
    %\vspace{-3mm}
    \end{subfigure}
    %\bigskip
    % %%%%%%%%%%%%%%%%%  
    \begin{subfigure}[b]{.49\textwidth}
    \centering
     {
        \includegraphics[height=3.20cm]{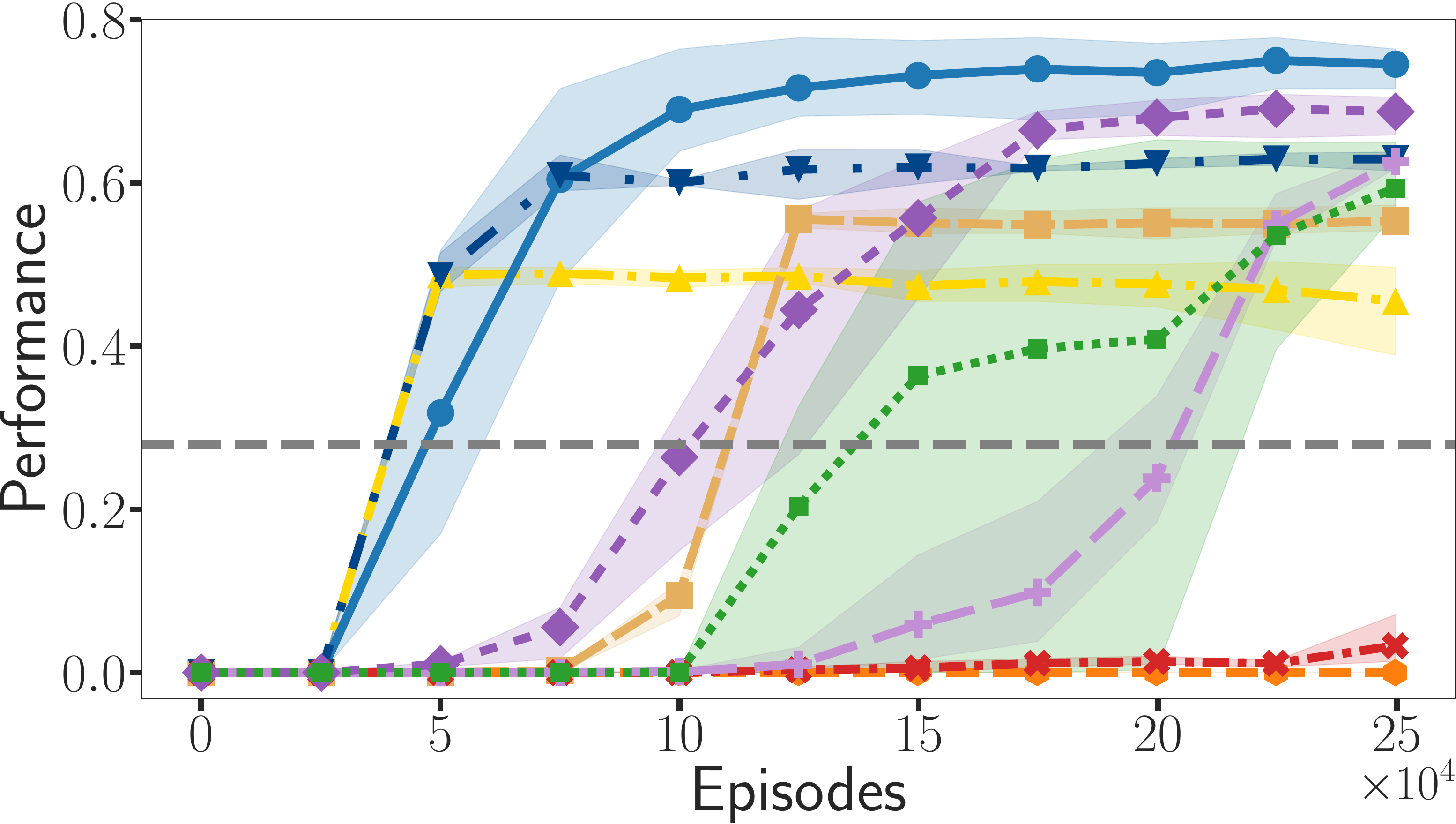}
        \caption{\textsc{SVAMP}-\textsc{Qwen}}
        \label{fig:app:full_results_llm_conv.1}
    }
    \end{subfigure}
    % %%%%%%%%%%%%%%%%%  
    \begin{subfigure}[b]{.49\textwidth}
    \centering
     {
        \includegraphics[height=3.20cm]{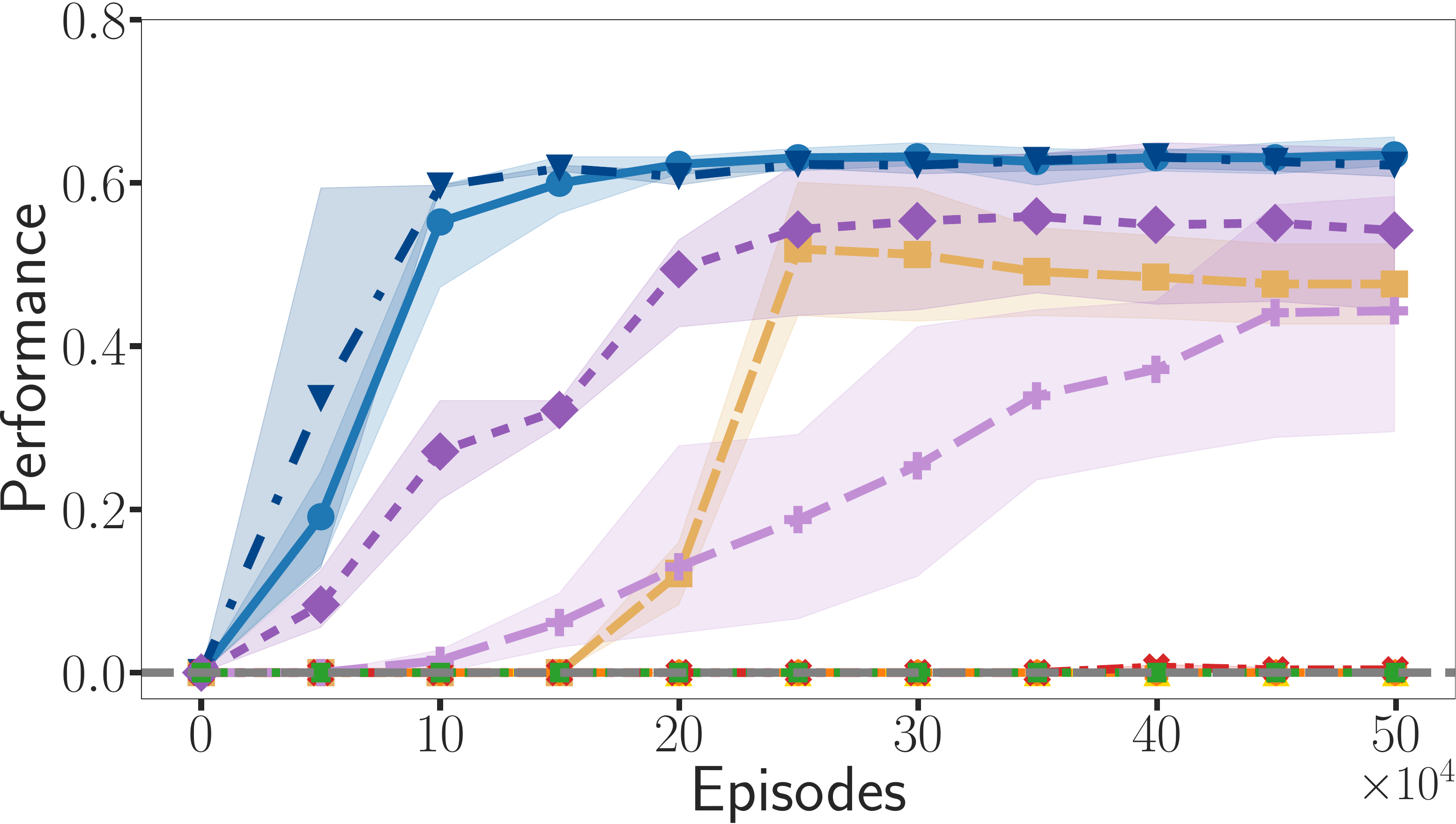}
        \caption{\textsc{SVAMP}-\textsc{MetaMath}} 
        \label{fig:app:full_results_llm_conv.2}
    }
    \end{subfigure}

    \begin{subfigure}[b]{.49\textwidth}
    \centering
     {
        \includegraphics[height=3.20cm]{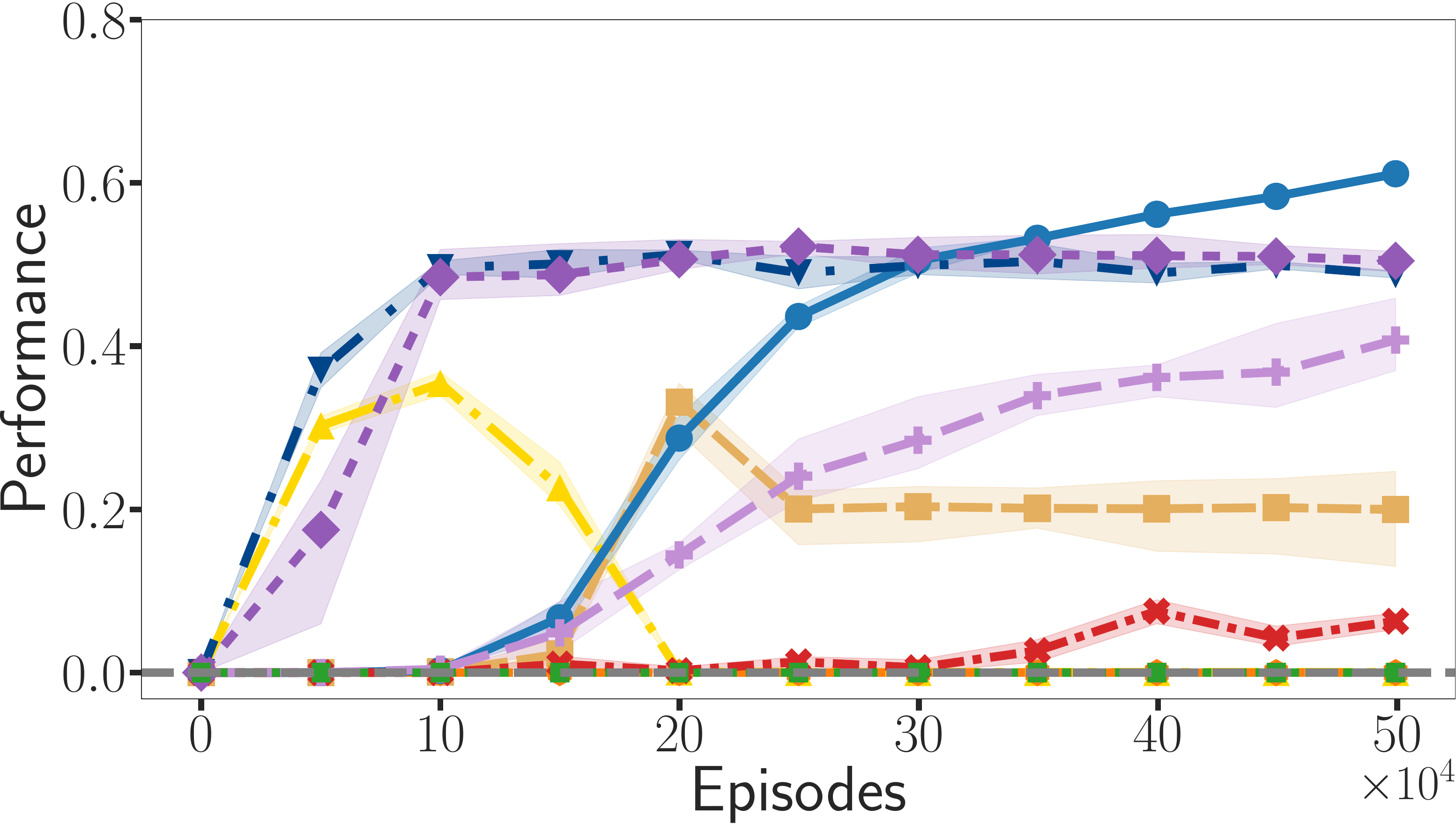}
        \caption{\textsc{GSM8K}-\textsc{Qwen}}
        \label{fig:app:full_results_llm_conv.3}
    }
    \end{subfigure}
    % %%%%%%%%%%%%%%%%%  
    \begin{subfigure}[b]{.49\textwidth}
    \centering
     {
        \includegraphics[height=3.20cm]{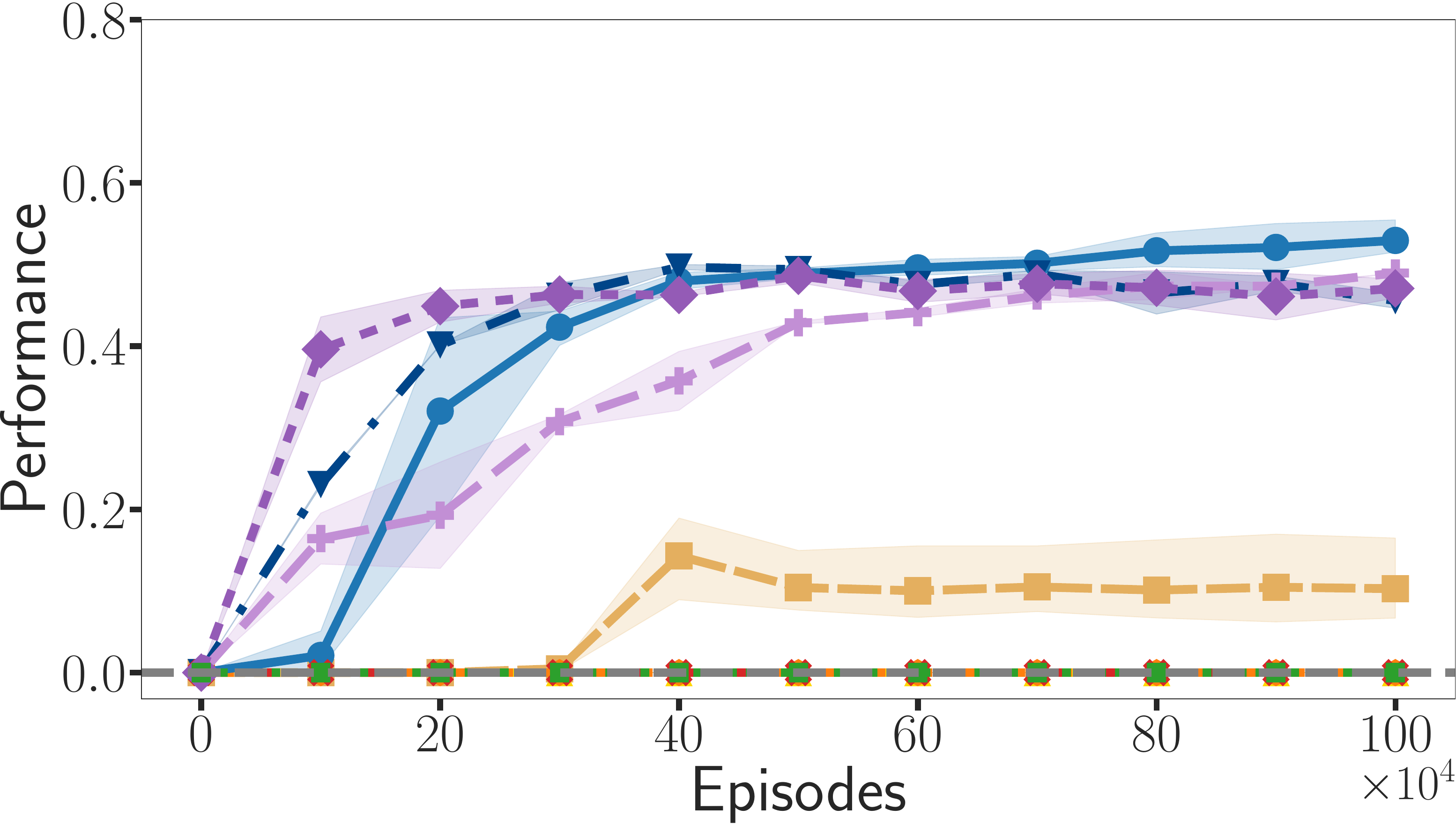}
        \caption{\textsc{GSM8K}-\textsc{MetaMath}} 
        \label{fig:app:full_results_llm_conv.4}
    }
    \end{subfigure}
    %%%%%%%%%%%%%%%%%
	\caption{Performance of LLM agents trained with different strategies, measured by the agent's mean accuracy (with $95\%$ confidence intervals over $3$ random runs), evaluated under test-time constraints.} 
    \label{fig:app:full_comparison}
\end{figure*}
%%%%%%%%%%%%%%%%%%%%%%%%%%%%%%%%%%%%%

\begin{table*}[t]

\caption{Summary of LLM performance across two mathematical benchmarks, \textsc{SVAMP} and \textsc{GSM8K}, and two large language models, \textsc{Qwen2.5-Math-1.5B} and \textsc{MetaMath-Llemma-7B}, as detailed in Figure~\ref{fig:app:full_comparison}. For each column, the highest-performing strategy is shown in bold, and the second-best is underlined.}

% \centering
% \renewcommand{\arraystretch}{1.2}
% \setlength{\tabcolsep}{6pt}
% %\rowcolors{10}{gray!10}{white}

% \begin{adjustbox}{max width=\textwidth, center}
% \begin{tabular}{l
%   c c c c  % SVAMP: QWEN(V x2) | MetaMath(V x2)
%   c c c c  % GSM8K: QWEN(V x2) | MetaMath(V x2)
% }

\centering
\renewcommand{\arraystretch}{1.1}
\setlength{\tabcolsep}{2pt}
\begin{adjustbox}{max width=\textwidth, center}
\begin{tabular}{
    >{\raggedright\arraybackslash}p{3.0cm}   % method column
    *{8}{>{\centering\arraybackslash}p{1.5cm}} % 8 equally wide numeric columns
}
\toprule

\multirow{5}{*}{\textbf{Method}}
& \multicolumn{4}{c}{\textbf{Dataset}: \textsc{SVAMP}} & \multicolumn{4}{c}{\textbf{Dataset}: \textsc{GSM8K}} \\
\cmidrule(lr){2-5}\cmidrule(lr){6-9}
& \multicolumn{2}{c}{\textbf{Model}: \textsc{Qwen}} & \multicolumn{2}{c}{\textbf{Model}: \textsc{MetaMath}}
& \multicolumn{2}{c}{\textbf{Model}: \textsc{Qwen}} & \multicolumn{2}{c}{\textbf{Model}: \textsc{MetaMath}} \\
\cmidrule(lr){2-3}\cmidrule(lr){4-5}\cmidrule(lr){6-7}\cmidrule(lr){8-9}
& \multicolumn{2}{c}{\# Episodes} & \multicolumn{2}{c}{\# Episodes}
& \multicolumn{2}{c}{\# Episodes} & \multicolumn{2}{c}{\# Episodes} \\
\cmidrule(lr){2-3}\cmidrule(lr){4-5}\cmidrule(lr){6-7}\cmidrule(lr){8-9}
& 100K & 250K & 250K & 500K & 250K & 500K & 500K & 1M \\
\cmidrule(lr){2-9}
& \multicolumn{8}{c}{\textbf{Test-time Performance}} \\
\midrule
\small \algCurriculum{} & \textbf{0.69}  &  \textbf{0.74} & \textbf{0.63} & \textbf{0.63} & 0.43 & \textbf{0.60} & \textbf{0.49} & \textbf{0.53} \\
\small \algGlobal{}    &  \underline {0.60} & 0.63 & \underline{0.62} & \underline{0.62} & \underline{0.49} & 0.48 & \textbf{0.49} & 0.46 \\
\midrule
\small \algSoftRLWeak{}  & 0.00 & 0.63 & 0.19 & 0.44 & 0.24 & 0.41 & \underline{0.43} & \underline{0.49} \\
\small \algSoftRLStrong{}  & 0.26 & \underline {0.69} & 0.54 & 0.54 & \textbf{0.52} & \underline{0.50} & \textbf{0.49} & 0.47 \\
\midrule
\small \algExpfifty          & 0.48 & 0.45 & 0.00 & 0.00 & 0.00 & 0.00 & 0.00 & 0.00 \\
\small \algExphalf           & 0.10 & 0.55 & 0.52 & 0.48 & 0.20 & 0.20 & 0.10 & 0.10 \\
\midrule
\small \algRandom            & 0.00 & 0.03 & 0.00 & 0.00 & 0.01 & 0.06 & 0.00 & 0.00 \\
\small \algNoConstraint      & 0.00 & 0.00 & 0.00 & 0.00 & 0.00 & 0.00 & 0.00 & 0.00 \\
\small \algTarget            & 0.00 & 0.59 & 0.00 & 0.00 & 0.00 & 0.00 & 0.00 & 0.00 \\
\midrule
% \small \textsc{Only Prompting}    & 0.28 & 0.28 & 0.00 & 0.00 & 0.00 & 0.00 & 0.00 & 0.00 \\
\small \OnlyAnsPrompt{}   
& 0.28 & 0.28 & 0.00 & 0.00 & 0.00 & 0.00 & 0.00 & 0.00 \\
\bottomrule
\end{tabular}
\end{adjustbox}
\label{table:llm_performance}
\end{table*}
}
 
}
{
}
%%%%%%%%%%%%%%%%%%%%%%%%%%%%%%%%%%%
%%%%%%%%%%%%%%%%%%%%%%%%%%%%%%%%%%%

\end{document}